\theoremstyle{plain}
\newtheorem{theorem}{Theorem}[section]
\newtheorem{proposition}[theorem]{Proposition}
\newtheorem{lemma}[theorem]{Lemma}
\newtheorem{corollary}[theorem]{Corollary}
\theoremstyle{definition}
\newtheorem{assumption}[theorem]{Assumption}
\theoremstyle{remark}
\begin{document}

% If your paper is accepted and the title of your paper is very long,
% the style will print as headings an error message. Use the following
% command to supply a shorter title of your paper so that it can be
% used as headings.
%
%\runningtitle{I use this title instead because the last one was very long}

% If your paper is accepted and the number of authors is large, the
% style will print as headings an error message. Use the following
% command to supply a shorter version of the authors names so that
% they can be used as headings (for example, use only the surnames)
%
%\runningauthor{Surname 1, Surname 2, Surname 3, ...., Surname n}

\twocolumn[

\aistatstitle{High Dimensional Bayesian Optimization using Lasso Variable Selection}

\aistatsauthor{ Vu Viet Hoang$^1$ \textsuperscript{*} \And Hung The Tran$^2$ \textsuperscript{*} \And  Sunil Gupta$^3$ \And Vu Nguyen$^4$ }
\aistatsaddress{
$^1$FPT Software AI Center\\
$^2$VNPT Media, Vietnam\\
$^3$Applied AI Institute, Deakin University\\
$^4$Amazon, Australia}
]

%\renewcommand{\thefootnote}{\fnsymbol{footnote}}
%\renewcommand{\thefootnote}{\arabic{footnote}}
%\footnotetext[0]{These authors contributed equally.}

\let\thefootnote\relax\footnotetext{*These authors contributed equally to this work.}
\begin{abstract}
Bayesian optimization (BO) is a leading method for optimizing expensive black-box optimization and has been successfully applied across various scenarios. However, BO suffers from the curse of dimensionality, making it challenging to scale to high-dimensional problems. Existing work has adopted a variable selection strategy to select and optimize only a subset of variables iteratively. Although this approach can mitigate the high-dimensional challenge in BO, it still leads to sample inefficiency. To address this issue, we introduce a novel method that identifies important variables by estimating the length scales of Gaussian process kernels. Next, we construct an effective search region consisting of multiple subspaces and optimize the acquisition function within this region, focusing on only the important variables. We demonstrate that our proposed method achieves cumulative regret with a sublinear growth rate in the worst case while maintaining computational efficiency. Experiments on high-dimensional synthetic functions and real-world problems show that our method achieves state-of-the-art performance. 
\end{abstract}

\section{Introduction}
\label{sec:intro}
Black-box optimization is crucial in various fields of science, engineering, and beyond. It is widely used for optimizing tasks such as calibrating intricate experimental systems and adjusting hyperparameters of machine learning models. The demand for optimization methods that are both scalable and efficient is widespread. Bayesian optimization (BO) \citep{binois2022survey,garnett2023bayesian,husain2023distributionally} is an algorithm known for its sample efficiency in solving such problems. It works by iteratively fitting a surrogate model, usually a Gaussian process (GP), and maximizing an acquisition function to determine the next evaluation point. Bayesian optimization algorithms \cite{ament2023unexpected} have proven particularly successful in a wide variety of domains including hyperparameter tuning \citep{bergstra2011algorithms,klein2017fast,hvarfner2022pi}, reinforcement learning \citep{marco2017virtual,parker2022automated}, neural architecture search \citep{kandasamy2015high,nguyen2021optimal}, robotics \citep{lizotte2007automatic}.

Although successfully applied in many different fields, Bayesian optimization is often limited to low-dimensional problems, typically at most twenty dimensions \citep{snoek2012practical,frazier2018tutorial,nguyen2020knowing}. Scaling BO to high-dimensional problems has received a lot of interest. The key challenge is that the computational cost of the BO increases exponentially with the number of dimensions, making it difficult to find an optimal solution in a reasonable amount of time. As a result, developing more efficient and scalable BO algorithms for high-dimensional problems is an active area of research \citep{rana2017high}, \cite{letham2020re}, \cite{NEURIPS2020_bb073f28}, \cite{wan2021think}. The decomposition-based methods \cite{han2021high, hoang2018decentralized, kandasamy2015high, mutny2018efficient, rolland2018high,lu2022additive}, assume that the high-dimensional function to be optimized has a certain structure, commonly characterized by additivity. By decomposing the original high-dimensional function into the sum of several low-dimensional functions, they optimize each low-dimensional function to obtain the point in the high-dimensional space. However, it is not easy to decide whether a decomposition exists to learn the decomposition. Other methods often assume that the original high-dimensional function with dimension D has a low-dimensional subspace with dimension $d \ll D$, and then perform the optimization in the low-dimensional subspace and project the low-dimensional point back for evaluation. For example, embedding-based methods \citep{letham2020re, nayebi2019framework, wang2016bayesian,Nguyen_Tran} use a random matrix to embed the original space into the low-dimensional subspace.
Recently, several methods \citep{xu2024standard,hvarfner2024vanilla} have been proposed to place priors on the hyperparameters of GP models. SAASBO \citep{eriksson2021high} uses sparsity-inducing before performing variable selection implicitly, making the coefficients of unimportant variables near zero and thus restraining over-exploration on these variables. Moreover, SAASBO still optimizes the high-dimensional acquisition function, and also due to its high computational cost of inference, it is very time consuming \citep{shen2021computationally, santoni2024comparison}.

Another approach involves directly selecting a subset of variables, which can help avoid the time-consuming matrix operations required by embedding-based methods. In \cite{li2018high}, $d$ variables are randomly selected in each iteration. It is important to note that for both the embedding and variable selection methods, the parameter $d$ can significantly affect performance, but it can be challenging to determine the ideal value for real-world applications. Recently, \cite{song2022monte} propose a variable selection method MCTS\_VS based on Monte Carlo tree search (MCTS), to automatically select a subset of important variables and perform BO only on these variables. However, MCTS\_VS does not demonstrate how changes in variables impact the function's value. In our experiment, we designed the functions with a few important variables. We see that MCTS\_VS is insufficient for identifying these variables. \cite{song2022monte} also provided a regret analysis for their method, however, the regret bound grows linearly in the number of samples, which is an unexpected property of BO.

%\textcolor{blue}{Vu: highlight the drawbacks of existing high dimensional BO approaches}

In this paper, we propose a novel variable selection method for high-dimensional BO. Our main idea is to iteratively categorize all variables into important and unimportant groups and then construct an effective search space where the optimization at the acquisition step is performed only on important variables. Our main contributions are:
\begin{itemize}
    \item  A provably efficient method to upper bound the derivatives in terms of inverse length scales of GP kernels. To our knowledge, this result has not been provided in the literature, despite its simplicity.
    
    \item A design for an effective subspace in high dimensional BO where optimizing the acquisition function substantially reduces computational time.% compared to the entire original space.  %Given the estimated variable importance
    \item An upper bound on the cumulative regret in terms of length scales that has a sublinear growth rate. To our knowledge, we are the first to characterize a sublinear regret bound for high dimensional BO using variable selection approach.
    \item Extensive experiments comparing our algorithm with various high-dimensional BO methods.% We show that given the same computational budgets, our method exhibits superior sample efficiency compared to the existing approaches.
    %methods using specialized synthetic functions characterized by an effective low-dimensional subspace. The results demonstrate that our method outperforms due to the low-dimensional structure of the function. Additionally, we evaluate these methods using popular real-world problems. Our analysis demonstrates that given the same computational budgets, our method exhibits superior sample efficiency compared to the existing approaches.
\end{itemize}

\section{Bayesian optimization with GP}
We address the following global optimization problem:
$$\mathbf{x}^{\star} \in \underset{\mathbf{x} \in [0,1]^D}{\arg \max } f(\mathbf{x}).$$
where $f: [0,1]^D \rightarrow \mathbb{R}$ is a costly black-box function that neither has a known closed-form expression, nor accessible derivatives and $D$ large. We assume that all variables in $\mathbf{x}$ can be divided into \textit{important variables}, which are variables that have a significant impact on $f$, and \textit{unimportant variables} that have little impact.

Bayesian Optimization (BO) provides a principal framework for solving the global optimization problem. The standard BO routine consists of two key steps: 1) Estimating the black-box function based on a handful number of observations.
2) Maximizing an acquisition function to select the next evaluation point by balancing exploration and exploitation. For the first step, Gaussian processes (GP) are a popular choice due to their tractability for posterior and predictive distributions. The GP model is defined as \( f(\mathbf{x}) = \mathcal{GP}(m(\mathbf{x}), k(\mathbf{x}, \mathbf{x}')) \), where \( m(\mathbf{x}) \) and \( k(\mathbf{x}, \mathbf{x}') \) represent the mean and covariance (or kernel) functions, respectively. Commonly used covariance functions include squared exponential (SE) kernel, and Mate\'rn kernel defined as
\begin{align}
k_{SE}^\psi(\mathbf{x}, \mathbf{x}')=\sigma_{k}^2 \exp \left\{-\frac{1}{2} \sum \rho_i\left(x_i-x'_i\right)^2\right\}
\label{kernel1}
\end{align}
\begin{align}
%\begin{aligned}
    k_{Matern}^\psi(\mathbf{x}, \mathbf{x}') &= \sigma^2_{k} \frac{2^{1-\nu}}{\Gamma(\nu)}\left(\sqrt{\sum \rho_i\left(x_i-x'_i\right)^2} \sqrt{2 \nu}\right)^\nu \nonumber \\
    &B_\nu\left( \sqrt{\sum \rho_i\left(x_i-x'_i\right)^2} \sqrt{2 \nu}\right)
\label{kernel2}
%\end{aligned}
\end{align}
where $\rho_i$, $\forall i=1, \ldots, D$ are the inverse squared lengthscales (IDL). We use $\psi$ to collectively denote all the hyperparameters, i.e., $\psi=\left\{\rho_{1: D}, \sigma_k^2\right\}$. For a regression problem $f: [0,1]^D \rightarrow \mathbb{R}$, the joint density of a GP takes the form. %The predictive mean and variance of a Gaussian process are Gaussian distributions.
Given a set of observations \( \mathbb{D}_t = \{(\mathbf{x}^i, y^i)\}_{i=1}^t \), the predictive distributions follow Gaussian distribution as:
\[ P(f_{t+1} \mid \mathbb{D}_t, \mathbf{x}) = \mathcal{N}(\mu_{t+1}(\mathbf{x}), \sigma_{t+1}^2(\mathbf{x})) \]
where
\[ \mu_{t+1}(\mathbf{x}) = \mathbf{k}^T [\mathbf{K} + \sigma^2 \mathbf{I}]^{-1} \mathbf{y} + m(\mathbf{x}) \]
\[ \sigma_{t+1}^2(\mathbf{x}) = k(\mathbf{x}, \mathbf{x}) - \mathbf{k}^T [\mathbf{K} + \sigma^2 \mathbf{I}]^{-1} \mathbf{k} \]
where $\mathbf{k}=\left[k\left(\mathbf{x}, \mathbf{x}_1\right), \ldots, k\left(\mathbf{x}, \mathbf{x}_t\right)\right]^T$, $\mathbf{K}=\left[k\left(\mathbf{x}_i, \mathbf{x}_j\right)\right]_{1 \leq i, j \leq t}$, $\mathbf{y}=\left[y_1, \ldots, y_t\right]^T$. 
%These components allow for the derivation of the predictive mean and variance, which are crucial for guiding the next steps in the Bayesian Optimization process. 

For the next step, the acquisition functions are designed to trade-off between exploration of the search space and exploitation of the current promising region. Popular acquisition functions include Expected Improvement \citep{movckus1975bayesian,tran-the2}, \citep{tran-the1} and GP-UCB \citep{Srinivas2009InformationTheoreticRB}. A GP-UCB is defined  at iteration $t+1$  as
$$
a_{t+1}(\mathbf{x})=\mu_t(\mathbf{x})+\sqrt{\beta_{t+1}} \sigma_t(\mathbf{x})
$$
where $\beta_{t+1}$ is a parameter to balance exploration and exploitation. %There are guidelines \cite{Srinivas2009InformationTheoreticRB} for setting $\beta_{t+1}$ to achieve sublinear regret.

\section{The Proposed Algorithm LassoBO}
Due to space limitations, a review of related work is provided in Section 7 of the appendix. In this section, we introduce a novel variable selection algorithm for high-dimensional BO. Our main idea is to iteratively categorize all variables into important and unimportant groups and then construct an effective search space where the optimization at the acquisition step is performed only on important variables. Compared to existing variable selection algorithms, our novelty lies in (1) selecting important variables based on the inverse length scale and in (2) constructing an effective search region using multiple subspaces. We theoretically show that using these strategies will achieve an improved regret bound. We summarize in Algorithm \ref{alg_LassoBO} and present all the details below.
\begin{algorithm}[tb]
\caption{LassoBO} \label{alg_LassoBO}
\begin{algorithmic}[1]
\STATE \textbf{Input}: An initial dataset $\mathbb{D}_0$; the number of subspaces in iteration $t$: $M_t$
\STATE The best sampled point $\mathbf{x}^{0,+} = \arg \max_{\mathbf{x} \in \mathbb{D}_0} f(\mathbf{x})$
\FOR{$t = 1, 2 \dots T$}
    \STATE Fit a GP using $\mathbb{D}_{t-1}$ on all $D$ dimensions.
    \STATE Optimizing Eq. (\ref{eq:optimizing_Ut}) to get important variables  $\textcolor{blue}{I_t} \subset [D]$ and unimportant variables $\textcolor{orange}{[D]\setminus I_t}$
    \STATE Sample uniformly at random $M_t$ unimportant components $\mathcal{Z}_t = \{\mathbf{z}^i_{\textcolor{orange}{[D]\setminus I_t}} \in [0,1]^{D-|I_t|} \}_{i \in [M_t]}$ 
    \STATE Construct a search space $\mathcal{X}_t = [ \textcolor{blue}{\mathcal{X}_{I_t}}, \textcolor{orange}{\mathcal{X}_{[D]\setminus I_t}}]$ where $\textcolor{blue}{\mathcal{X}_{I_t}}=[0,1]^{|I_t|},\quad \textcolor{orange}{\mathcal{X}_{[D]\setminus I_t}} = \mathcal{Z}_t \cup \{\mathbf{x}^{t-1,+}_{[D]\setminus I_t}\}$%, a set of subspaces 
    \STATE Select  $\mathbf{x}^t = \text{argmax}_{ \mathbf{x} \in  \mathcal{X}_t } a_t(\mathbf{x}) $
    \STATE Evaluate $y^t = f(\mathbf{x}^t)$ % + \epsilon_t$
    \STATE Update dataset $\mathbb{D}_t = \mathbb{D}_{t-1} \cup \{\mathbf{x}^t,y^t\}$
    \STATE Update the best point $\mathbf{x}^{t,+} = \arg \max_{\mathbf{x} \in \mathbb{D}_t} f(\mathbf{x}) $
    \ENDFOR
\end{algorithmic}
\end{algorithm}

\begin{figure*}
\vspace{-1pt}
    \centering
    \includegraphics[width = 1\textwidth]{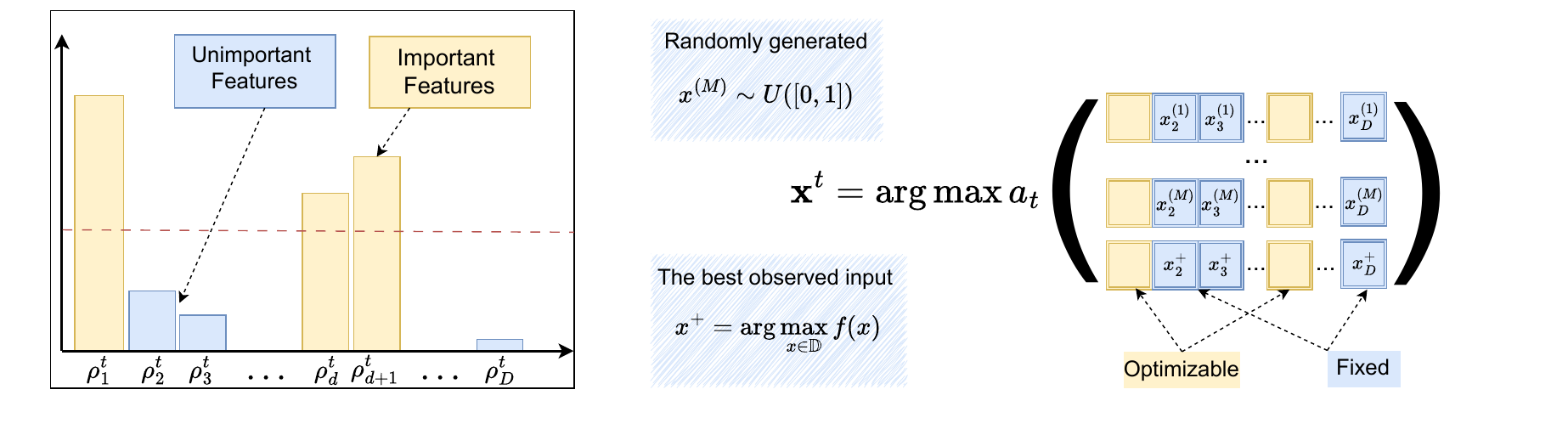}
    \vspace{-15pt}
    \caption{A overview of the LassoBO. \textbf{1)} Estimating the importance of dimensions by finding the sparse estimate of \(\rho\) to classify the dimensions into two categories: ``important" and ``unimportant" dimensions. \textbf{2)} The acquisition optimization will be performed on the important spaces (yellow) while the random (explore) and imputation from best found input (exploit) will be used to generate candidates for the unimportant space (blue).}
    %Using BO on the effective subspace, which is constructed based on the previous variable classification. }
    \label{fig:overview}
    \vspace{-8pt}
\end{figure*}

%\paragraph{Notations.} Let $[D]]$ denote the indexes of all variables $x$, and $x_{I}$ denote the subset of variables indexed by $I \in [D]$. 

\subsection{Variable Importance}
Our method is based on the natural assumption that some variables are more ``important" than others. A variable is important if it has a great impact on the output value of $f$ and vice versa. Our following theorem establishes  a strong link between the impact of each dimension $i$ toward $f$ via the length scale variable $\rho_i$.
\begin{theorem}
\label{var_imp}
Let $f$ be a sample from GP with SE kernel or Mate\'rn kernel. Given $L > 0$, for any $\mathbf{x}\in \left[0,1\right]^D$, we have that
\begin{itemize}
    \item for SE kernel, $
P\left(\left|\frac{\partial f}{\partial x_i}\right| \leq L \sqrt{\rho_i}\right) \geq  1 - e^{-\frac{L^2}{2 \sigma_k^2}}
$
\item for Mate\'rn kernel ($\nu = p + \frac{1}{2} > 1$), $
P\left(\left|\frac{\partial f}{\partial x_i}\right| \leq L \sqrt{\rho_i}\right) \geq  1 - e^{-\frac{L^2}{2 \sigma_k^2} \frac{2p-1}{2p+1}}
$
\end{itemize} 
\end{theorem}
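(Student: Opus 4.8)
The plan is to use the fact that a Gaussian process is closed under differentiation: since differentiation is linear, for a fixed $\mathbf{x}$ the quantity $\partial f/\partial x_i$ is again a Gaussian random variable, so once I know its mean and variance the statement collapses to a one‑dimensional Gaussian tail estimate. Concretely, for the SE kernel (which is analytic) and for the Mat\'ern kernel with $\nu=p+\tfrac12>1$, the kernel is $C^2$ in a neighbourhood of the diagonal, which guarantees that $f$ admits a mean‑square (and, for these kernels, sample‑path) differentiable version; then $\partial f/\partial x_i(\mathbf{x})$ is an $L^2$‑limit of Gaussian difference quotients, hence $\mathcal{N}\!\big(\partial m/\partial x_i(\mathbf{x}),\,v_i\big)$ with
\[
v_i=\Bigl[\frac{\partial^2 k(\mathbf{x},\mathbf{x}')}{\partial x_i\,\partial x_i'}\Bigr]_{\mathbf{x}'=\mathbf{x}}.
\]
Assuming, as is standard, a constant prior mean, $\partial m/\partial x_i\equiv 0$, so the whole task reduces to computing $v_i$; and since both kernels are stationary, $v_i$ does not depend on $\mathbf{x}$, which is exactly why the conclusion holds uniformly over $[0,1]^D$.

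For the variance I would write each kernel as $k(\mathbf{x},\mathbf{x}')=\sigma_k^2\,\varphi(u)$ with $u=\sum_j\rho_j(x_j-x_j')^2$ and apply the chain rule. Using $\partial u/\partial x_i=2\rho_i(x_i-x_i')$ and $\partial u/\partial x_i'=-2\rho_i(x_i-x_i')$,
\[
\frac{\partial^2 k}{\partial x_i\,\partial x_i'}=\sigma_k^2\Bigl(-4\rho_i^2(x_i-x_i')^2\,\varphi''(u)-2\rho_i\,\varphi'(u)\Bigr),
\]
and letting $\mathbf{x}'\to\mathbf{x}$ the first term drops out (one has $(x_i-x_i')^2\le u/\rho_i$, while $\varphi''(u)$ grows no faster than $u^{-1/2}$ in the least smooth admissible case $\nu=\tfrac32$ and is bounded otherwise), so $v_i=-2\sigma_k^2\rho_i\,\varphi'(0^+)$. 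For the SE kernel $\varphi(u)=e^{-u/2}$, hence $\varphi'(0)=-\tfrac12$ and $v_i=\sigma_k^2\rho_i$. For the Mat\'ern kernel $\varphi(u)=\frac{2^{1-\nu}}{\Gamma(\nu)}\big(\sqrt{2\nu u}\big)^\nu B_\nu\big(\sqrt{2\nu u}\big)$, and the small‑argument expansion of $z^\nu B_\nu(z)$ (or the explicit half‑integer closed form) gives $\varphi(u)=1-\tfrac{\nu}{2(\nu-1)}u+o(u)$ as $u\to0^+$, so $\varphi'(0^+)=-\tfrac{\nu}{2(\nu-1)}=-\tfrac{2p+1}{2(2p-1)}$ and $v_i=\sigma_k^2\rho_i\,\tfrac{\nu}{\nu-1}=\sigma_k^2\rho_i\,\tfrac{2p+1}{2p-1}$.

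The rest is immediate. Setting $Z:=(\partial f/\partial x_i)/\sqrt{v_i}\sim\mathcal{N}(0,1)$, the event $\{|\partial f/\partial x_i|\le L\sqrt{\rho_i}\}$ coincides with $\{|Z|\le L\sqrt{\rho_i/v_i}\}$, where $\sqrt{\rho_i/v_i}=1/\sigma_k$ for the SE kernel and $\tfrac{1}{\sigma_k}\sqrt{\tfrac{2p-1}{2p+1}}$ for the Mat\'ern kernel. I would then invoke the standard normal tail bound $P(|Z|>a)\le e^{-a^2/2}$, valid for all $a\ge0$ (equivalently $1-\Phi(a)\le\tfrac12 e^{-a^2/2}$, which follows from a short monotonicity argument using the Mills‑ratio inequality $1-\Phi(a)<\phi(a)/a$), and substitute $a=L/\sigma_k$, respectively $a=\tfrac{L}{\sigma_k}\sqrt{\tfrac{2p-1}{2p+1}}$, to obtain the two claimed lower bounds.

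The main obstacle I anticipate is the Mat\'ern variance computation, on two counts: first, justifying differentiability of $f$ at all, which is precisely why $\nu>1$ is assumed; and second, correctly extracting the linear coefficient of $\varphi$ at the origin (equivalently $M_\nu''(0)=-\nu/(\nu-1)$), since $\varphi$ as a function of $u$ is genuinely non‑smooth at $0$ — it carries a $u^{3/2}$‑type term when $\nu=\tfrac32$ — so one cannot simply ``plug in $u=0$'' into $\varphi''$ and must instead take the diagonal limit $\mathbf{x}'\to\mathbf{x}$ as above (or compute $-[\partial^2_{x_i x_i'}k]$ directly and check the offending term vanishes). Everything else — closure of GPs under differentiation, the chain rule, and the Gaussian tail inequality — is routine.
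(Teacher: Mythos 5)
Your proposal is correct and follows essentially the same route as the paper: both arguments reduce the claim to showing that $\partial f/\partial x_i$ is a centered Gaussian with variance $\sigma_k^2\rho_i$ for the SE kernel and $\sigma_k^2\rho_i\tfrac{2p+1}{2p-1}$ for the Mat\'ern kernel, and then conclude with the standard tail bound $P(|Z|>a)\le e^{-a^2/2}$. The only cosmetic difference is that the paper obtains this variance as the limit of the Gaussian difference quotient $\operatorname{Var}\bigl(f(\mathbf{x}')-f(\mathbf{x})\bigr)/\epsilon^2$ using a small-$\epsilon$ expansion of the kernel, whereas you compute the mixed second partial $\bigl[\partial^2 k/\partial x_i\,\partial x_i'\bigr]$ on the diagonal — two equivalent bookkeeping choices for the same quantity.
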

A detailed proof of Theorem \ref{var_imp} is provided in the Appendix \ref{proofB}. In our Theorem, a larger $\rho_i$ indicates that the input dimension $i$ will influence the function $f$ more, i.e., $\left|\frac{\partial f}{\partial x_i}\right|$ fluctuates more. In contrast, the smaller $\rho_i$, the less the impact of $x_i$ on $f$. This suggests that we can measure the variable importance on the basis of the inverse length scale $\rho$. In practice, the true inverse length scales are unknown. However, we can estimate the true inverse length scales by maximizing the logarithmic marginal likelihood given the data \citep{williams2006gaussian}.
 
\paragraph{Estimating Inverse Length Scale via Lasso Maximum Likelihood.}
The negative GP log marginal likelihood w.r.t. the hyperparameter $\psi$ is written as:
\begin{align*}
&-\log p_\psi( \mathbf{y} \mid \mathbf{X})  = -\log \mathcal{N}\left(\mathbf{y} \mid \mathbf{0}, \mathbf{K} + \sigma^2 \mathbf{I}_N \right)  \nonumber \\
     &= -\frac{1}{2} \mathbf{y}^T\left(\mathbf{K}+\sigma^2 \mathbf{I}_N\right)^{-1} \mathbf{y}-\frac{1}{2} \log \left|\mathbf{K}+\sigma^2 \mathbf{I}_N\right| + \text{const}.
\end{align*}
However, obtaining an accurate estimate requires a large number of observations relative to the function's dimensionality \citep{ha2023provably}, making it challenging for high-dimensional Bayesian optimization problems. We utilize a Lasso method to estimate $\{\rho_i\}_{i \in [D]}$ when dealing with intrinsically low-effective dimensional functions in a high-dimensional space.
The general idea in Lasso \citep{tibshirani1996regression} is to utilize the $\text{L}_1$ regularization as a penalty term that encourages the number of active dimensions to be small. In particular, regularization of $\text{L}_1$ will force the less important parameters to be zero so that the parameter vector is sparse \citep{zheng2003statistical,tibshirani1996regression}. Despite being simple, Lasso has been successful in various feature selection tasks \citep{massias2018celer, coad2020catching}.  In our proposed method, we can derive the lasso estimate as the Bayes posterior mode by introducing the function prior in detail as follows:
\begin{minipage}{0.49\textwidth}
\begin{align}
& \sigma_{k}^2 \sim \mathcal{U} \left(0,10^2\right) \\
& \mathbf{f} \sim \mathcal{N}\left(\mathbf{0}, \mathbf{K}\right) \text { with } \psi=\left\{\rho_{1: d}, \sigma_k^2\right\} \\
% \end{align}
% \end{minipage}
% \begin{minipage}{0.49\textwidth}
% \begin{align}
& \rho_i \sim \mathcal{EXP}(\lambda) \quad \text { for } i \in [D] \label{eq_EXP_dist}\\
& \mathbf{y} \sim \mathcal{N}\left(\mathbf{f}, \sigma^2 \right)
\end{align}
\end{minipage}
where $\mathcal{U}(0, 10^2)$ denotes the continuous uniform distribution, $\mathcal{EXP}(\lambda)$ is the exponential distribution, i.e., $p(\rho_i \mid \lambda) = \lambda \exp{\{-\lambda \rho_i\}}, \quad \forall \rho_i >0$.

Given the connection from the variable important to $\rho$ in Theorem \ref{var_imp}, we learn the maximum posterior estimate of $\rho$. This is achieved by adding the negative log marginal likelihood to the sum of the absolute values of $\rho$ involving $\text{L}_1$ regularization, as shown below
\begin{align}
U_t(\psi)  = & -\log \mathcal{N}\left(\mathbf{y} \mid \mathbf{0}, \mathbf{K}+\sigma^2 \mathbf{I}_N\right)+\lambda \sum_{i \in [D]} \left|\rho_i\right| \nonumber \\
 = & -\frac{1}{2} \mathbf{y}^T\left(\mathbf{K}+\sigma^2 \mathbf{I}_N\right)^{-1} \mathbf{y}-\frac{1}{2} \log \left|\mathbf{K}+\sigma^2 \mathbf{I}_N\right| \nonumber \\
&+\lambda \sum_{i \in [D]} \left|\rho_i\right|+\text { const } \label{eq:optimizing_Ut}
\end{align}
where the regularization $\lambda$ is the rate parameter  $\mathcal{EXP}(\lambda)$ in Eq. (\ref{eq_EXP_dist}) and $\left|\mathbf{K}+\sigma^2 \mathbf{I}_N\right|$ is the determinant of the matrix. $L_1$ regularization is effective in cases where the function $f$ has few important variables because it promotes the sparsity of $\rho$, reflecting the impact of variables on the function.

We optimize Eq. (\ref{eq:optimizing_Ut}) using gradient-based optimization in which the derivatives are given below:
$$
\begin{aligned}
\frac{\partial U_t}{\partial \rho_j}  & =\frac{1}{2} \mathbf{y}^{\top} \mathbf{K}^{-1} \frac{\partial \mathbf{K}}{\partial \rho_j} \mathbf{K}^{-1} \mathbf{y}-\frac{1}{2} \operatorname{tr}\left(\mathbf{K}^{-1} \frac{\partial \mathbf{K}}{\partial \rho_j}\right) + \lambda \\
& =\frac{1}{2} \operatorname{tr}\left(\left(\boldsymbol{\alpha} \boldsymbol{\alpha}^{\top}-\mathbf{K}^{-1}\right) \frac{\partial \mathbf{K}}{\partial \rho_j}\right) + \lambda
\end{aligned}
$$
$$
\begin{aligned}
&\frac{\partial  U_t }{\partial \sigma_{k}}=\frac{1}{2} \mathbf{y}^{\top} \mathbf{K}^{-1} \frac{\partial \mathbf{K}}{\partial \sigma_{k}} \mathbf{K}^{-1} \mathbf{y}-\frac{1}{2} \operatorname{tr}\left(\mathbf{K}^{-1} \frac{\partial \mathbf{K}}{\partial \sigma_{k}}\right) \\
& =\frac{1}{2} \operatorname{tr}\left(\left(\boldsymbol{\alpha} \boldsymbol{\alpha}^{\top}-\mathbf{K}^{-1}\right) \frac{\partial \mathbf{K}}{\partial \sigma_{k}}\right) \text { where } \boldsymbol{\alpha}=\mathbf{K}^{-1} \mathbf{y} .
\end{aligned}
$$
\paragraph{Determining Important Variables.}
Let $\{\rho^{t}_i \}_{i\in [D]}$ be the solutions of the optimization problem in Eq. (\ref{eq:optimizing_Ut}) representing the importance of the variables $x_i$ at each iteration $t$.
% Our experiments in Fig. 3 show the performance in learning $\{\rho^{t}_i\}_{i\in [D]}$. 
 Based on $\{\rho^{t}_i \}_{i\in [D]}$, we can 
categorize variables into important and unimportant sets. There are various classification methods. A simple yet efficient approach is to consider a variable \(x_i\) as important if \(\rho_i^{t}\) is greater than the average value of $\{ \rho_i^{t}\}_{i \in [D]}$. The remaining variables are then classified as ``unimportant". This process produces the set of indices \(I_t\) representing the important variables and the number of the important variables $d_t = |I_t|$.

%Recently, \cite{song2022monte} also proposed a variable selection method based on Monte Carlo tree search (MCTS), to automatically select a subset of important variables and perform BO only on these variables. However, their method does not demonstrate how changes in variables impact the function’s value. In our experiment in Section 5.4, we see that their method is insufficient for identifying these variables.

\subsection{Variable-Important Search Space}
We construct an effective search region in which the optimization of the acquisition function is performed. The main idea is to optimize the acquisition function on only the important variables while keeping the unimportant variables fixed. By doing this, we reduce the optimization to a subspace rather than the entire high-dimensional space, which is computationally expensive. %However, it is still crucial to correctly identify the unimportant variables to achieve the best performance. 

Let denote $\mathbf{x} = \left[\mathbf{x}_{I_t}, \mathbf{x}_{[D] \setminus I_t} \right] \in [0,1]^D $, where $\mathbf{x}_{I_t}$ are important and $\mathbf{x}_{[D] \setminus I_t}$ are unimportant variables.

\paragraph{Existing methods for imputing unimportant variables.}
Existing methods \citep{li2018high, song2022monte} have considered two efficient solutions to select $\mathbf{x}_{[D] \setminus I_t}$: (1) copy the value of $\mathbf{x}_{[D] \setminus I_t}$ from the best input observed so far:
$\mathbf{x}^{t,+} = \text{argmax}_{\mathbf{x} \in \mathbb{D}_t} f(\mathbf{x}^i)$. Although the original vector of $\mathbf{x}^{t,+}$ includes both important and unimportant components, we can extract the corresponding unimportant variables denoted as $\mathbf{x}^{t,+}_{[D]\setminus I_t}$. However, this pure exploitative can get stuck in a local optimum; and (2) use a random value in the search space $\mathbf{x}^t_{[D] \setminus I_t} \sim \mathcal{U}(\left[0, 1\right] ^{D-|I_t|})$ where $D-|I_t|$. Still, this solution is fully explorative and thus slow to converge. 

\paragraph{Explorative-exploitative imputation.}

To jointly analyze the merits of the aforementioned approaches, we make use of both to generate multiple different values of the unimportant component $\mathbf{x}_{[D] \setminus I_t}$ in explorative-exploitative ways.

%Each creates a subspace such that the acquisition function optimization will be performed on these subspaces. 

Specifically, at iteration $t$, we generate $M_t +1$ different values of $\mathbf{x}_{[D] \setminus I_t}$ including (i) a vector from copying the best function value as above described, denoted by $\mathbf{x}^{t,+}_{[D]\setminus I_t}$, and (ii) $M_t$ different vectors by uniform sampling, denoted by
$\mathcal{Z} = \left\{\mathbf{z}^i_{[D] \setminus I_t} \sim \text{Uniform} [0,1]^{D-d_t} \right\}_{i \in [M_t]}$. 
% Given values of important variables $\mathbf{z}_{[D] \setminus I_t}$,  we denote $\left[*, \mathbf{z}_{[D] \setminus I_t} \right] = \left\{\mathbf{x} \in [0,1]^D \,|\, \mathbf{z}_{[D] \setminus I_t} = \mathbf{z}_{[D] \setminus I_t}\right\}$ which is a subspace with values of important variables being $\mathbf{z}_{[D] \setminus I_t}$. 

Using $M_t +1$ different values of $\mathbf{x}_{[D] \setminus I_t}$, we construct a set of subspaces $\mathcal{X}_t = [\textcolor{blue}{\mathcal{X}_{I_t}}, \textcolor{orange}{\mathcal{X}_{[D]\setminus I_t}}]$ where $\textcolor{blue}{\mathcal{X}_{I_t}}=[0,1]^{|I_t|}$ and $\textcolor{orange}{\mathcal{X}_{[D]\setminus I_t}} = \mathcal{Z}_t \cup \{\mathbf{x}^{t-1,+}_{[D]\setminus I_t}\}$.
% $\mathcal X_t = \Big\{
%     \cup_{i\in [M_t]} \left[*, z^i_{[D] \setminus I_t} \right] \cup \left[*, \mathbf{x}^{t,+}_{[D] \setminus I_t} \right] \Big\}$. 
    We use this set as a search space to maximize acquisition optimization. In particular, we optimize for the important space $\textcolor{blue}{\mathcal{X}_{I_t}}$ while keeping the unimportant space $\textcolor{orange}{\mathcal{X}_{[D]\setminus I_t}}$ fixed.   
    
    %In the search space, we have utilized the best-so-far subspace $\left[\mathcal{X}_{I_t}, \mathbf{x}^{t,+}_{[D] \setminus I_t} \right]$ to exploit the promising region while using $M_t$ random subspaces for exploration.

Importantly, we show that maximizing the acquisition function using the defined search space can result in better regret by quantifying a valid value of $M_t$. %We show this result in the next section. 
\section{Theoretical Analysis}
We generalize Theorem \ref{var_imp} to the following assumption \ref{as:1} for ease of analysis, since the constants $a,b$ may differ for each kernel choice.
\begin{assumption}
    \label{as:1} The function $f$ is a GP sample path. For some $a, b>0$, given $L>0$, the partial derivatives of $f$ satisfy that $\forall i \in[D]$,
$$
P\left(\sup _{\boldsymbol{x} \in \mathcal{X}}\left|\partial f / \partial x_i\right|<\sqrt{\rho_i} L\right) \geq 1-a e^{-(L / b)^2} .
$$
\end{assumption}
This assumption holds at least for SE
kernel and Mate\'rn kernel. Let $\boldsymbol{x}^* = \arg \max f(\boldsymbol{x})$ be an optimal solution. We derive the upper bound on the cumulative regret of the proposed algorithm $R_T=$ $\sum_{t=1}^T\left(f\left(\boldsymbol{x}^*\right)-f\left(\boldsymbol{x}^t\right)\right)$. Based on the theoretical bound, we show how the algorithm should be designed for $R_T$ to achieve sublinearity asymptotically. We summarize the main theoretical result and refer to the Appendix \ref{maintheorem} for the proofs.
\begin{theorem}
Set $\rho^* = \max_{i\in [D]} \rho_i$, $\tilde{d}_t = \max_{1\leq k \leq t} d_t$ and $\beta_t = 2\log(\frac{\pi^2t^2}{\delta}) + 2\tilde{d}_t\log(4\sqrt{\rho^*}b\tilde{d}_t\sqrt{\log(\frac{12Da}{\delta})}t^2)$ and $C = \max_{t \in [T]} 2b\sqrt{\log(\frac{2Da}{\delta})}(\Gamma(D-d_t + 1))^{\frac{1}{D-d_t} }\log(\frac{6}{\delta})$, where $\Gamma(k) = k!$. Pick $\delta \in (0,1)$, the cumulative regret of the proposed algorithm is bounded by
\begin{align}
    R_T \le & \sqrt{\beta_TC_1T\gamma_T}  + \frac{\pi^2}{6} \nonumber \\ &  + C \sqrt{\left( \sum^T_{t=1} \sum_{i\in [D ] \setminus I_t} \rho_i  \right) \left(\sum^T_{t=1}(\frac{1}{M_t})^{\frac{2}{D-d_t}}\right)} 
\end{align}
with probability $\ge 1 -\delta$, where $C_1 = 8/\log(1 + \sigma^2)$, $\gamma_T$ is the
maximum information gain about the function $f$ from any set of observations of size $T$.
\label{theo:main}
\end{theorem}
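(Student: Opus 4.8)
The plan is to run the classical GP--UCB confidence-bound argument (as in the proof of the GP--UCB sample-path regret bound \citep{Srinivas2009InformationTheoreticRB}) but over the restricted, data-dependent search space $\mathcal{X}_t$ produced by the algorithm, and to absorb the mismatch between $\mathcal{X}_t$ and the full cube into two extra error terms. Fix a round $t$ and consider the instantaneous regret $r_t=f(\boldsymbol{x}^*)-f(\boldsymbol{x}^t)$. I introduce a reference point $\bar{\boldsymbol{x}}_t=\big(\,[\boldsymbol{x}^*_{I_t}]_t,\ \boldsymbol{z}^{\star}_t\,\big)\in\mathcal{X}_t$, where $[\boldsymbol{x}^*_{I_t}]_t$ is the nearest point to $\boldsymbol{x}^*_{I_t}$ in a finite uniform grid of $\mathcal{X}_{I_t}=[0,1]^{|I_t|}$ with per-coordinate resolution $\tau_t$, and $\boldsymbol{z}^{\star}_t\in\mathcal{Z}_t$ is the imputed unimportant vector closest to $\boldsymbol{x}^*_{[D]\setminus I_t}$. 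Writing
\[
r_t\ \le\ \underbrace{f(\boldsymbol{x}^*)-f\big([\boldsymbol{x}^*_{I_t}]_t,\boldsymbol{x}^*_{[D]\setminus I_t}\big)}_{(\mathrm{A})}\ +\ \underbrace{f\big([\boldsymbol{x}^*_{I_t}]_t,\boldsymbol{x}^*_{[D]\setminus I_t}\big)-f(\bar{\boldsymbol{x}}_t)}_{(\mathrm{B})}\ +\ \underbrace{f(\bar{\boldsymbol{x}}_t)-f(\boldsymbol{x}^t)}_{(\mathrm{C})},
\]
I bound $\sum_t(\mathrm{A})$, $\sum_t(\mathrm{B})$, $\sum_t(\mathrm{C})$ separately; they will contribute $\tfrac{\pi^2}{6}$, the $C(\cdots)$ term, and $\sqrt{\beta_T C_1 T\gamma_T}$ respectively.

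Both (A) and (B) are controlled through Theorem~\ref{var_imp} / Assumption~\ref{as:1}: a union bound over the $D$ coordinates shows that, with probability at least $1-\delta/6$, $\sup_{\boldsymbol{x}}|\partial f/\partial x_i|\le\sqrt{\rho_i}\,L$ for all $i\in[D]$ with $L\asymp b\sqrt{\log(Da/\delta)}$ (this is where the $\log(\cdot\,Da/\delta)$ factors enter $\beta_t$ and $C$). Integrating along an axis-aligned path, $(\mathrm{A})\le\sum_{i\in I_t}\sqrt{\rho_i}\,L\,|x^*_i-[\boldsymbol{x}^*_{I_t}]_{t,i}|\le |I_t|\sqrt{\rho^*}\,L/\tau_t$; choosing $\tau_t$ of order $\tilde d_t\sqrt{\rho^*}\,b\sqrt{\log(12Da/\delta)}\,t^2$ makes $(\mathrm{A})\le t^{-2}$, so $\sum_t(\mathrm{A})\le\sum_t t^{-2}=\pi^2/6$. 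This choice of $\tau_t$ yields a grid of at most $\tau_t^{\tilde d_t}$ points, which is exactly what dictates the $2\tilde d_t\log\!\big(4\sqrt{\rho^*}b\tilde d_t\sqrt{\log(12Da/\delta)}\,t^2\big)$ summand of $\beta_t$ when the GP confidence bound is later union-bounded over this grid. For (B), Cauchy--Schwarz over the unimportant coordinates gives $(\mathrm{B})\le L\,\sqrt{\sum_{i\in[D]\setminus I_t}\rho_i}\ \big\|\boldsymbol{x}^*_{[D]\setminus I_t}-\boldsymbol{z}^{\star}_t\big\|_2$; a nearest-neighbour concentration bound for $M_t$ i.i.d.\ uniform points in $[0,1]^{D-d_t}$ — the probability that all of them avoid a ball (or positive-orthant simplex) of radius $r$ around the target is at most $\exp(-M_t r^{D-d_t}/(D-d_t)!)$ — then yields, uniformly over $t$ with probability $\ge 1-\delta/6$, $\big\|\boldsymbol{x}^*_{[D]\setminus I_t}-\boldsymbol{z}^{\star}_t\big\|_2\lesssim (\Gamma(D-d_t+1))^{1/(D-d_t)}\log(6/\delta)\,M_t^{-1/(D-d_t)}$, the $\Gamma$-factor being precisely this volume lower bound. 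Summing over $t$ and applying Cauchy--Schwarz in $t$ to split $\sum_t\sqrt{\sum_{i\notin I_t}\rho_i}\,M_t^{-1/(D-d_t)}$ into $\sqrt{\big(\sum_t\sum_{i\notin I_t}\rho_i\big)\big(\sum_t M_t^{-2/(D-d_t)}\big)}$, after pulling out $C=\max_t 2b\sqrt{\log(2Da/\delta)}(\Gamma(D-d_t+1))^{1/(D-d_t)}\log(6/\delta)$, produces the middle term.

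Term (C) is the usual GP--UCB step. On the $1-\delta/6$ event that $|f(\boldsymbol{x}^t)-\mu_{t-1}(\boldsymbol{x}^t)|\le\sqrt{\beta_t}\,\sigma_{t-1}(\boldsymbol{x}^t)$ for every $t$ (for fixed $t$, $f(\boldsymbol{x}^t)$ is Gaussian given the data available at the start of round $t$, so this follows from a Gaussian tail bound and a union bound over $t$, contributing the $2\log(\pi^2 t^2/\delta)$ part of $\beta_t$) and on the analogous $1-\delta/6$ event at $\bar{\boldsymbol{x}}_t$ (union-bounded over the candidate grid, absorbed by the $\tilde d_t$-term of $\beta_t$), together with the fact that $\boldsymbol{x}^t$ maximises $a_t$ over $\mathcal{X}_t\ni\bar{\boldsymbol{x}}_t$, one gets $(\mathrm{C})=f(\bar{\boldsymbol{x}}_t)-f(\boldsymbol{x}^t)\le 2\sqrt{\beta_t}\,\sigma_{t-1}(\boldsymbol{x}^t)$. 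Then $\sum_{t\le T}2\sqrt{\beta_t}\,\sigma_{t-1}(\boldsymbol{x}^t)\le\sqrt{4\beta_T T\sum_{t\le T}\sigma_{t-1}^2(\boldsymbol{x}^t)}\le\sqrt{\beta_T C_1 T\gamma_T}$ by Cauchy--Schwarz (using that $\beta_t$ is non-decreasing, which is why $\tilde d_t$ rather than $d_t$ appears) and the standard information-gain bound $\sum_{t\le T}\sigma_{t-1}^2(\boldsymbol{x}^t)\le \tfrac{2}{\log(1+\sigma^{-2})}\gamma_T$, with $C_1=8/\log(1+\sigma^2)$ collecting constants. A final union bound over the four $\delta/6$-events (derivative bound, nearest neighbour, confidence at $\boldsymbol{x}^t$, confidence at the grid) yields the claim with probability $\ge 1-\delta$.

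The main obstacle I anticipate is making (C) rigorous at $\bar{\boldsymbol{x}}_t$: since $I_t$ and $\mathcal{Z}_t$ are data-dependent and, for a GP sample path, $\boldsymbol{x}^*$ is itself random, $\bar{\boldsymbol{x}}_t$ is not measurable with respect to the data at the start of round $t$, so the pointwise Gaussian bound does not apply directly; one must union-bound the confidence inequality over the whole (random-cardinality) candidate set $\{\text{grid points}\}\times\mathcal{Z}_t$ and check that $\log M_t$ is dominated by the grid term already present in $\beta_t$ — i.e.\ that $M_t$ is not chosen too large — which is the "valid value of $M_t$" referenced after Algorithm~\ref{alg_LassoBO}. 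The secondary difficulty is the high-ambient-dimension nearest-neighbour estimate in $[0,1]^{D-d_t}$: the volume lower bound is tiny, so one must track the interplay of the $M_t^{-1/(D-d_t)}$ rate with the $(\Gamma(D-d_t+1))^{1/(D-d_t)}$ prefactor carefully to certify that the middle term is genuinely sublinear once $M_t$ grows appropriately with $t$.
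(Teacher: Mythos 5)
Your proposal is correct and follows essentially the same route as the paper: the paper also splits $r_t$ into three terms (optimum vs.\ best point on the sampled subspaces, a grid-discretization error of order $t^{-2}$ driven by the derivative bound of Assumption~\ref{as:1}, and a GP--UCB confidence step union-bounded over a grid of size $\tau_t^{d_t}$), uses the same $L^1$-ball volume argument with the $(\Gamma(D-d_t+1))^{1/(D-d_t)}M_t^{-1/(D-d_t)}$ nearest-neighbour rate for the $M_t$ uniform imputations, and the same Cauchy--Schwarz step in $t$ to obtain the product form of the middle term. The only differences are cosmetic (you reorder the decomposition and pick the geometrically nearest $\boldsymbol{z}\in\mathcal{Z}_t$ rather than the function-value maximiser, and you correctly flag a measurability subtlety that the paper also leaves implicit).
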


\begin{corollary}
\label{main_result}
Under the same conditions as in Theorem \ref{theo:main} and set $M_t = \lceil \sqrt[n]{t} \rceil$ where $n \in \mathbb{N}, n > 1$, the cumulative regret of the proposed algorithm is bounded by
\begin{align}
R_T \le & \sqrt{\beta_TC_1T\gamma_T}  + \frac{\pi^2}{6} \nonumber \\ & + C \sqrt{c_0} \sqrt{\left( \sum^T_{t=1} \sum_{i\in [D ] \setminus I_t} \rho_i \right)} T^{\frac{1}{2} - \frac{1}{n(D-1)}},
\end{align}
with probability $\ge 1 -\delta$, where $C_1 = 8/\log(1 + \sigma^2)$, $\gamma_T$ is the
maximum information gain about function $f$ from any set of observations of size $T$, $c_0 > \frac{n(D-1)}{n(D-1)-2}$.
\end{corollary}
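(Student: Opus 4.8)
The plan is to obtain Corollary~\ref{main_result} as an immediate consequence of Theorem~\ref{theo:main} by simplifying the term $\sum_{t=1}^T (1/M_t)^{2/(D-d_t)}$ for the particular schedule $M_t = \lceil \sqrt[n]{t}\rceil$. Observe first that the leading term $\sqrt{\beta_T C_1 T \gamma_T}$ and the constant $\pi^2/6$ in the bound of Theorem~\ref{theo:main} do not involve $M_t$ at all, so they carry over verbatim; all of the work is in the third summand.

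First I would use $M_t = \lceil \sqrt[n]{t}\rceil \ge t^{1/n}$ to get $(1/M_t)^{2/(D-d_t)} \le t^{-2/(n(D-d_t))}$. Since the importance classification always retains a nonempty set of coordinates, $d_t = |I_t| \ge 1$, hence $D - d_t \le D-1$ and $2/(n(D-d_t)) \ge 2/(n(D-1)) =: \alpha$; because $t \ge 1$ this yields the uniform estimate $t^{-2/(n(D-d_t))} \le t^{-\alpha}$. Next, provided $n(D-1) > 2$ --- which is implicit in the hypothesis $c_0 > n(D-1)/(n(D-1)-2)$ --- we have $\alpha \in (0,1)$, and a standard integral comparison (each $t^{-\alpha} \le \int_{t-1}^t x^{-\alpha}\,dx$ since $x\mapsto x^{-\alpha}$ is decreasing) gives $\sum_{t=1}^T t^{-\alpha} \le \int_0^T x^{-\alpha}\,dx = \frac{1}{1-\alpha}\,T^{1-\alpha}$. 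Since $1-\alpha = (n(D-1)-2)/(n(D-1))$, the prefactor equals $n(D-1)/(n(D-1)-2) < c_0$, so $\sum_{t=1}^T (1/M_t)^{2/(D-d_t)} \le c_0\, T^{1-\alpha}$.

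Finally I would substitute this bound into the third term of Theorem~\ref{theo:main}: $C\sqrt{\bigl(\sum_{t=1}^T \sum_{i\in[D]\setminus I_t}\rho_i\bigr)\bigl(\sum_{t=1}^T (1/M_t)^{2/(D-d_t)}\bigr)} \le C\sqrt{c_0}\,\sqrt{\sum_{t=1}^T\sum_{i\in[D]\setminus I_t}\rho_i}\; T^{(1-\alpha)/2}$, and note $(1-\alpha)/2 = 1/2 - 1/(n(D-1))$, exactly the exponent claimed. Adding back the two $M_t$-free terms yields the statement, and the event of probability $\ge 1-\delta$ is inherited unchanged from Theorem~\ref{theo:main}.

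This is a short deterministic computation layered on top of Theorem~\ref{theo:main}, so I do not anticipate a genuine obstacle; the only delicate points are (i) that the exponent-monotonicity step needs $d_t \ge 1$ for every $t$ (true whenever the above-average-$\rho_i$ set is nonempty, which holds under the paper's running assumption that genuinely important variables exist), and (ii) that the integral bound produces a power of $T$ strictly below $1/2$ only when $n(D-1) > 2$, so the condition on $c_0$ should be read as also encoding this mild restriction on $n$ and $D$.
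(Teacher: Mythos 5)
Your proposal is correct and follows essentially the same route as the paper's own proof (Proposition 5.8 in the appendix): bound $(1/M_t)^{2/(D-d_t)}$ by $t^{-2/(n(D-1))}$ using $M_t \ge t^{1/n}$ and $d_t \ge 1$, sum the resulting $p$-series via an integral comparison to get the $c_0\,T^{1-2/(n(D-1))}$ factor, and substitute into the third term of Theorem \ref{theo:main}. The two caveats you flag ($d_t \ge 1$ and $n(D-1)>2$) are likewise left implicit in the paper, so there is no substantive divergence.
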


% If unimportant variables are correctly identified, this component will be smaller due to the property that unimportant variables have little impact on $f$.

Corollary \ref{main_result} provides an upper bound on the cumulative regret $R_T$ for our proposed algorithm. However, in our proposed algorithm, we select only a subset of variables (i.e., $I_{d_t} << [D]$) to optimize. Therefore, $\beta_T$ reduces to $\mathcal O(\tilde{d})$, where $\tilde{d} = \text{max}_{1 \le t \le T} |I_{d_t}|$ and $\tilde{d} << D$ instead of $\mathcal O(D)$ as in standard regret bound \citep{Srinivas2009InformationTheoreticRB}. However, using variable selection will make the regret bound loose (worse) by the additional factor $ C \sqrt{c_0} \sqrt{\left( \sum^T_{t=1} \sum_{i\in [D ] \setminus I_t} \rho_i \right)} T^{\frac{1}{2}  - \frac{1}{n(D-1)}}$. The component $\sum^T_{t=1} \sum_{i \in [D]\setminus I_t} \rho_i$ relates the inverse length scales of variables in set $[D] \setminus I_t$ which are classed as unimportant variables as shown in Section 3.1. Finding a valid set $I_t$ can make this component significantly small. We provide an experimental estimate of this set as in Figure 5. In the worst case, this component still can be upper bounded by $T(\sum_{i \in [D]} \rho_i)$. %where $\sum_{i \in [D]} \rho_i$ is a constant. 
Thus, the additional factor is still bounded by $\mathcal O(T^{1 - \frac{1}{n(D-1)}})$ which is sublinear in $T$ for every $n \ge 1$ and $D \ge 4$ thanks to the choice $M_t = \lceil \sqrt[n]{t} \rceil$ which guarantees the enough exploration. Thus, $\lim_{T \rightarrow \infty} \frac{R_T}{T} = 0$. Compared to recent high dimensional BO methods using variable selection (e.g., \citep{ li2018high,shen2021computationally,song2022monte}), we are \textit{the first} to characterize a sublinear regret bound. Moreover, in \cite{li2018high} and \cite{shen2021computationally}, a regret bound analysis is performed using $d$ fixed important variables. In contrast, our regret-bound analysis is performed with an adaptive set of $d_t$ important variables which are more flexible.

\begin{figure*}[h!]
    \centering
    \vspace{-3pt}
    %\subfigure[Hartmann function]
    {\includegraphics[width=0.48\textwidth]{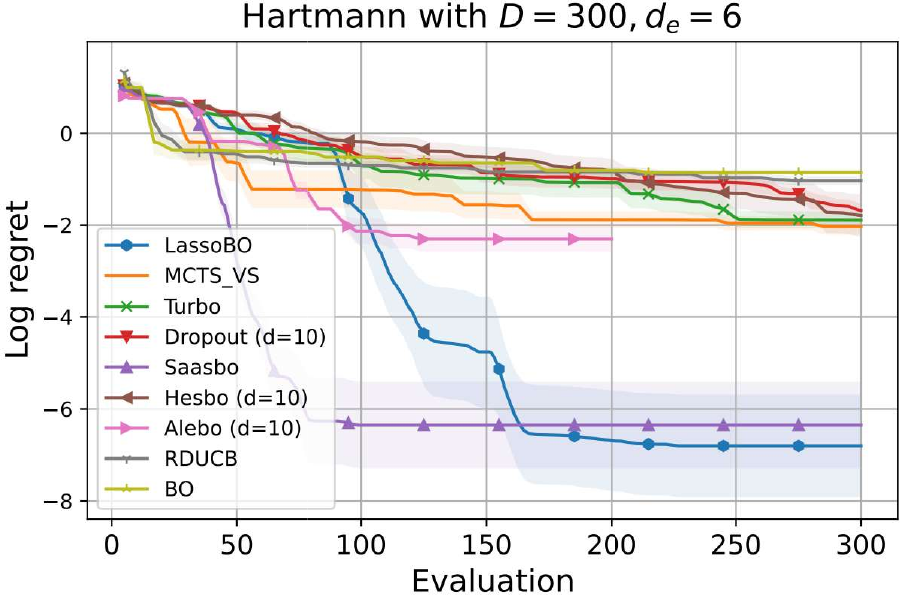}}
    %\subfigure[Ackley function]
    {\includegraphics[width=0.48\textwidth]{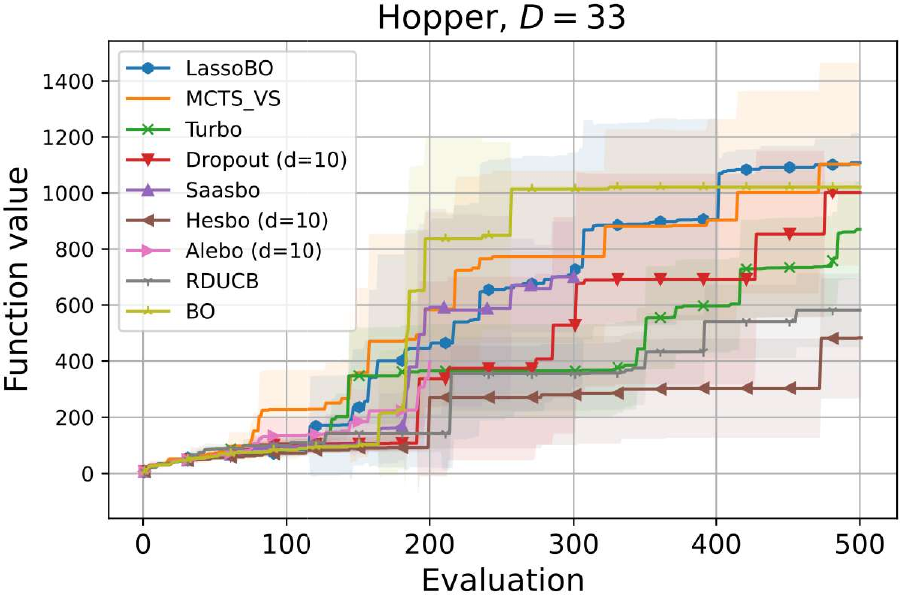}}

    %\subfigure[Levy function]
    
    {\includegraphics[width=0.48\textwidth]{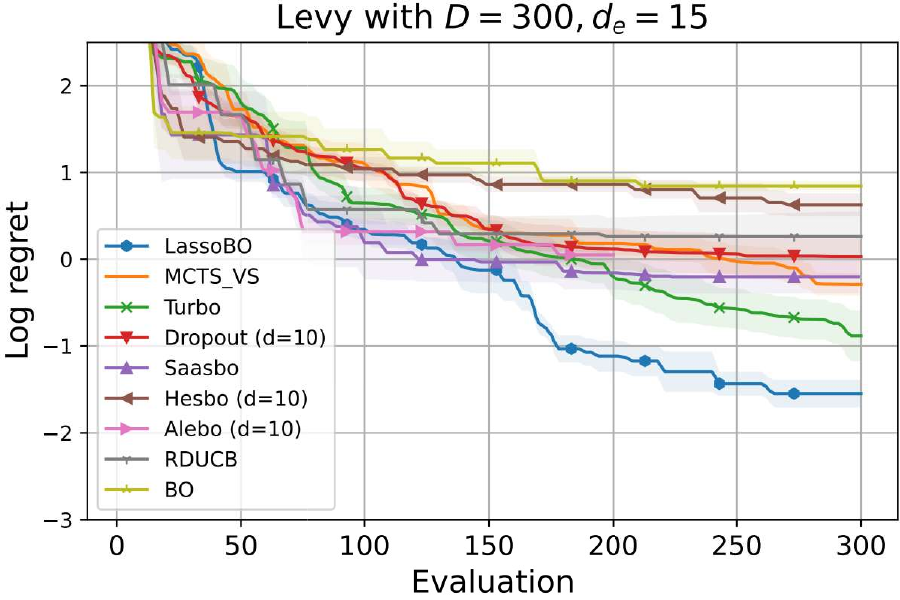}}
    %\vskip\baselineskip
    %\subfigure[Rover]
    {\includegraphics[width=0.48\textwidth]{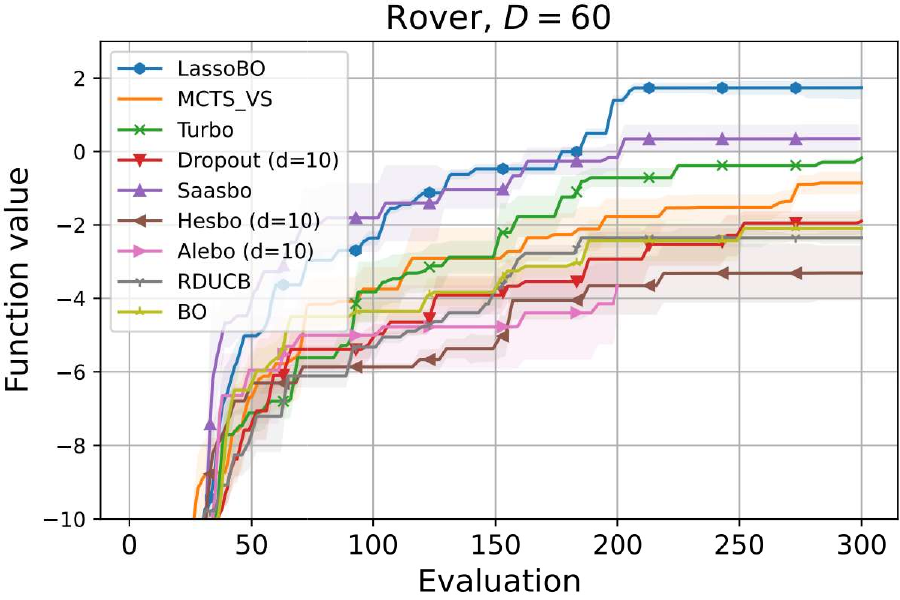}}
    %\subfigure[Hopper]
    
    { \includegraphics[width=0.48\textwidth]{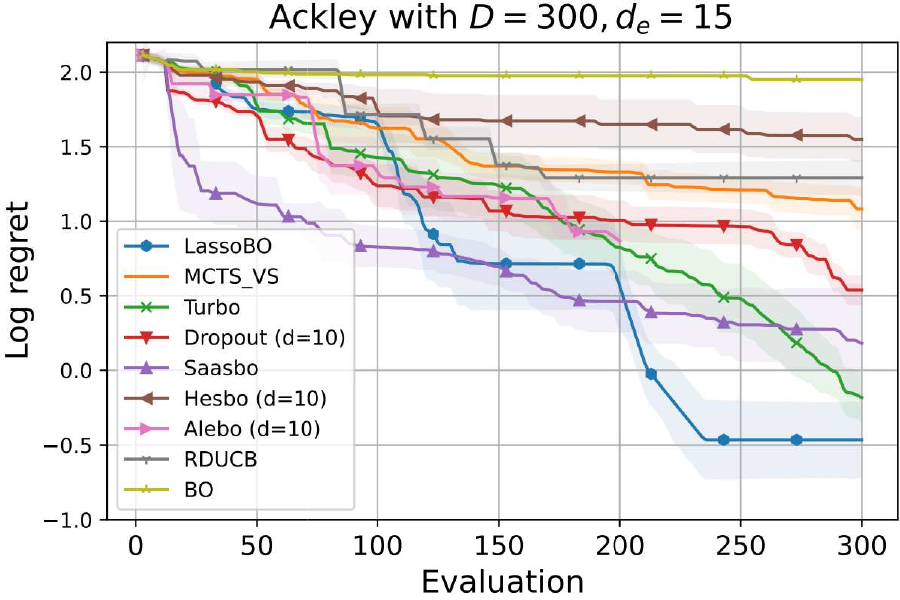}}
    %\subfigure[DNA]
    {\includegraphics[width=0.48\textwidth]{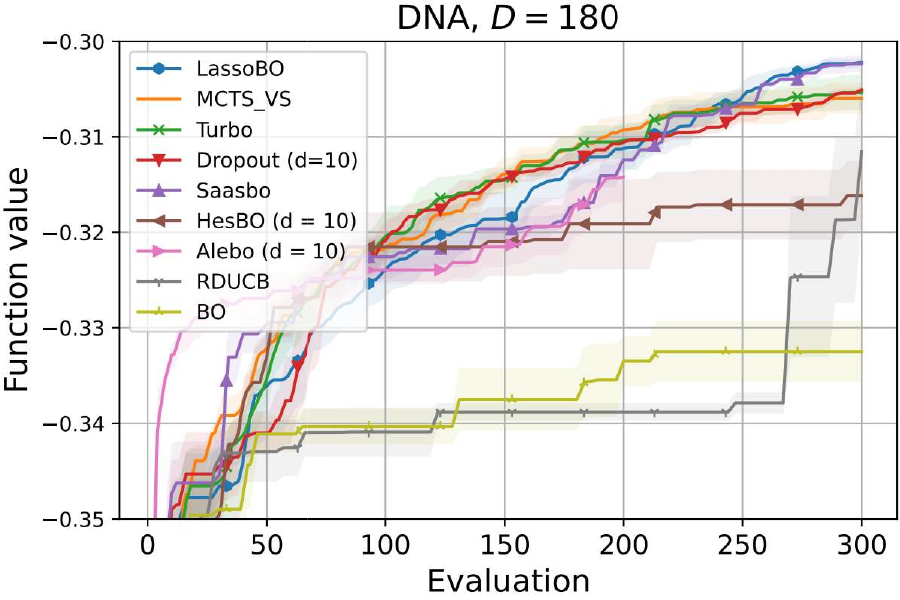}}
    \vspace{-8pt}
    \caption{Comparison with the BO baselines on the high dimensional optimization tasks including the benchmark functions (\textbf{Left}) and real-world applications (\textbf{Right}). Our proposed LassoBO outperforms the baselines by a wide margin. Here, \(d_e\) denotes the number of valid dimensions of the function. $d < d_e$ is the hyperparameter that determines the effective subspace dimension of the algorithm. In \textbf{Right}, we don't observe $d_e$ in advance.} 
    % of Hartmann, Ackley and Levy 
    % on Rover, Hopper and DNA
    % In synthetic testing, we use the log distance to the true optimum as the evaluation criterion. In real-world applications, we evaluate the performance of algorithms using function values. 
    \label{fig:synthetic}
    \label{fig:realworld}
    \vspace{-5pt}
\end{figure*}

\paragraph{Computational Complexity.}
 The computation of each iteration depends on four components: computing variable importance, fitting a GP surrogate model, maximizing an acquisition function, and evaluating a sampled point. The first step of estimating the importance of the variable in the optimization process $U_t(\psi)$ requires $\mathcal{O} (t^3 + Dt^2)$. The computational complexity of fitting a GP model at iteration $t$ is $\mathcal{O}\left(t^3+t^2 D\right)$. Maximizing an acquisition function is related to the optimization algorithm. If we use the Quasi-Newton method to optimize UCB, the computational complexity is $\mathcal{O}\left(mM_t\left(t^2+t d_t+d_t^2\right)\right)$ \citep{nocedal1999numerical}, where $m$ denotes the Quasi-Newton's running rounds. 
  %The cost of evaluating a sampled point is fixed. 
  In our experiments, we use a small value of $M_t$ by fixing $n =3$. We use the same computation budget for the optimization of the acquisition function. Therefore, compared to MCTS\_VS \citep{song2022monte}, the computation complexity is the same.

\begin{figure*}%[h!]
{\includegraphics[width=0.325\textwidth]{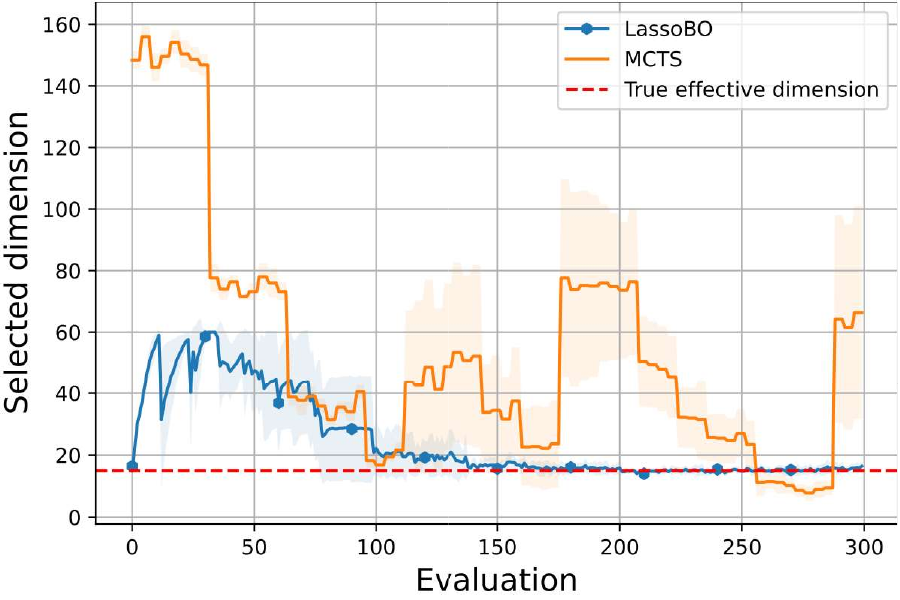}
    \label{fig:rb20300a}}
{\includegraphics[width=0.325\textwidth]{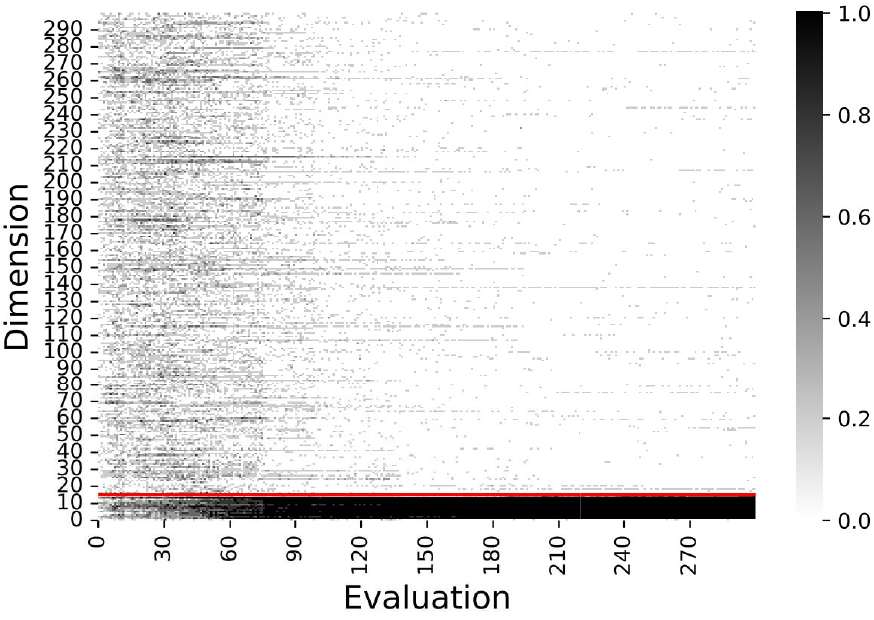}
    \label{fig:rb20300b}}
{\includegraphics[width=0.325\textwidth]{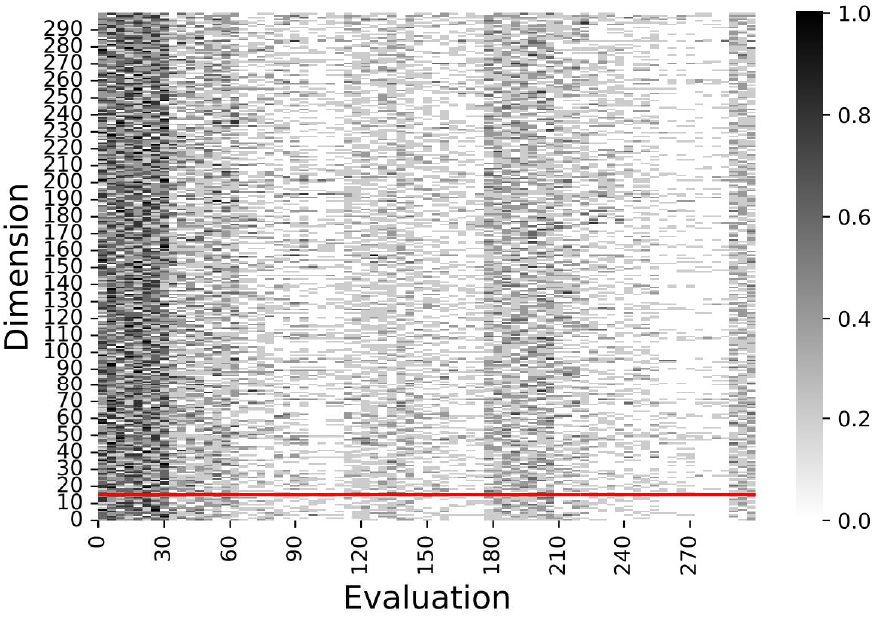}
    \label{fig:rb20300c}}
    \vspace{-15pt}
    \caption{We compare the variable selection between LassoBO and MCTS when performed on the Levy function($d_e=15, D= 300$). \textbf{Left:} We compare the number of selected variables through function evaluations in LassoBO and MCTS\_VS. \textbf{Middle:} We depict the selected dimensions for each evaluation in LassoBO. \textbf{Right: }We depict the selected dimensions for each evaluation in MCTS\_VS. }
    \vspace{-7pt}
    \label{fig:rb20300}
\end{figure*}

\begin{figure}
\centering
{\includegraphics[width=0.45\textwidth]{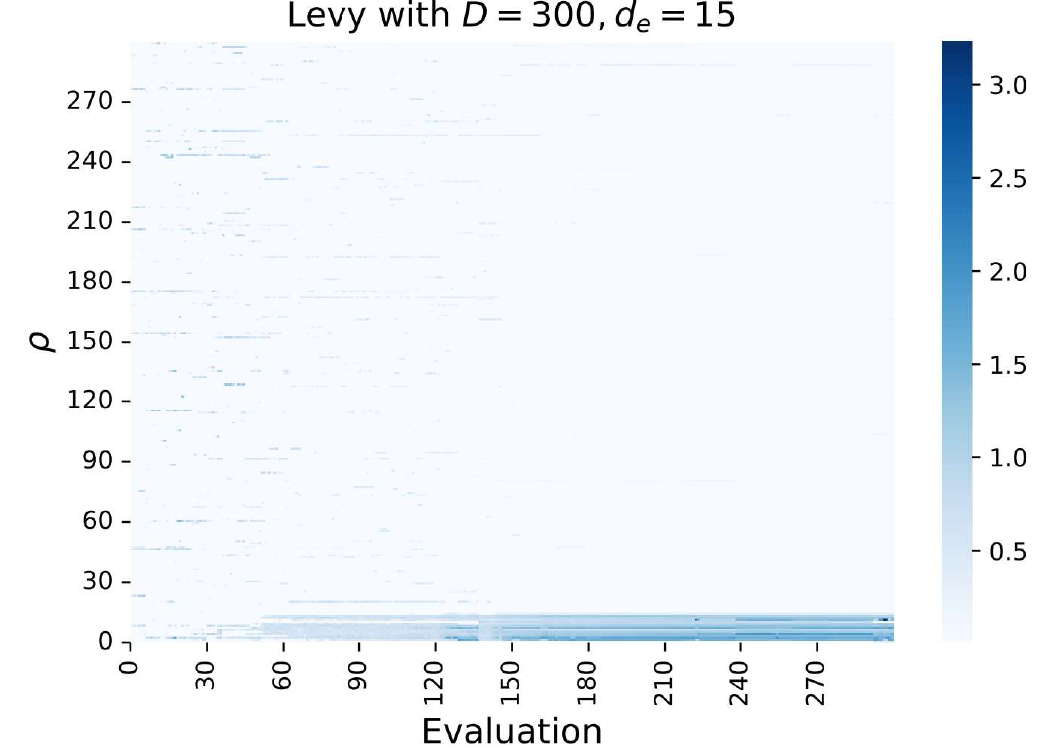}}        
    \vspace{-3pt}
    \caption{The estimated $\rho_i$ at different time step for Levy ($D = 300, d_e = 15$). We show that the algorithm can converge correctly to the true dimensions of $\rho_1,...\rho_{15}$, located at the bottom area of the plot.} %the Levy function ($D = 300, d_e = 15$) (left) and 
    \label{fig:rho}
    \vspace{-5pt}
\end{figure}

%  \begin{figure}
% \centering
%     %\subfigure[]
%     %{\includegraphics[width=0.49\textwidth]%{Neurips2024/rosenbrock15_3002015_hm.pdf}}
%     %\subfigure[]
%     \hspace{0pt}
%     {\includegraphics[width = 0.45 \textwidth]{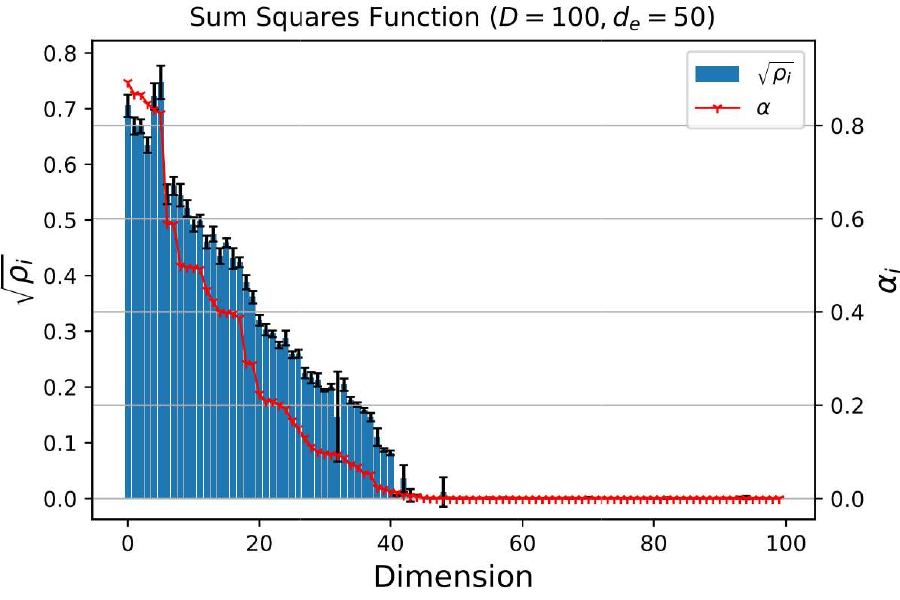}}
%     %\subfigure[Rosenbrock function]
%     \vspace{-11pt}
%     \caption{Correlation between $\rho_i$. The results are averaged across $10$ iterations from $290$th to $300$th. The red line represents the dimension importance value, $\alpha$, the blue bars represent $\sqrt{\rho}$.}
%     \label{fig:rho-alpha}
%     % \vspace{-5pt}
% \end{figure}

% \begin{figure*}%[h]
%     \vspace{-5pt}

% {\includegraphics[width=0.3\textwidth]{levy15_300_selected.pdf}
%     \label{fig:rb20300a}}
% {\includegraphics[width=0.3\textwidth]{levy15_300_lassobo_hm.pdf}
%     \label{fig:rb20300b}}
% {\includegraphics[width=0.3\textwidth]{levy15_300_mcts_hm.pdf}
%     \label{fig:rb20300c}}
%     \vspace{-10pt}
%     \caption{We compare the variable selection between LassoBO and MCTS when performed on the Levy function($d_e=15, D= 300$). \textbf{Left:} We compare the number of selected variables through function evaluations in LassoBO and MCTS\_VS. \textbf{Middle:} We depict the selected dimensions for each evaluation in LassoBO. \textbf{Right: }We depict the selected dimensions for each evaluation in MCTS\_VS. }
%     \vspace{-7pt}
%     \label{fig:rb20300}
% \end{figure*}
\section{Experiment}
\subsection{The experimental setup}
\label{subsec:setup}
We evaluate the performance of LassoBO on different tasks, including synthetic functions, Rover problem, MuJoCo locomotion tasks and LASSOBENCH. We benchmark against MCTS\_VS \citep{song2022monte}, TurBO \citep{eriksson2019scalable} with one trust region, SAASBO \citep{eriksson2021high}, Dropout \citep{li2018high}, ALEBO \citep{letham2020re}, HeSBO \citep{nayebi2019framework}, RDUCB \citep{ziomek2023random} and BO \citep{Srinivas2009InformationTheoreticRB}. 
%To examine the performance of LassoBo, we conduct experiments on different tasks, including synthetic functions, Rover problems and MuJoCo locomotion tasks, to compare LassoBO with other black-box optimization methods. 
%\begin{figure*}%[h!]
 In LassoBO, we default to choosing \(\lambda = 10^{-3}\) and \(M_t = \lceil \sqrt[3]{t} \rceil \) (i.e. $n=3$) for all experiments. For Dropout and embedding-based methods, we run Dropout, HesBO and ALEBO with $d = 10$. %For real-world problems, we do not know the number of valid dimensions, and thus we just set a reasonable value. 
 The experiments are conducted on an AMD Ryzen 9 5900HX CPU @ 3.3GHz, and 16 GB of RAM. We observe that ALEBO is constrained by their memory consumption. The available hardware allows up to $250$ function evaluations for ALEBO for each run. Larger sampling budgets or higher target dimensions for ALEBO resulted in out-of-memory errors. Throughout all experiments, the number of initial observations is set at $30$ and each algorithm is repeated $10$ times with different initialization. Our code is available at \url{https://github.com/Fsoft-AIC/LassoBO}.
\subsection{Synthetic Functions}

% \subsubsection{On the number of chosen dimensions $d$}
% We investigate how the number of chosen dimensions d affects the proposed method. We experiment $d = 3, 6, 10, 15$ for Hartmann($d_e = 6$) and Ackley($d_e = 6$) with $D = 300.$ 
% \subsubsection{On the high dimension D}

%\subsubsection{Comparison with BO methods}

% \begin{figure*}
%     \centering
%      \begin{subfigure}[b]{0.3\textwidth}
%          \centering
%          \includegraphics[width=0.32\textwidth]{AISTATS2024PaperPack/hartmann6_300.pdf}
%          \caption{Hartmann6}
%          \label{fig:y equals x}
%      \end{subfigure}
%      \hfill
%      \begin{subfigure}[b]{0.3\textwidth}
%          \centering
%          \includegraphics[width=0.32\textwidth]{AISTATS2024PaperPack/}
%          \caption{Levy function}
%          \label{fig:three sin x}
%      \end{subfigure}
%      \hfill
%      \begin{subfigure}[b]{0.3\textwidth}
%          \centering
%          \includegraphics[width=0.32\textwidth]{AISTATS2024PaperPack/ackley6_300.pdf}
%          \caption{Ackley function}
%          \label{fig:five over x}
%      \end{subfigure}
%         \caption{ Performance of BO methods on Hartmann6, Levy and Ackley.}
%         \label{fig:synthetic}
% \end{figure*}

We use Hartmann $(d_e = 6)$, Ackley $(d_e=15)$ and Levy $(d_e = 15)$ as the synthetic benchmark functions, and extend them to high dimensions by adding unrelated variables. For example, the original Hartmann has $(d_e = 6)$ dimensions, we append $294$ unimportant dimensions to create $D = 300$  dimension.
%the dimension $D = 300$, and is generated by appending $294$ unrelated dimensions to  Hartmann. 
The variables affecting the value of $f$ are important variables. We evaluate the performance of each algorithm using a fixed budget of $T=300$ and measure the logarithmic distance to the true optimum, that is, $\ln {f(\mathbf{x}^*) - f(\mathbf{x}_t)}$.  As shown in Fig. \ref{fig:synthetic}, LassoBO performs the best in Levy followed by TurBO. LassoBO has moderate results in the initial iterations because only a small number of important variables are selected, but has the best results in later iterations when the important variables are correctly estimated. The reason for Turbo's inferior performance when compared to LassoBO is the greater difficulty in adjusting the confidence region as opposed to determining the variable significance in these settings. 
% MCTS\_VS uses a simple idea of estimating the importance of the variable using the target values of the sampled data points by optimizing any subset of the variable that contains it.
MCTS\_VS has moderate performance, indicating that the method needs more samples to be able to detect effective dimensions and to focus on optimization on these dimensions. ALEBO and HESBO performed inconsistently, having a relatively good performance with Hartmann but a poor performance with Ackley and Levy because the low-dimensional space is smaller than the effective dimension ($d< d_e$). RDUCB has moderate performance and BO has poor performance in these settings. SAASBO demonstrates superior performance on the Hartmann function, with LassoBO closely following as the second-best performer. However, with the increase in the effective dimensional ratio in Ackley and Levy ($D=300,d_e=15$), SAASBO is unable to benefit from the present active subspace. This is because the number of active dimensions in SAASBO is smaller than the valid dimensions, and the inactive dimensions, which are not controlled, can negatively impact performance.

\subsection{Real-World Problems}
 We further compare LassoBO with the baselines on real-world problems, including Rover \citep{wang2018batched}, Hopper and DNA \citep{nardi2022lassobench}. Rover problems is a popular benchmark in high-dimensional BO. Hopper is a robot locomotion task in MuJoCo \citep{todorov2012mujoco}, which is a popular black-box optimization benchmark. DNA is a benchmark from LASSOBENCH \citep{nardi2022lassobench} and has a low effective dimensionality. See details in Appendix \ref{sec:inforimplement}.
 
In Fig. \ref{fig:realworld}, LassoBO and SAASBO are the best performers on Rover. ALEBO and HESBO have poor performance with low dimensionality $d=10$, thus requiring a reasonable choice of $d$. In the MuJoCo tasks, most methods have large variance due to the aleatoric noise of $f$. Both LassoBO and MCTS\_VS obtain good performances due to focusing on identifying and optimizing only a subset of important variables. In DNA setting, LassoBO achieves a better solution at later evaluation than any other method in DNA setting. This is because  LassoBO can gradually identify correctly the important variables with sufficient observation that leads to better performance.
\subsection{Ablation Study}
\paragraph{Compare with MCTS\_VS.}
 We empirically compare their ability to select variables in the Levy function ($D=300, d_e = 15$) as shown in Fig. \ref{fig:rb20300}. The first subfigure indicates that the number of selected variables in LassoBO begins at a minimal level and progresses towards the number of effective dimensions. In contrast, MCTS\_VS starts with a relatively high number of selected variables and progressively decreases to a level smaller than the effective number of dimensions. In Fig. \ref{fig:rb20300}, the Middle presents the variables selected by LassBO and the Right by MCTS\_VS over iterations. The results show that LassoBO can correctly identify effective dimensions, while MCTS\_VS have selected many ineffective dimensions, implying the benefit of variable selection in LassoBO over MCTS\_VS.

\paragraph{Correlation between $\rho$ and variable importance}
We study the learning behavior of LassoBO in determining variable importance. We examine the evolution of $\rho$ following function evaluations. As illustrated in Fig \ref{fig:rho}, the value of $\rho_i$ exhibits a gradual increase after each function evaluation and eventually converges. This stabilization occurs after the 180th evaluation, with correctly identifying $10$ \textit{relevant} dimensions assigning substantial value compared to the remaining dimensions. 
Next, we consider the correlation between $\rho$ and variable importance when the number of function evaluations is large enough. In Fig. \ref{fig:rho-alpha}, we describe the value of $\rho$ and the coefficient $\alpha$ of the Sum Squares function $f(x) = \sum \alpha_i \mathbf{x}_i^2$, where $\alpha_i > 0, \forall i \in \{1 \dots D \}$. 
% We use $\alpha_i=\max \frac{\partial f}{\partial x}$ as the importance of the variable $x_i$ on the Levy function. 
In Fig. \ref{fig:rho-alpha}, we clearly demonstrate a strong correlation between $\sqrt{\rho_i}$ and $\alpha_i$ across multiple dimensions.
% \begin{figure}
%     \includegraphics[width = 0.49 \textwidth ]{AISTATS2024PaperPack/levy6_300_hm.pdf}
%     \caption{The values $\rho_i$ are found according to the algorithm for the Levy function ($D = 300, d_e = 6$). \textcolor{blue}{Vu: we need some ideas to improve the clarity of this figure so that the readers will easily grasp what do we want to illustrate}}
%     \label{fig:rho}
% \end{figure}

% \begin{figure*}[h]
% % \vspace{-10pt}
% \centering
% %\subfigure[Rosenbrock function]
% {\includegraphics[width = 0.4 \textwidth]{Neurips2024/Comparesubspace.pdf}}
% %\subfigure[Ackley function]
% {\includegraphics[width=0.4\textwidth]{Neurips2024/Comparesubspace2.pdf}}
%     \caption{LassoBO with varying the number of subspaces $M \in \{5,10,20\}$. The results show that a larger number of subspaces $M$ will lead to better results in later evaluations.}
%     \label{fig:subspace}
%     \vspace{-10pt}
% \end{figure*}

 \begin{figure}
\centering
    %\subfigure[]
    %{\includegraphics[width=0.49\textwidth]%{Neurips2024/rosenbrock15_3002015_hm.pdf}}
    %\subfigure[]
    \hspace{-2pt}
    {\includegraphics[width = 0.49 \textwidth]{}}
    %\subfigure[Rosenbrock function]
    \vspace{-7pt}
    \caption{Correlation between $\rho_i$. The results are averaged across $10$ iterations from $290$th to $300$th. The red line represents the dimension importance value, $\alpha$, the blue bars represent $\sqrt{\rho}$.}
    \label{fig:rho-alpha}
    \vspace{-6pt}
\end{figure}

\paragraph{Further Studies.}  We test our model performance by varying $D=100$, $D=300$ and $D=500$ while keeping $d_e = 15$ fixed in Appendix \ref{sec:addexperiment}. We also investigate employing different window sizes to estimate the value of $\rho$ at each iteration.

\section{Conclusion}
\label{sec:conlusion}

In this paper, we have presented a new BO method for high-dimensional settings. The proposed LassoBO estimates the variable-important score to partition the dimensions into important and unimportant groups. We then design a Variable Importance-based Searching Region to optimize the acquisition function effectively. We analyze our algorithm theoretically and show that regardless of the variable classification method used, our algorithm can achieve a sublinear growth rate for cumulative regret. We perform experiments for many high-dimensional optimization problems and show that our algorithm performs better than the existing methods given the same evaluation budget.

% \textbf{Limitation.} A limitation of our work is that the theoretical guarantees of LassoBO rely on a few assumptions. For example, the regularity assumption that assumes that the objective
% function $f$ is a GP sample may not be true in some problems.  Moreover, LassoBO relies on the assumption of low effective dimensionality and might not work well if the percentage of valid variables is high.

% References
% \bibliographystyle{plain}
\bibliography{ref}

\begin{thebibliography}{}

\bibitem[Ament et~al., 2023]{ament2023unexpected}
Ament, S., Daulton, S., Eriksson, D., Balandat, M., and Bakshy, E. (2023).
\newblock Unexpected improvements to expected improvement for bayesian optimization.
\newblock {\em Advances in Neural Information Processing Systems}, 36:20577--20612.

\bibitem[Bergstra et~al., 2011]{bergstra2011algorithms}
Bergstra, J., Bardenet, R., Bengio, Y., and K{\'e}gl, B. (2011).
\newblock Algorithms for hyper-parameter optimization.
\newblock {\em Advances in neural information processing systems}, 24.

\bibitem[Binois and Wycoff, 2022]{binois2022survey}
Binois, M. and Wycoff, N. (2022).
\newblock A survey on high-dimensional gaussian process modeling with application to bayesian optimization.
\newblock {\em ACM Transactions on Evolutionary Learning and Optimization}, 2(2):1--26.

\bibitem[Chen et~al., 2012]{Chen2012JointOA}
Chen, B., Castro, R.~M., and Krause, A. (2012).
\newblock Joint optimization and variable selection of high-dimensional gaussian processes.
\newblock In {\em International Conference on Machine Learning}.

\bibitem[Chlebus, 2009]{chlebus2009approximate}
Chlebus, E. (2009).
\newblock An approximate formula for a partial sum of the divergent p-series.
\newblock {\em Applied Mathematics Letters}, 22(5):732--737.

\bibitem[Coad and Srhoj, 2020]{coad2020catching}
Coad, A. and Srhoj, S. (2020).
\newblock Catching gazelles with a lasso: Big data techniques for the prediction of high-growth firms.
\newblock {\em Small Business Economics}, 55(3):541--565.

\bibitem[Eriksson and Jankowiak, 2021]{eriksson2021high}
Eriksson, D. and Jankowiak, M. (2021).
\newblock High-dimensional {B}ayesian optimization with sparse axis-aligned subspaces.
\newblock In {\em Uncertainty in Artificial Intelligence}, pages 493--503. PMLR.

\bibitem[Eriksson et~al., 2019]{eriksson2019scalable}
Eriksson, D., Pearce, M., Gardner, J., Turner, R.~D., and Poloczek, M. (2019).
\newblock Scalable global optimization via local {B}ayesian optimization.
\newblock {\em Advances in neural information processing systems}, 32.

\bibitem[Frazier, 2018]{frazier2018tutorial}
Frazier, P.~I. (2018).
\newblock A tutorial on {B}ayesian optimization.
\newblock {\em arXiv preprint arXiv:1807.02811}.

\bibitem[Garnett, 2023]{garnett2023bayesian}
Garnett, R. (2023).
\newblock {\em Bayesian optimization}.
\newblock Cambridge University Press.

\bibitem[Ha et~al., 2023]{ha2023provably}
Ha, H., Nguyen, V., Tran-The, H., Zhang, H., Zhang, X., and Hengel, A. v.~d. (2023).
\newblock Provably efficient bayesian optimization with unknown gaussian process hyperparameter estimation.
\newblock {\em arXiv preprint arXiv:2306.06844}.

\bibitem[Han et~al., 2021]{han2021high}
Han, E., Arora, I., and Scarlett, J. (2021).
\newblock High-dimensional {B}ayesian optimization via tree-structured additive models.
\newblock In {\em Proceedings of the AAAI Conference on Artificial Intelligence}, volume~35, pages 7630--7638.

\bibitem[Hansen, 2016]{hansen2016cma}
Hansen, N. (2016).
\newblock The cma evolution strategy: A tutorial.
\newblock {\em arXiv preprint arXiv:1604.00772}.

\bibitem[Hoang et~al., 2018]{hoang2018decentralized}
Hoang, T.~N., Hoang, Q.~M., Ouyang, R., and Low, K.~H. (2018).
\newblock Decentralized high-dimensional {B}ayesian optimization with factor graphs.
\newblock In {\em Proceedings of the AAAI Conference on Artificial Intelligence}, volume~32.

\bibitem[Husain et~al., 2023]{husain2023distributionally}
Husain, H., Nguyen, V., and van~den Hengel, A. (2023).
\newblock Distributionally robust bayesian optimization with $\phi$-divergences.

\bibitem[Hvarfner et~al., 2024]{hvarfner2024vanilla}
Hvarfner, C., Hellsten, E.~O., and Nardi, L. (2024).
\newblock Vanilla bayesian optimization performs great in high dimension.
\newblock {\em arXiv preprint arXiv:2402.02229}.

\bibitem[Hvarfner et~al., 2022]{hvarfner2022pi}
Hvarfner, C., Stoll, D., Souza, A., Lindauer, M., Hutter, F., and Nardi, L. (2022).
\newblock $\pi$bo: Augmenting acquisition functions with user beliefs for bayesian optimization.
\newblock In {\em Tenth International Conference of Learning Representations, ICLR 2022}.

\bibitem[Kandasamy et~al., 2015]{kandasamy2015high}
Kandasamy, K., Schneider, J., and P{\'o}czos, B. (2015).
\newblock High dimensional {B}ayesian optimisation and bandits via additive models.
\newblock In {\em International conference on machine learning}, pages 295--304. PMLR.

\bibitem[Klein et~al., 2017]{klein2017fast}
Klein, A., Falkner, S., Bartels, S., Hennig, P., and Hutter, F. (2017).
\newblock Fast {B}ayesian optimization of machine learning hyperparameters on large datasets.
\newblock In {\em Artificial intelligence and statistics}, pages 528--536. PMLR.

\bibitem[Letham et~al., 2020]{letham2020re}
Letham, B., Calandra, R., Rai, A., and Bakshy, E. (2020).
\newblock Re-examining linear embeddings for high-dimensional {B}ayesian optimization.
\newblock {\em Advances in neural information processing systems}, 33:1546--1558.

\bibitem[Li et~al., 2017]{li2018high}
Li, C., Gupta, S., Rana, S., Nguyen, V., Venkatesh, S., and Shilton, A. (2017).
\newblock High dimensional bayesian optimization using dropout.
\newblock In {\em Proceedings of the 26th International Joint Conference on Artificial Intelligence}, pages 2096--2102.

\bibitem[Lizotte et~al., 2007]{lizotte2007automatic}
Lizotte, D.~J., Wang, T., Bowling, M.~H., Schuurmans, D., et~al. (2007).
\newblock Automatic gait optimization with {G}aussian process regression.
\newblock In {\em IJCAI}, volume~7, pages 944--949.

\bibitem[Lu et~al., 2022]{lu2022additive}
Lu, X., Boukouvalas, A., and Hensman, J. (2022).
\newblock Additive gaussian processes revisited.
\newblock In {\em International Conference on Machine Learning}, pages 14358--14383. PMLR.

\bibitem[Marco et~al., 2017]{marco2017virtual}
Marco, A., Berkenkamp, F., Hennig, P., Schoellig, A.~P., Krause, A., Schaal, S., and Trimpe, S. (2017).
\newblock Virtual vs. real: Trading off simulations and physical experiments in reinforcement learning with {B}ayesian optimization.
\newblock In {\em 2017 IEEE International Conference on Robotics and Automation (ICRA)}, pages 1557--1563. IEEE.

\bibitem[Massias et~al., 2018]{massias2018celer}
Massias, M., Gramfort, A., and Salmon, J. (2018).
\newblock Celer: a fast solver for the lasso with dual extrapolation.
\newblock In {\em International Conference on Machine Learning}, pages 3315--3324. PMLR.

\bibitem[Mo{\v{c}}kus, 1975]{movckus1975bayesian}
Mo{\v{c}}kus, J. (1975).
\newblock On {B}ayesian methods for seeking the extremum.
\newblock In {\em Optimization techniques IFIP technical conference: Novosibirsk, July 1--7, 1974}, pages 400--404. Springer.

\bibitem[Mutny and Krause, 2018]{mutny2018efficient}
Mutny, M. and Krause, A. (2018).
\newblock Efficient high dimensional {B}ayesian optimization with additivity and quadrature fourier features.
\newblock {\em Advances in Neural Information Processing Systems}, 31.

\bibitem[Nardi et~al., 2022]{nardi2022lassobench}
Nardi, L., Gramfort, A., Salmon, J., and Sehic, K. (2022).
\newblock Lassobench: A high-dimensional hyperparameter optimization benchmark suite for lasso.
\newblock In {\em 1st International Conference on Automated Machine Learning (AutoML)}.

\bibitem[Nayebi et~al., 2019]{nayebi2019framework}
Nayebi, A., Munteanu, A., and Poloczek, M. (2019).
\newblock A framework for {B}ayesian optimization in embedded subspaces.
\newblock In {\em International Conference on Machine Learning}, pages 4752--4761. PMLR.

\bibitem[Nguyen and Tran, 2024]{Nguyen_Tran}
Nguyen, Q.-A.~H. and Tran, T.~H. (2024).
\newblock High-dimensional bayesian optimization via random projection of manifold subspaces.
\newblock In Bifet, A., Daniu{\v{s}}is, P., Davis, J., Krilavi{\v{c}}ius, T., Kull, M., Ntoutsi, E., Puolam{\"a}ki, K., and {\v{Z}}liobait{\.{e}}, I., editors, {\em Machine Learning and Knowledge Discovery in Databases. Research Track and Demo Track}, pages 288--305, Cham. Springer Nature Switzerland.

\bibitem[Nguyen et~al., 2021]{nguyen2021optimal}
Nguyen, V., Le, T., Yamada, M., and Osborne, M.~A. (2021).
\newblock Optimal transport kernels for sequential and parallel neural architecture search.
\newblock In {\em International Conference on Machine Learning}, pages 8084--8095. PMLR.

\bibitem[Nguyen and Osborne, 2020]{nguyen2020knowing}
Nguyen, V. and Osborne, M.~A. (2020).
\newblock Knowing the what but not the where in {B}ayesian optimization.
\newblock In {\em International Conference on Machine Learning}, pages 7317--7326. PMLR.

\bibitem[Nocedal and Wright, 1999]{nocedal1999numerical}
Nocedal, J. and Wright, S.~J. (1999).
\newblock {\em Numerical optimization}.
\newblock Springer.

\bibitem[Parker-Holder et~al., 2022]{parker2022automated}
Parker-Holder, J., Rajan, R., Song, X., Biedenkapp, A., Miao, Y., Eimer, T., Zhang, B., Nguyen, V., Calandra, R., Faust, A., et~al. (2022).
\newblock Automated reinforcement learning (autorl): A survey and open problems.
\newblock {\em Journal of Artificial Intelligence Research}, 74:517--568.

\bibitem[Rana et~al., 2017]{rana2017high}
Rana, S., Li, C., Gupta, S., Nguyen, V., and Venkatesh, S. (2017).
\newblock High dimensional {B}ayesian optimization with elastic {G}aussian process.
\newblock In {\em International conference on machine learning}, pages 2883--2891. PMLR.

\bibitem[Rolland et~al., 2018]{rolland2018high}
Rolland, P., Scarlett, J., Bogunovic, I., and Cevher, V. (2018).
\newblock High-dimensional {B}ayesian optimization via additive models with overlapping groups.
\newblock In {\em International conference on artificial intelligence and statistics}, pages 298--307. PMLR.

\bibitem[Santoni et~al., 2024]{santoni2024comparison}
Santoni, M.~L., Raponi, E., Leone, R.~D., and Doerr, C. (2024).
\newblock Comparison of high-dimensional bayesian optimization algorithms on bbob.
\newblock {\em ACM Transactions on Evolutionary Learning}, 4(3):1--33.

\bibitem[Shen and Kingsford, 2023]{shen2021computationally}
Shen, Y. and Kingsford, C. (2023).
\newblock Computationally efficient high-dimensional bayesian optimization via variable selection.
\newblock In {\em International Conference on Automated Machine Learning}, pages 15--1. PMLR.

\bibitem[Snoek et~al., 2012]{snoek2012practical}
Snoek, J., Larochelle, H., and Adams, R.~P. (2012).
\newblock Practical {B}ayesian optimization of machine learning algorithms.
\newblock {\em Advances in neural information processing systems}, 25.

\bibitem[Song et~al., 2022]{song2022monte}
Song, L., Xue, K., Huang, X., and Qian, C. (2022).
\newblock Monte carlo tree search based variable selection for high dimensional {B}ayesian optimization.
\newblock {\em Advances in Neural Information Processing Systems}, 35:28488--28501.

\bibitem[Srinivas et~al., 2009]{Srinivas2009InformationTheoreticRB}
Srinivas, N., Krause, A., Kakade, S.~M., and Seeger, M.~W. (2009).
\newblock Information-theoretic regret bounds for {G}aussian process optimization in the bandit setting.
\newblock {\em IEEE Transactions on Information Theory}, 58:3250--3265.

\bibitem[Tibshirani, 1996]{tibshirani1996regression}
Tibshirani, R. (1996).
\newblock Regression shrinkage and selection via the lasso.
\newblock {\em Journal of the Royal Statistical Society Series B: Statistical Methodology}, 58(1):267--288.

\bibitem[Todorov et~al., 2012]{todorov2012mujoco}
Todorov, E., Erez, T., and Tassa, Y. (2012).
\newblock Mujoco: A physics engine for model-based control.
\newblock In {\em 2012 IEEE/RSJ international conference on intelligent robots and systems}, pages 5026--5033. IEEE.

\bibitem[Tran-The et~al., 2020a]{NEURIPS2020_bb073f28}
Tran-The, H., Gupta, S., Rana, S., Ha, H., and Venkatesh, S. (2020a).
\newblock Sub-linear regret bounds for bayesian optimisation in unknown search spaces.
\newblock In Larochelle, H., Ranzato, M., Hadsell, R., Balcan, M., and Lin, H., editors, {\em Advances in Neural Information Processing Systems}, volume~33, pages 16271--16281. Curran Associates, Inc.

\bibitem[Tran-The et~al., 2022a]{tran-the2}
Tran-The, H., Gupta, S., Rana, S., Truong, T., Tran-Thanh, L., and Venkatesh, S. (2022a).
\newblock Expected improvement for contextual bandits.
\newblock In Koyejo, S., Mohamed, S., Agarwal, A., Belgrave, D., Cho, K., and Oh, A., editors, {\em Advances in Neural Information Processing Systems}, volume~35, pages 22725--22738. Curran Associates, Inc.

\bibitem[Tran-The et~al., 2020b]{tran2019trading}
Tran-The, H., Gupta, S., Rana, S., and Venkatesh, S. (2020b).
\newblock Trading convergence rate with computational budget in high dimensional bayesian optimization.
\newblock {\em Proceedings of the AAAI Conference on Artificial Intelligence}, 34(03):2425–2432.

\bibitem[Tran-The et~al., 2022b]{tran-the1}
Tran-The, H., Gupta, S., Rana, S., and Venkatesh, S. (2022b).
\newblock Regret bounds for expected improvement algorithms in gaussian process bandit optimization.
\newblock In Camps-Valls, G., Ruiz, F. J.~R., and Valera, I., editors, {\em Proceedings of The 25th International Conference on Artificial Intelligence and Statistics}, volume 151 of {\em Proceedings of Machine Learning Research}, pages 8715--8737. PMLR.

\bibitem[Wan et~al., 2021]{wan2021think}
Wan, X., Nguyen, V., Ha, H., Ru, B., Lu, C., and Osborne, M.~A. (2021).
\newblock Think global and act local: {B}ayesian optimisation over high-dimensional categorical and mixed search spaces.
\newblock In {\em International Conference on Machine Learning}, pages 10663--10674. PMLR.

\bibitem[Wang, 2005]{wang2005volumes}
Wang, X. (2005).
\newblock Volumes of generalized unit balls.
\newblock {\em Mathematics Magazine}, 78(5):390--395.

\bibitem[Wang et~al., 2018]{wang2018batched}
Wang, Z., Gehring, C., Kohli, P., and Jegelka, S. (2018).
\newblock Batched large-scale {B}ayesian optimization in high-dimensional spaces.
\newblock In {\em International Conference on Artificial Intelligence and Statistics}, pages 745--754. PMLR.

\bibitem[Wang et~al., 2016]{wang2016bayesian}
Wang, Z., Hutter, F., Zoghi, M., Matheson, D., and De~Feitas, N. (2016).
\newblock {B}ayesian optimization in a billion dimensions via random embeddings.
\newblock {\em Journal of Artificial Intelligence Research}, 55:361--387.

\bibitem[Williams and Rasmussen, 2006]{williams2006gaussian}
Williams, C.~K. and Rasmussen, C.~E. (2006).
\newblock {\em {G}aussian processes for machine learning}, volume~2.
\newblock MIT press Cambridge, MA.

\bibitem[W{\"u}thrich et~al., 2021]{wuthrich2021regret}
W{\"u}thrich, M., Sch{\"o}lkopf, B., and Krause, A. (2021).
\newblock Regret bounds for gaussian-process optimization in large domains.
\newblock {\em Advances in Neural Information Processing Systems}, 34:7385--7396.

\bibitem[Xu and Zhe, 2024]{xu2024standard}
Xu, Z. and Zhe, S. (2024).
\newblock Standard gaussian process is all you need for high-dimensional bayesian optimization.
\newblock {\em arXiv preprint arXiv:2402.02746}.

\bibitem[Zheng et~al., 2003]{zheng2003statistical}
Zheng, A., Jordan, M., Liblit, B., and Aiken, A. (2003).
\newblock Statistical debugging of sampled programs.
\newblock {\em Advances in neural information processing systems}, 16.

\bibitem[Ziomek and Ammar, 2023]{ziomek2023random}
Ziomek, J.~K. and Ammar, H.~B. (2023).
\newblock Are random decompositions all we need in high dimensional {B}ayesian optimisation?
\newblock In {\em International Conference on Machine Learning}, pages 43347--43368. PMLR.

\end{thebibliography}

\onecolumn
\aistatstitle{Supplementary Materials}

\section{Related work}
 % Upper Confidence Bound (UCB) is a popular algorithm used in the context of multi-armed bandit problems \cite{auer2002finite}. In a multi-armed bandit scenario, there are multiple options, or "arms" each with an unknown reward distribution. The goal is to maximize the total rewards obtained over time by intelligently selecting arms to pull.

Expanding Bayesian Optimization (BO) to tackle problems with high dimensions is difficult because of the curse of dimensionality and the associated computational costs. When the dimensionality increases, the search space grows exponentially, necessitating more samples and evaluations to locate a satisfactory solution, which can be costly. Additionally, optimizing the acquisition function can be extremely time-consuming. Several methods have been proposed to address high-dimensional BO, each with its own set of assumptions.

\textbf{Embedding-based method.}
A popular approach is to rely on low-dimensional structure, with several methods utilizing random projections. REMBO \citep{wang2016bayesian} uses a random projection to project low-dimensional points up to the original space. ALEBO \citep{letham2020re} introduces several refinements to REMBO and demonstrates improved performance across a large number of problems. Alternatively, the embedding can be learned jointly with the model, including both linear and non-linear embeddings. Finally, Hashing-enhanced Subspace BO (HeSBO) \citep{nayebi2019framework} relies on hashing and sketching to reduce surrogate modeling and acquisition function optimization to a low-dimensional space.

\textbf{Decomposition-based method.}
Assuming that the function can be expressed as the sum of low-dimensional functions residing in separate subspaces, \cite{kandasamy2015high} introduced the Add-GP-UCB algorithm to optimize these low-dimensional functions individually. This approach was later extended to handle overlapping subspaces.\cite{wang2018batched} proposed ensemble BO, which employs an ensemble of additive GP models to enhance scalability. \cite{han2021high} constrained the dependency graphs of the decomposition to tree structures, aiming to facilitate the learning and optimization of the decomposition. However, in many cases, the decomposition is unknown and challenging to learn. \cite{ziomek2023random} demonstrate that a random tree-based decomposition sampler offers promising theoretical guarantees by effectively balancing maximal information gain and functional mismatch between the actual black-box function and its surrogate provided by the decomposition. 

\textbf{Variable Selection based method.}
Based on the same assumption as embedding, variable selection methods iteratively select a subset of variables to build a low-dimensional subspace and optimize through BO. The selected variables can be viewed as important variables that are valuable for exploitation, or having high uncertainty that are valuable for exploration. A classic method is Dropout \citep{li2018high} which randomly selects $d$ variables in each iteration without regard to the variable's importance. MS-UCB \citep{tran2019trading} selects a fixed $d$ variables to create a low-dimensional search subspace, which can balance between computational cost and convergence rate. \cite{Chen2012JointOA} proposed a distinct procedure for identifying valid variables prior to applying Bayesian Optimization. This variable selection method depends on a particular type of structured data and does not assign a ranking to the variables based on their importance. Moreover, these methods have not yet utilized the obtained observations to automatically select variables in a reasonable way. VS-BO \citep{shen2021computationally} selects variables based on their larger estimated gradients and utilizes CMA-ES \citep{hansen2016cma} to determine the values of the unselected variables.

\cite{song2022monte} propose a variable selection method MCTS-VS based on Monte Carlo tree search (MCTS), to iteratively select and optimize a subset of variables. For the MCTS-VS method, a set of variables is sequentially divided into subsets, which can result in the important variables being divided into two subsets at different leaf nodes. This occurs when, in each iteration, the selected node is incorrectly split, causing the important set of variables to be present in both child leaf nodes. This is particularly noticeable in the early iterations when the sampled data is limited in size.

SAASBO \cite{eriksson2021high} uses sparse priors on the GP inverse lengthscales which seems particularly valuable if the active subspace is axis-aligned. However, if the number of dimensions of the problem exceeds the total evaluation budget, SAASBO will only perform on a limited number of selected variables, which is fewer than the number of effective dimensions. Moreover, SAASBO still optimizes high dimensional acquisition function, and also due to its high computational cost of inference, it is very time-consuming. 

\section{Preliminaries}

The squared exponential (SE) kernel and Matern kernel are defined as
\begin{equation}
k_{SE}^\psi(\mathbf{x}, \mathbf{x}')=\sigma_{k}^2 \exp \left\{-\frac{1}{2} \sum \rho_i\left(x_i-x'_i\right)^2\right\}
\label{kernel1}
\end{equation}
\begin{equation}
k_{Matern}^\psi(\mathbf{x}, \mathbf{x}') = \sigma^2_{k} \frac{2^{1-\nu}}{\Gamma(\nu)}\left(\sqrt{\sum \rho_i\left(x_i-x'_i\right)^2} \sqrt{2 \nu}\right)^\nu B_\nu\left( \sqrt{\sum \rho_i\left(x_i-x'_i\right)^2} \sqrt{2 \nu}\right)
\label{kernel2}
\end{equation}
where $\sigma_{k}$ and $\rho_i$ for $i=1, \ldots, D$ are hyperparameters and we use $\psi$ to collectively denote all the hyperparameters, i.e. $\psi=\left\{\rho_{1: D}, \sigma_k^2\right\}$.

\section{Notations}

\begin{table}[h!]
\centering
\resizebox{1.0\textwidth}{!}{
\begin{tabular}{cl}
\toprule 
\textbf{Variable} & \textbf{Definition} \tabularnewline
\midrule 
$\mathbf{x} = \{x_1, \dots, x_D\}, y$ & All variables in the input space and the noisy observation of the objective function $f$. \tabularnewline

$D$ & The number of variables in the original search space. \tabularnewline

$[D]$ & Set of indices $\left\{1, 2, \dots, D\right\}$, representing all dimensions.  \tabularnewline
$\mathbb{D}_t$ & The dataset at iteration $t$, containing all past samples and their observed values. \tabularnewline

$\rho_1, \dots, \rho_D$ & Inverse squared length scales of the kernel function in the Gaussian Process. \tabularnewline

$\mathbf{x}^{t,+}$ & The best sampled point after $t$ iterations $\mathbf{x}^{t,+} = \arg \max_{\mathbf{x} \in \mathbb{D}_t} f(\mathbf{x}) $.  \tabularnewline

$I_t$ & Set of indices corresponding to important variables determined at iteration $t$.  \tabularnewline

$\mathbf{x}_{I_t}$ & Subset of important variables determined at iteration $t$. \tabularnewline

$d_t$ & Number of important variables at iteration $t$. \tabularnewline

$\mathbf{x}_{[D] \setminus I_t}$ & Subset of unimportant variables at iteration $t$.  \tabularnewline

$\mathcal{X}_t = [\mathcal{X}_{I_t}, \mathcal{X}_{[D] \setminus I_t}]$ & The search space at iteration $t$, where $\mathbf{x}_{I_t} \in \mathcal{X}_{I_t}$ and $\mathbf{x}_{[D] \setminus I_t} \in \mathcal{X}_{[D] \setminus I_t}$. \tabularnewline

$\sigma^2_k$ & Variance or amplitude parameter of the kernel in GP. \tabularnewline

$\sigma^2$ & Noise variance $y \sim \mathcal{N}(f,\sigma)$.  \tabularnewline

$\lambda$ & The rate parameter of  $\mathcal{EXP}(\lambda)$. \tabularnewline

$d_e$ & Number of effective (valid) variables. \tabularnewline

$M_t$ & Number of random unimportant components. \tabularnewline

$\beta_t$ & Constant to balance exploration and exploitation at iteration $t$. \tabularnewline

$\mu_{t+1}(x)$ &  Mean of the posterior distribution over $f(x)$ at iteration $t$. \tabularnewline

$\sigma_{t+1}(x)$ &  Variance of the posterior distribution over $f(x)$ at iteration $t$. \tabularnewline

$\gamma_t$ & Maximum information gain at iteration $t$. \tabularnewline

\bottomrule
\end{tabular}
}
\vspace{1.0em}
\caption{Notations used throughout the paper. We use column vectors in all notations.}
\label{table:notation}
\end{table}

\section{Theoretical Proof}
\label{proofB}
\begin{theorem}
Let $f$ sample from GP with the SE kernel or Matern kernel. Given L, for any $\mathbf{x}\in \left[0,1\right]^D$, we have that if the kernel is
\begin{itemize}
    \item the SE kernel then $
P\left(\left|\frac{\partial f}{\partial x_i}\right| \leq L \sqrt{\rho_i}\right) \geq  1 - e^{-\frac{L^2}{2 \sigma_k^2}}
$
\item the Mate\'rn kernel ($\nu = p + \frac{1}{2} > 1$) then
$
P\left(\left|\frac{\partial f}{\partial x_i}\right| \leq L \sqrt{\rho_i}\right) \geq  1 - e^{-\frac{L^2}{2 \sigma_k^2} \frac{2p-1}{2p+1}}
$
\end{itemize} 
\end{theorem}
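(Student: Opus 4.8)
The plan is to use the fact that, for a zero-mean GP $f$ whose kernel $k$ is twice differentiable (once in each argument), the partial-derivative process $\partial f/\partial x_i$ is itself a (jointly) Gaussian process; hence at any fixed $\mathbf{x}\in[0,1]^D$ the scalar $\partial f(\mathbf{x})/\partial x_i$ is a centered Gaussian random variable with variance $v_i := \bigl[\partial^2 k(\mathbf{x},\mathbf{x}')/\partial x_i\,\partial x_i'\bigr]_{\mathbf{x}'=\mathbf{x}}$. The SE kernel is $C^\infty$ and the Mat\'ern kernel with $\nu=p+\tfrac12>1$ is twice differentiable, so this representation is available in both cases, and the theorem will reduce to a single Gaussian tail estimate applied with the right value of $v_i$.

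First I would record the variance computation. Writing the kernels in stationary form $k(\mathbf{x},\mathbf{x}')=C(\mathbf{x}-\mathbf{x}')$ one has $v_i=-\,\partial^2 C/\partial\tau_i^2\big|_{\tau=0}$. For the SE kernel, differentiating $C(\tau)=\sigma_k^2\exp\{-\tfrac12\sum_j\rho_j\tau_j^2\}$ twice in $\tau_i$ and setting $\tau=0$ gives $v_i=\sigma_k^2\rho_i$. For the Mat\'ern kernel I would work through the one-dimensional spectral density in direction $i$, which is proportional to $(a^2+\omega^2)^{-\nu-1/2}$ with $a^2=2\nu\rho_i$; evaluating $v_i=\int\omega^2 S(\omega)\,d\omega$ via the Beta integrals $\int_{\mathbb R}(a^2+\omega^2)^{-\nu-1/2}d\omega=a^{-2\nu}\Gamma(\tfrac12)\Gamma(\nu)/\Gamma(\nu+\tfrac12)$ and $\int_{\mathbb R}\omega^2(a^2+\omega^2)^{-\nu-1/2}d\omega=a^{2-2\nu}\Gamma(\tfrac32)\Gamma(\nu-1)/\Gamma(\nu+\tfrac12)$ yields, after simplification, $v_i=\sigma_k^2\rho_i\,\nu/(\nu-1)=\sigma_k^2\rho_i\,(2p+1)/(2p-1)$. (As a check, the explicit half-integer forms $M_{3/2}(t)=(1+t)e^{-t}$, $M_{5/2}(t)=(1+t+t^2/3)e^{-t}$ give $v_i=3\sigma_k^2\rho_i$ and $\tfrac53\sigma_k^2\rho_i$, matching the formula; note also that convergence of the spectral integral, equivalently differentiability of $f$, requires precisely $\nu>1$.)

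Next, writing $\partial f(\mathbf{x})/\partial x_i=\sqrt{v_i}\,Z$ with $Z\sim\mathcal N(0,1)$ and using the standard bound $P(|Z|>u)\le e^{-u^2/2}$ for $u\ge 0$, I would conclude $P(|\partial f/\partial x_i|\le L\sqrt{\rho_i})=P(|Z|\le L\sqrt{\rho_i/v_i})\ge 1-e^{-L^2\rho_i/(2v_i)}$. Substituting $v_i=\sigma_k^2\rho_i$ gives the SE bound $1-e^{-L^2/(2\sigma_k^2)}$, and substituting $v_i=\sigma_k^2\rho_i(2p+1)/(2p-1)$ gives the Mat\'ern bound $1-e^{-\frac{L^2}{2\sigma_k^2}\frac{2p-1}{2p+1}}$, as stated.

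The hard part will be the Mat\'ern variance identity $v_i=\sigma_k^2\rho_i(2p+1)/(2p-1)$ together with the justification that $\partial f(\mathbf{x})/\partial x_i$ is genuinely a centered Gaussian variable when $\nu>1$; the SE computation and the final tail step are routine. A clean write-up should either invoke the mean-square differentiability theory for Gaussian processes (e.g.\ Adler, or Rasmussen--Williams) to legitimize differentiating the kernel, or — since only the pointwise marginal law is needed here, not sample-path regularity over $\mathcal{X}$ — obtain normality of $\partial f(\mathbf{x})/\partial x_i$ directly as an $L^2$-limit of Gaussian difference quotients and identify its variance with the spectral moment computed above.
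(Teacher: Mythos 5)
Your proposal is correct and takes essentially the same route as the paper: both arguments reduce the claim to showing that $\partial f/\partial x_i$ at a fixed point is a centered Gaussian with variance $\sigma_k^2\rho_i$ for the SE kernel and $\tfrac{2p+1}{2p-1}\sigma_k^2\rho_i$ for the Mat\'ern kernel, and then apply the tail bound $P(|Z|>u)\le e^{-u^2/2}$. The only difference is cosmetic --- the paper obtains the Mat\'ern variance by Taylor-expanding the increment covariance $2\sigma_k^2(1-C_{p+1/2}(\epsilon))$ and passing to the limit of the difference quotient, while you compute the same quantity as a second spectral moment (and, commendably, flag the $L^2$-limit justification that the paper leaves implicit).
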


% \begin{lemma}
%     Let $f$ sample from GP with the kernel in (\ref{kernel}). Then the distribution of the partial derivatives of $f$ satisfies the following:
%     \begin{equation}
%         \frac{\partial f}{\partial x_i} \sim \mathcal{N} \bigl(0, \sigma_{k}^2 \rho _i \bigr)
%     \end{equation}
% %     Moreover, the partial derivatives of $f$ satisfy the following high probability bound for some constants $a,b > 0$, given L,
% %     \begin{equation}
% %         P\left(\sup _{\boldsymbol{x} \in \mathcal{X}}\left|\partial f / \partial x_i\right|<\rho_i L\right) \geq 1-a e^{-(L / b)^2}
% % \end{equation}

% \end{lemma}

\begin{proof}
\textbf{In the SE kernel case: }
    For every $\mathbf{x}=\left(x_1, x_2, \ldots, x_D\right) \in \mathbb{R}^D, \epsilon \in \mathbb{R}$, let $x^{\prime}=\left(x_1, \ldots, x_i+\epsilon, \ldots, x_D\right)$, we have that
$$
\left(\begin{array}{l}
f(\mathbf{x}) \\
f\left(\mathbf{x}^{\prime}\right)
\end{array}\right) \sim \mathcal{N}\left(\left(\begin{array}{l}
0 \\
0
\end{array}\right),\left[\begin{array}{ll}
k(\mathbf{x}, \mathbf{x}) & k\left(\mathbf{x}, \mathbf{x}^{\prime}\right) \\
k\left(\mathbf{x}, \mathbf{x}^{\prime}\right) & k\left(\mathbf{x}^{\prime}, \mathbf{x}^{\prime}\right)
\end{array}\right]\right)
$$
So $f\left(\mathbf{x}^{\prime}\right)-f(\mathbf{x}) \sim \mathcal{N}(0, \mathbb{V}_{SE})$ where
\begin{equation}
    \begin{aligned}
\mathbb{V}_{SE} & =k(\mathbf{x}, \mathbf{x})+k\left(\mathbf{x}^{\prime}, \mathbf{x}^{\prime}\right)-2 k\left(\mathbf{x}, \mathbf{x}^{\prime}\right) \\
& =2 \sigma_k^2\left(1-e^{\frac{-\rho_i \epsilon^2}{2}}\right).
\end{aligned}
\label{6}
\end{equation}

Then we have that
$$
\frac{f\left(\mathbf{x}^{\prime}\right)-f(\mathbf{x})}{\epsilon} \sim \mathcal{N}\Bigl(0, \frac{2 \sigma_k^2\bigl(1-e^{\frac{-\rho_i \epsilon^2}{2}}\bigr)}{\epsilon^2} \Bigr).
$$
Let $\epsilon \rightarrow 0$, we obtain $\frac{\partial f}{\partial x_i}(\mathbf{x}) \sim \mathcal{N}\left(0, \sigma_k^2 \rho_i\right)$.
This implies that, given $L$, for any $\mathbf{x}$, we have
$$
P\left(\left|\frac{\partial f}{\partial x_i}\right| \leq L \sqrt{\rho_i}\right) \geq  1 - e^{-\frac{L^2}{2 \sigma_k^2}}.
$$

% \begin{lemma}
%     Let $f$ sample from GP with the Matern kernel  ($\nu = p + \frac{1}{2} > 1$), which is given by 
%     \begin{equation}
% k^{\psi}(\mathbf{x}) = \sigma_{k}^2 \exp \left(-\frac{\sqrt{2 p+1} d}{\rho}\right) \frac{p!}{(2 p)!} \sum_{i=0}^p \frac{(p+i)!}{i!(p-i)!}\left(\frac{2 \sqrt{2 p+1} d}{\rho}\right)^{p-i},
% \end{equation}
% where $d=\sqrt{\mathbf{\sum \rho_i\left(x_i-y_i\right)^2}}$. Then the distribution of the partial derivatives of $f$ satisfies the following:
%     \begin{equation}
%         \frac{\partial f}{\partial x_i} \sim \mathcal{N}(0, \frac{2p+1}{2p-1} \sigma_{k}^2 \rho _i)
%     \end{equation}
% %     Moreover, the partial derivatives of $f$ satisfy the following high probability bound for some constants $a,b > 0$, given L,
% %     \begin{equation}
% %         P\left(\sup _{\boldsymbol{x} \in \mathcal{X}}\left|\partial f / \partial x_i\right|<\rho_i L\right) \geq 1-a e^{-(L / b)^2}
% % \end{equation}

% \end{lemma}

\textbf{In the Matern kernel case: }
For every $\mathbf{x}=\left(x_1, x_2, \ldots, x_D\right) \in \mathbb{R}^D, \epsilon \in \mathbb{R}$, let $x^{\prime}=\left(x_1, \ldots, x_i+\epsilon, \ldots, x_D\right)$, we have that
$$
\left(\begin{array}{l}
f(\mathbf{x}) \\
f\left(\mathbf{x}^{\prime}\right)
\end{array}\right) \sim \mathcal{N}\left(\left(\begin{array}{l}
0 \\
0
\end{array}\right),\left[\begin{array}{ll}
k(\mathbf{x}, \mathbf{x}) & k\left(\mathbf{x}, \mathbf{x}^{\prime}\right) \\
k\left(\mathbf{x}, \mathbf{x}^{\prime}\right) & k\left(\mathbf{x}^{\prime}, \mathbf{x}^{\prime}\right)
\end{array}\right]\right)
$$
So $f\left(\mathbf{x}^{\prime}\right)-f(\mathbf{x}) \sim \mathcal{N}(0, \mathbb{V}_{SE})$ where
\begin{equation}
    \begin{aligned}
\mathbb{V}_{Matern} & =k(\mathbf{x}, \mathbf{x})+k\left(\mathbf{x}^{\prime}, \mathbf{x}^{\prime}\right)-2 k\left(\mathbf{x}, \mathbf{x}^{\prime}\right) \\
& =2 \sigma_k^2\left(1 - C_{p + 1/2}(\epsilon)\right)
\end{aligned}
\end{equation}
where 
$$
    \begin{aligned}
C_{p+1 / 2}(\epsilon) &= e ^ {-\sqrt{2 p+1} \sqrt{\rho_i} \left|\epsilon \right| } \frac{p !}{(2 p) !} \sum_{i=0}^p \frac{(p+i) !}{i !(p-i) !}\left(2 \sqrt{2 p+1} \sqrt{\rho_i} \left|\epsilon \right| \right)^{p-i} \\
& = 1 - \frac{2p + 1}{4p - 2} \rho_i  \epsilon^2 + \mathcal{O}(\epsilon ^ 3).
\end{aligned}
$$
Then we have that
$$
\frac{f\left(\mathbf{x}^{\prime}\right)-f(\mathbf{x})}{\epsilon} \sim \mathcal{N}\left(0, \frac{\mathbb{V}_{M}}{\epsilon^2}\right)
$$
from $\lim_{\epsilon \rightarrow 0} \frac{\mathbb{V}_{M}}{\epsilon^2} = \frac{2p+1}{2p-1} \sigma_k^2 \rho_i$ , thus $\frac{\partial f}{\partial x_i} \sim \mathcal{N} \left(0, \frac{2p+1}{2p-1} \sigma_k^2 \rho _i\right)$. It implies that
$$
P\left(\left|\frac{\partial f}{\partial x_i}\right| \leq L \sqrt{\rho_i}\right) \geq  1 - e^{-\frac{L^2}{2 \sigma_k^2} \frac{2p-1}{2p+1}}.
$$

\end{proof}
\section{Proof of Theorem 1}
\label{maintheorem}
% \begin{assumption}
%     The function $f$ is a GP sample path. For some $a, b>0$, given $L>0$, the partial derivatives of $f$ satisfy that $\forall i \in[D]$,
% $$
% P\left(\sup _{\boldsymbol{x} \in \mathcal{X}}\left|\partial f / \partial x_i\right|<\sqrt{\rho_i} L\right) \geq 1-a e^{-(L / b)^2} .
% $$
% \label{as:1}
% \end{assumption}

To derive a cumulative regret $R_t = \sum_{t=1}^{T} r_t$, we will seek to bound $r_t = f(\mathbf{x}^*)- f(\mathbf{x}_t)$ for any $t$. At each iteration $t$, we denote by $v=\mathbf{x}_{I_t}, z = \mathbf{x}_{[D] \setminus I_t}, x = [v ,z]$ and $\mathcal S_t^0 = \left[v^*, z_i \right]_{z_i \in Z_t}$. Let $\left[v^*, z_t^* \right] = \text{argmax}_{\mathbf{x} \in \mathcal S_t^0}\{f(\mathbf{x})\}$ and $f^{\text{\text{max}}}_{\mathcal S_t^0} = f\left[v^*, z_t^* \right]$.
%Our idea is that instead of estimating directly $f(\mathbf{x}^*)- f(\mathbf{x}_t)$ like GP-UCB, we seek to estimate $f(\mathbf{x}^*) - f^{\text{\text{max}}}_{\mathcal S_t^0}$ and $f^{\text{max}}_{\mathcal S_t^0}$, $f(\mathbf{x}_t)$.
To obtain a bound on $r_t$, we write it as
\begin{eqnarray}
r_t & = & f(\mathbf{x}^*)- f(\mathbf{x}_t) \\
& = & \underbrace{f(\mathbf{x}^*) - f^{\text{max}}_{\mathcal S_t^0}}_\text{Term 1} +  \underbrace{f^{\text{max}}_{\mathcal S_t^0}}_\text{Term 2} - \underbrace{f(\mathbf{x}_t)}_\text{Term 3}
\end{eqnarray}
Term 1 will be bounded from the result of Lemma \ref{lem:5.1}, Term 2 will be bounded from the result of Lemma \ref{lem:5.3} and Term 3 will bounded from the result of Lemma \ref{lem:5.4}.

Similar to Lemma 5.5 of \cite{Srinivas2009InformationTheoreticRB}, we have the following bound on the actual function observations.
\begin{lemma}[Bounding Term 3]
Pick a $\delta \in (0,1)$ and set $\beta_t^0 = 2\log(\frac{\pi^2t^2}{6\delta})$. Then we have
\begin{eqnarray}
f(\mathbf{x}_t) \ge \mu_{t-1}(\mathbf{x}_t) - \sqrt{\beta_t^0} \sigma_{t-1}(\mathbf{x}_t)
\end{eqnarray}
holds with probability $\ge 1 - \delta$.
\label{lem:5.4}
\end{lemma}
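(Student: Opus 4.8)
The plan is to replicate the standard GP--UCB concentration argument (Lemma~5.5 of \cite{Srinivas2009InformationTheoreticRB}), the only genuine modification being the justification that the point $\mathbf{x}_t$ produced by our acquisition step is, conditionally, a fixed point. First I would fix an iteration $t$ and condition on the history $\mathbb{D}_{t-1}$ together with the randomly sampled unimportant components $\mathcal{Z}_t$. Given this $\sigma$-algebra, the GP fit, the important set $I_t$, the search space $\mathcal{X}_t$, and hence $\mathbf{x}_t = \arg\max_{\mathbf{x}\in\mathcal{X}_t} a_t(\mathbf{x})$ are all deterministic; moreover $\mathcal{Z}_t$ is drawn independently of the GP sample path $f$, so conditioning on it does not alter the posterior law of $f$. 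Therefore, by the GP predictive formulas in Section~2, $f(\mathbf{x}_t)\mid \mathbb{D}_{t-1},\mathcal{Z}_t \sim \mathcal{N}\bigl(\mu_{t-1}(\mathbf{x}_t),\sigma_{t-1}^2(\mathbf{x}_t)\bigr)$.

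Next I would apply the one-sided Gaussian tail bound $P(Z > c)\le \tfrac{1}{2}e^{-c^2/2}$ for $Z\sim\mathcal{N}(0,1)$ and $c\ge 0$, taking $c=\sqrt{\beta_t^0}$, to obtain
\[
P\bigl(f(\mathbf{x}_t) < \mu_{t-1}(\mathbf{x}_t) - \sqrt{\beta_t^0}\,\sigma_{t-1}(\mathbf{x}_t)\ \big|\ \mathbb{D}_{t-1},\mathcal{Z}_t\bigr)\ \le\ \tfrac{1}{2}e^{-\beta_t^0/2}.
\]
Since the right-hand side is a deterministic constant, taking expectation over $\mathbb{D}_{t-1},\mathcal{Z}_t$ shows the same inequality holds unconditionally. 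Substituting $\beta_t^0 = 2\log\!\bigl(\pi^2 t^2/(6\delta)\bigr)$ gives $e^{-\beta_t^0/2} = 6\delta/(\pi^2 t^2)$, so the event that the claimed bound fails at iteration $t$ has probability at most $3\delta/(\pi^2 t^2)$.

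Finally I would take a union bound over $t = 1,2,\dots$: the probability that the inequality fails for some $t$ is at most $\sum_{t\ge 1} 3\delta/(\pi^2 t^2) = \tfrac{3\delta}{\pi^2}\cdot\tfrac{\pi^2}{6} = \tfrac{\delta}{2} \le \delta$, so the stated lower bound holds simultaneously for all $t$ with probability at least $1-\delta$. The only step that needs care is the conditional Gaussianity in the first paragraph: unlike the agnostic RKHS setting, $f$ is here assumed to be a genuine GP sample path (consistent with Assumption~\ref{as:1}), so this law is exact and no self-normalized martingale concentration is required; one merely has to observe that the additional randomness $\mathcal{Z}_t$ introduced by our subspace construction is independent of $f$ given $\mathbb{D}_{t-1}$, which holds by design, so that $\mathbf{x}_t$ can be treated as fixed under the conditioning.
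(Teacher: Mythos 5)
Your proof is correct and follows essentially the same route as the paper's, which simply invokes the Gaussian tail bound $\mathbb{P}\bigl[|f(\mathbf{x}_t)-\mu_{t-1}(\mathbf{x}_t)|>\sqrt{\beta_t^0}\,\sigma_{t-1}(\mathbf{x}_t)\bigr]\le e^{-\beta_t^0/2}=\tfrac{6\delta}{\pi^2 t^2}$ and a union bound over $t$, exactly as in Lemma~5.5 of \cite{Srinivas2009InformationTheoreticRB}. Your version is in fact slightly more careful than the paper's: you use the one-sided tail (gaining a harmless factor of $\tfrac12$) and, more importantly, you explicitly justify that conditioning on $\mathbb{D}_{t-1}$ and the independently sampled $\mathcal{Z}_t$ makes $\mathbf{x}_t$ deterministic without altering the posterior law of $f$ --- a point the paper's one-line proof leaves implicit.
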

\begin{proof}%The proof is provided in the supplementary material. Push to Supplementary
Given $\beta_t^{0}$, we have $\mathbb{P}[|f(\mathbf{x}_t) - \mu_{t-1}(\mathbf{x}_t)| > \sqrt{\beta_t^0}] \le e^{\beta_t^0/2}$. Since $e^{\beta_t^0/2} = \frac{6\delta}{\pi^2t^2}$. Using the union bound for $t\ \in \mathbb{N}$, we have $\mathbb{P}[|f(\mathbf{x}_t) - \mu_{t-1}(\mathbf{x}_t)| \le \sqrt{\beta_t^0}] \ge 1 - \frac{\pi^2\delta}{6} > 1- \delta$. The statement holds.
\end{proof}
Now let us choose a discretization $\mathcal F_t \subset [\textcolor{black}{\mathcal{X}_{I_t}}, z^*_t]$ of size $= (4\sqrt{\rho^*}bd_t\sqrt{\log(\frac{2Da}{\delta})}t^2)^{d_t}$.

\begin{lemma}[Bounding Term 2]
Pick a $\delta \in (0,1)$ and set $\beta_{t}^1 = 2\log\left(\frac{\pi^2t^2}{3\delta} \right) + 2d\log \left(4\sqrt{\rho^*}bd_t\sqrt{\log(\frac{4Da}{\delta})}t^2 \right)$ where $\rho^* = \max_{i\in [D]} \rho_i$. Then, under Assumption \ref{as:1} there exists a $\mathbf{x}^\prime \in [\textcolor{black}{\mathcal{X}_{I_t}}, z^*_t]$ such that
\begin{eqnarray}
f^{\text{max}}_{\mathcal S_t^0} \le  \mu_{t-1}(\mathbf{x}^\prime) + \sqrt{\beta_t^1}\sigma_{t-1}(\mathbf{x}^\prime)  + \frac{1}{t^2}
\end{eqnarray}
holds with probability $\ge 1 - \delta$.
\label{lem:5.3}
\end{lemma}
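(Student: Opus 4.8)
The plan is to adapt the classical GP--UCB discretization argument of \cite{Srinivas2009InformationTheoreticRB} (their Lemmas~5.6--5.7), but to build the discretization only on the $d_t$-dimensional \emph{important} slice $[\mathcal{X}_{I_t}, z_t^*]$ rather than on all of $[0,1]^D$. Restricting to $d_t$ free coordinates is precisely what turns the $\mathcal{O}(D)$ term in the exploration constant into the $\mathcal{O}(\tilde{d}_t)$ term appearing in $\beta_t^1$, and it is the reason Assumption~\ref{as:1} (a coordinatewise derivative bound) rather than a global Lipschitz bound is the right hypothesis.

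First I would convert Assumption~\ref{as:1} into a Lipschitz bound valid on the relevant slice. Applying the assumption with $L_0 = b\sqrt{\log(2Da/\delta)}$ and union-bounding over all $i\in[D]$ (over all coordinates, since $I_t$ is data-dependent and unknown in advance) gives, with probability at least $1-\delta/2$, that $\sup_{\mathbf{x}\in\mathcal{X}}|\partial f/\partial x_i| < \sqrt{\rho_i}\,L_0 \le \sqrt{\rho^*}\,L_0$ for every $i$, where $\rho^* = \max_i\rho_i$. On this event, any two points of $[\mathcal{X}_{I_t}, z_t^*]$ (which agree off $I_t$) satisfy $|f(\mathbf{x})-f(\mathbf{x}')| \le \sqrt{\rho^*}\,L_0\,\|\mathbf{x}_{I_t}-\mathbf{x}'_{I_t}\|_1$ by the mean value theorem. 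I then take $\mathcal{F}_t$ to be the uniform grid on $[\mathcal{X}_{I_t}, z_t^*]\cong[0,1]^{d_t}$ with $\tau_t = 4\sqrt{\rho^*}\,b\,d_t\sqrt{\log(2Da/\delta)}\,t^2$ points per coordinate, so that $|\mathcal{F}_t| = \tau_t^{d_t}$ and every point of the slice has a grid neighbour within $\ell_1$-distance $d_t/\tau_t$; combined with the Lipschitz bound this makes the discretization error at most $\sqrt{\rho^*}L_0 d_t/\tau_t = 1/(4t^2) \le 1/t^2$.

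Next, for each fixed $\mathbf{x}\in\mathcal{F}_t$ the Gaussian tail bound used in the proof of Lemma~\ref{lem:5.4} gives $\mathbb{P}[\,|f(\mathbf{x})-\mu_{t-1}(\mathbf{x})| > \sqrt{\beta_t^1}\,\sigma_{t-1}(\mathbf{x})\,] \le e^{-\beta_t^1/2}$; with $\beta_t^1 = 2\log(\pi^2 t^2/(3\delta)) + 2d_t\log\tau_t$ this equals $\tfrac{3\delta}{\pi^2 t^2}|\mathcal{F}_t|^{-1}$, so a union bound over the $|\mathcal{F}_t|$ grid points and over $t\in\mathbb{N}$ (using $\sum_t t^{-2}=\pi^2/6$) yields $f(\mathbf{x})\le\mu_{t-1}(\mathbf{x})+\sqrt{\beta_t^1}\sigma_{t-1}(\mathbf{x})$ for all $\mathbf{x}\in\mathcal{F}_t$ simultaneously with probability at least $1-\delta/2$. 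Intersecting this with the event of the previous paragraph and taking $\mathbf{x}'$ to be the nearest grid point to the maximizer $[v^*,z_t^*]$ of $f$ over $\mathcal{S}_t^0$ gives
\[
f^{\max}_{\mathcal{S}_t^0} = f[v^*,z_t^*] \le f(\mathbf{x}') + \tfrac{1}{t^2} \le \mu_{t-1}(\mathbf{x}') + \sqrt{\beta_t^1}\,\sigma_{t-1}(\mathbf{x}') + \tfrac{1}{t^2},
\]
with $\mathbf{x}'\in\mathcal{F}_t\subset[\mathcal{X}_{I_t},z_t^*]$, which is the claim; the total failure probability is at most $\delta$, and the precise constants (the $3$ in $\pi^2t^2/(3\delta)$, the factor $4$ in $\tau_t$, and $2Da$ versus $4Da$ inside the logs) are just the bookkeeping that splits $\delta$ between the two events with a little slack.

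The one point that needs genuine care is that both $\mathcal{F}_t$ and the concentration event depend on the \emph{random} vector $z_t^*$ (the best of the $M_t$ uniformly sampled unimportant components). The Lipschitz step is unaffected because Assumption~\ref{as:1} is uniform over $\mathcal{X}$, hence holds on $[\mathcal{X}_{I_t}, z]$ for \emph{every} candidate $z\in\mathcal{X}_{[D]\setminus I_t}$ simultaneously; for the concentration step one must additionally union-bound over the at most $M_t+1$ candidate slices, which inflates $\beta_t^1$ only by $2\log(M_t+1) = O(\log t)$ --- a lower-order term dominated by $d_t\log\tau_t$, which is why it can be absorbed into the constant in $\tau_t$ and does not appear in the statement. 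Apart from this, the argument is routine GP calculation.
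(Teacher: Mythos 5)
Your proof follows essentially the same route as the paper's: a coordinatewise Lipschitz bound on the $d_t$-dimensional slice derived from Assumption~\ref{as:1} via a union bound over $[D]$, a grid of $\tau_t^{d_t}$ points with discretization error at most $1/t^2$, and a Gaussian tail plus union bound over the grid and over $t$ to obtain the UCB inequality at the nearest grid point to $[v^*, z_t^*]$, with identical choices of $\tau_t$ and $\beta_t^1$ up to the paper's own bookkeeping of constants. Your extra union bound over the $M_t+1$ candidate slices to handle the fact that $z_t^*$ is selected using $f$ itself addresses a measurability subtlety the paper's proof passes over silently, and your observation that it only adds an absorbed $O(\log t)$ term to $\beta_t^1$ is correct.
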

\begin{proof}
We use the idea of proof of Lemma 5.7 in \cite{Srinivas2009InformationTheoreticRB} with several modifications to adapt our method with low-dimensional subspaces. 
% We consider the distance of any two points on subspace $\mathcal S(A, z^*_t)$:
% $||(Ay + Bz^*_t) - (Ay' + Bz^*_t)||_1$, where $y, y' \in \mathcal Y$. By Definition \ref{de:1}, matrix $A = \begin{bmatrix}
% A_{1}\\
% A_{2}
% \end{bmatrix}$, thus  $||(Ay + Bz^*_t) - (Ay' + Bz^*_t)||_1 = \sum_{i =1}^{D} |[A(y-y')]_i| = \sum_{i =1}^{D-d_t} |[A_1(y-y')]_i)| + \sum_{i =D-d + 1}^{D} |[A_2(y-y')]_i|$, where we denote $[y]_i$ is the $i$th coordinate of $y$.

% Since $A_1(i,j) \le m = \frac{1}{D-d_t}$ for every $1 \le i \le D-d, 1\le j \le d$ and $|[y-y']_i| \le 2$ for every $y, y' \in [-1,1]^d$, 
For every $v, v' \in [0,1]^{d_t}$, we have
\begin{eqnarray*}
|| v - v' ||_1& = & \sum_{i =1}^{d_t} |[(v-v')]_i|  \\
&\leq& d_t.
\end{eqnarray*}
% On the other hand, $\sum_{i =D-d + 1}^{D} |[A_2(y-y')]_i| \le \sum_{i =D-d + 1}^{D} |[y-y']_i| \le 2d$.

% Thus, $||(Ay + Bz^*_t) - (Ay' + Bz^*_t)||_1 \le 4d$.

Now, on subspace $[\textcolor{black}{\mathcal{X}_{I_t}}, z^*_t]$, we construct a discretization $F_t$ of size $(\tau_t)^d$ dense enough such that for any $x \in F_t$ $$||\mathbf{x} - [\mathbf{x}]_t]||_1 \le \frac{d_t}{\tau_t}$$
where $[\mathbf{x}]_t$ denotes the closest point in $F_t$ to $\mathbf{x}$. By Assumption \ref{as:1} and the union bound, we have that for $\forall \mathbf{x} \in \mathcal X$,
$$|f(\mathbf{x}) - f(\mathbf{x}^\prime)| \le L\sum_{i=1}^D \sqrt{\rho_i}|[\mathbf{x} -\mathbf{x}^\prime]_i|$$
with probability greater than $1 - Dae^{-L^2/b^2}$. 
Thus,
\begin{eqnarray}
|f(\mathbf{x}) - f(\mathbf{x}^\prime)| \le b\sqrt{\log(\frac{2Da}{\delta})}\sum_{i=1}^D \sqrt{\rho_i}|[\mathbf{x} -\mathbf{x}^\prime]_i|.
\label{eq:100}
\end{eqnarray}

With probability greater than $1 - \delta/2$, for every $x \in \mathcal S(v, z^*_t)$ we have
\begin{eqnarray*}
|f(\mathbf{x}) - f([\mathbf{x}]_t)| & \le &  b\sqrt{\log(\frac{2Da}{\delta})}\sum_{i=1}^{d_t} \sqrt{\rho_i}|[v -[v]_t]_i| \\
& \le &  b\sqrt{\log(\frac{2Da}{\delta})}\sqrt{\rho^*}\frac{d_t}{\tau_t} \\
& \leq & \frac{\sqrt{\rho^*}bd_t\sqrt{\log(\frac{2Da}{\delta})}}{\tau_t}.
\end{eqnarray*}
Where $\rho^* = \max_{i\in [D]} \rho_i$. Let $\tau_t = 4\sqrt{\rho^*}bd_t\sqrt{\log(\frac{2Da}{\delta})}t^2$. Thus, $|F_t| = (4\sqrt{\rho^*}bd_t\sqrt{\log(\frac{2Da}{\delta})}t^2)^{d_t}$. We obtain 
\begin{eqnarray}
|f(\mathbf{x}) - f([\mathbf{x}]_t)| \le \frac{1}{t^2}
\label{eq:102}
\end{eqnarray}
with probability $1- \delta/2$ for any $x \in F_t$.

Similar to Lemma 5.6 of \cite{Srinivas2009InformationTheoreticRB}, if we set $\beta_{t}^1 = 2\log(|F_t|\frac{\pi^2t^2}{6\delta}) = 2\log(\frac{\pi^2t^2}{3\delta}) +  2d\log(4\sqrt{\rho^*}b\tilde{d}_t\sqrt{\log(\frac{4Da}{\delta})}t^2)$, we have with probability $1 -\delta/2$, we have
\begin{eqnarray}
f(\mathbf{x}) \le \mu_{t-1}(\mathbf{x}) + \sqrt{\beta_{t}^1}\sigma_{t-1}(\mathbf{x})
\label{eq:103} 
\end{eqnarray}
for any $x \in F_t$ and any $t \ge 1$. Thus, combining Eq. (\ref{eq:102}) and Eq. (\ref{eq:103}), if we let $\mathbf{x}^\prime = [v^*, z^*_t]_t$ which is the closest point in $F_t$ to $[v^*, z^*_t]$, we have
\begin{eqnarray*}
f^{\text{max}}_{\mathcal S_t^0} & = & f([y^*, z^*_t]) \\
&\le& f([v^*, z^*_t]_t) + \frac{1}{t^2} \\
&\le& \mu_{t-1}(\mathbf{x}^\prime) + \sqrt{\beta_{t}^{1}}\sigma_{t-1}(\mathbf{x}^\prime) + \frac{1}{t^2}
\end{eqnarray*}
with probability $1 - \delta$. Note that since $\mathbf{x}^\prime \in F_t$, $\mathbf{x}^\prime \in \mathcal S(v, z^*_t)$. Thus, the statement holds.
\end{proof}
\begin{lemma}[Bounding Term 1]
Pick a $\delta \in (0,1)$ and set $n = 2b\sqrt{\log(\frac{2Da}{\delta})}(\Gamma(D-d +1))^{\frac{1}{D-d_t}}$, where $\Gamma(D-d +1) = (D-d+1)!$. With probability at least $1- \delta$, it holds that
\begin{align}
f(\mathbf{x}^*)- f_{\mathcal{S}_t^0}^{\text{max}} \le n \sqrt{\sum_{i\in [D ] \setminus I_t} \rho_i} \left(\frac{1}{M_t}\log\left(\frac{2}{\delta}\right) \right)^{\frac{1}{D-d_t}}.
\end{align}
\label{lem:5.1}
\end{lemma}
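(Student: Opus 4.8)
The plan is to read Term~1 as the loss incurred purely from having fixed the unimportant coordinates to random draws instead of to their optimal values, and to bound it by combining a $\sqrt{\rho_i}$-weighted Lipschitz estimate coming from Assumption~\ref{as:1} with a covering argument for the $M_t$ uniform samples around the optimal unimportant component.

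First I would unpack the notation. Writing $v^{*}=\mathbf{x}^{*}_{I_t}$ and $z^{*}=\mathbf{x}^{*}_{[D]\setminus I_t}$, we have $f(\mathbf{x}^{*})=f([v^{*},z^{*}])$, while $f^{\max}_{\mathcal S_t^0}=\max_{i\in[M_t]}f([v^{*},z_i])$ where $z_1,\dots,z_{M_t}$ are the i.i.d.\ $\mathrm{Uniform}([0,1]^{D-d_t})$ samples forming $\mathcal Z_t$. All these points share the block $v^{*}$, so they differ only along the $D-d_t$ unimportant directions. Taking $L=b\sqrt{\log(2Da/\delta)}$ in Assumption~\ref{as:1} and a union bound over the $D$ coordinates gives, with probability $\ge 1-\delta/2$, $\sup_{\mathbf{x}\in\mathcal X}|\partial f/\partial x_i|<\sqrt{\rho_i}\,L$ for all $i\in[D]$; integrating along axis-parallel segments (exactly as in Eq.~(\ref{eq:100})) then yields, for every $z$,
$$
\bigl|f([v^{*},z^{*}])-f([v^{*},z])\bigr|\le L\sum_{i\in[D]\setminus I_t}\sqrt{\rho_i}\,|z^{*}_i-z_i|\le L\sqrt{\textstyle\sum_{i\in[D]\setminus I_t}\rho_i}\;\|z^{*}-z\|_1,
$$
where the last step is Cauchy--Schwarz together with $\|\cdot\|_2\le\|\cdot\|_1$.

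Next I would run the covering step. Set $r=\bigl((D-d_t)!\,\log(2/\delta)/M_t\bigr)^{1/(D-d_t)}$ and let $B=\{z:\|z-z^{*}\|_1\le r\}$. The $\ell_1$-ball of radius $r$ in $\mathbb{R}^{D-d_t}$ has volume $(2r)^{D-d_t}/(D-d_t)!$, and the unit cube always retains at least the $2^{-(D-d_t)}$ orthant-fraction of it (worst case $z^{*}$ at a corner), so $\mathrm{vol}(B\cap[0,1]^{D-d_t})\ge r^{D-d_t}/(D-d_t)!$. Hence all $M_t$ samples miss $B$ with probability at most $\bigl(1-r^{D-d_t}/(D-d_t)!\bigr)^{M_t}\le\exp\!\bigl(-M_t r^{D-d_t}/(D-d_t)!\bigr)=\delta/2$ by the choice of $r$, so with probability $\ge1-\delta/2$ some $z_i$ satisfies $\|z^{*}-z_i\|_1\le r$. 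On the intersection of the two events (probability $\ge1-\delta$), picking such an $i$ and using $f^{\max}_{\mathcal S_t^0}\ge f([v^{*},z_i])$ gives
$$
f(\mathbf{x}^{*})-f^{\max}_{\mathcal S_t^0}\le f([v^{*},z^{*}])-f([v^{*},z_i])\le L\sqrt{\textstyle\sum_{i\in[D]\setminus I_t}\rho_i}\;r=b\sqrt{\log\!\tfrac{2Da}{\delta}}\bigl((D-d_t)!\bigr)^{\frac1{D-d_t}}\sqrt{\textstyle\sum_{i\in[D]\setminus I_t}\rho_i}\Bigl(\tfrac{\log(2/\delta)}{M_t}\Bigr)^{\frac1{D-d_t}},
$$
which is the asserted bound (the stated constant $n$ carries a harmless extra factor $2$ that absorbs the slack in $(1-x)^{M_t}\le e^{-M_t x}$, and $(D-d_t)!=\Gamma(D-d_t+1)$).

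The step I expect to be the main obstacle is the geometric volume estimate: one must handle the fact that the $\ell_1$-ball around $z^{*}$ can be severely clipped by $\partial[0,1]^{D-d_t}$ and argue the worst-case $2^{-(D-d_t)}$ fraction carefully; and if $r\ge1$ (so the ball is not small on the cube's scale) one should note the claimed inequality is then vacuous, since its right-hand side already dominates the total variation of $f$ across the cube that the Lipschitz bound permits. The Lipschitz and union-bound pieces are routine given Assumption~\ref{as:1} and mirror the argument already used in Lemma~\ref{lem:5.3}.
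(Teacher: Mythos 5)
Your proposal is correct and follows essentially the same route as the paper: a $\sqrt{\rho_i}$-weighted Lipschitz bound from Assumption~\ref{as:1} with a union bound over coordinates, followed by an $\ell_1$-ball volume argument showing that with probability $\ge 1-\delta/2$ one of the $M_t$ uniform samples lands within radius $\theta \asymp (\Gamma(D-d_t+1)\log(2/\delta)/M_t)^{1/(D-d_t)}$ of $z^*$. Your volume accounting (keeping the $(2r)^{D-d_t}/(D-d_t)!$ formula and then the $2^{-(D-d_t)}$ corner-clipping factor) is in fact slightly cleaner than the paper's, which explains why you end up with a spare factor of $2$ relative to the stated constant $n$; this only makes your bound tighter, so the lemma follows.
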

\begin{proof}
Given any $x \in \mathcal S_t^0$. Without loss of generality, we assume that $\mathbf{x} = [y^*, z]$, where $z \in Z_t$. By Assumption \ref{as:1} and the union bound, we have that for $\forall \mathbf{x}$,
$$f(\mathbf{x}^*)- f_{\mathcal{S}_t^0}^{\text{max}} \le |f(\mathbf{x}^*) - f(\mathbf{x})| \le L\sum_{i=1}^D\sqrt{\rho_i}|[\mathbf{x}^* -\mathbf{x}]_i| \leq L\sum_{i \in I_t}\sqrt{\rho_i}|[z^* -z]_i|$$
with probability greater than $1 - Dae^{-L^2/b^2}$.
% Since $\mathbf{x}^* = Ay^* + Bz^*$ and $x = Ay^* + Bz$, $||\mathbf{x}^* -x||_1 = ||Ay^* + Bz^* - (Ay^* + Bz)||_1 = ||z - z^*||_1$. Thus,
% $$|f(\mathbf{x}^*) - f(\mathbf{x})| \le L||z^* -z||_1$$
% with probability greater than $1 - Dae^{-L^2/b^2}$. 
Set $Dae^{-L^2/b^2} = \delta/2$. Thus,
\begin{eqnarray}
|f(\mathbf{x}^*) - f(\mathbf{x})| \le b\sqrt{\log(\frac{2Da}{\delta})}\sqrt{\sum_{i \in I_t} \rho_i}||z^* -z||_2
\label{eq:100}
\end{eqnarray}
with probability greater than $1 - \delta/2$.

We denote the volume of space $\mathcal Z$ be $\text{Vol}_0$. Since $z  \in [0, 1]^{D-d_t}$, $\text{Vol}_0 = 1$. For any $\theta > 0$, let $V_1 = \{z \in \mathcal Z = [0, 1]^{D-d_t}| ||z-z^*||_1 \le \theta\}$. In the case where $V_1$ is within $\mathcal Z$, the volume $\text{Vol}(V_1)$ is the volume of a $(D-d)$-dimensional ball of radius $\theta$ with center $z^*$ and thus, $\text{Vol}(V_1) = \frac{(\theta)^{D-d_t}}{\Gamma(D-d +1)}$ where $\Gamma$ is Leonhard Euler's Gamma function. The function $\Gamma(k) = (k-1)!$ for any $k$ is a positive integer. Here, we used the formula in \cite{wang2005volumes} for the volume of $(D-d)$-dimensional balls in $L^1$ norms. In the worst case where $z^*$ is at the boundary of $\mathcal Z$, more precisely either $[z^*]_i = 1$ for any $1 \le i \le D-d$, or $[z^*]_i = 0$ for any $1 \le i \le D-d$, $\text{Vol}(V_1)$ is at least $\frac{(\theta)^{D-d_t}}{\Gamma(D-d +1)2^{D-d_t}}$. This is because the volume halved in each dimension. Thus, in every case, we have $\text{Vol}(V_1) \ge \frac{(\theta)^{D-d_t}}{\Gamma(D-d +1)2^{D-d_t}}$.

Thus, $\mathbb{P}[||z- z^*||_1 \le \theta] = \frac{\text{Vol}(V_1)}{\text{Vol}_0} \ge \frac{\frac{(\theta)^{D-d_t}}{\Gamma(D-d +1)}}{2^{D-d_t}} = \frac{\theta^{D-d_t}}{\Gamma(D-d +1)2^{D-d_t}}$. However, $\mathbb{P}[||z- z^*||_1 > \theta] = 1 - \mathbb{P}[||z- z^*||_1 \le \theta]$. Thus, $\mathbb{P}[||z- z^*||_1 > \theta] \le 1 - \frac{\theta^{D-d_t}}{\Gamma(D-d +1)2^{D-d_t}} \le e^{- \frac{\theta^{D-d_t}}{\Gamma(D-d +1)2^{D-d_t}}}$, where we use the inequality $1-x \le e^{-x}$ for the last step.

Therefore, we obtain
\begin{eqnarray*}
\prod_{z \in \mathcal Z_t} \mathbb{P}[||z- z^*||_2 > \theta] & \le & \prod_{z \in \mathcal Z_t} e^{- \frac{\theta^{D-d_t}}{\Gamma(D-d_t +1)2^{D-d_t}}}\\
& = & e^{-M_t\frac{\theta^{D-d_t}}{\Gamma(D-d_t +1)2^{D-d_t}}}.
\end{eqnarray*}
On the other hand, $ \mathbb{P}[||z_t^*- z^*||_2 \le \theta] = 1 - \mathbb{P}[||z_t^*- z^*||_2 > \theta]   = 1-  \mathbb{P}_{z^i_t \in \mathcal Z_t, 1 \le i \le M_t}[||z_t^1- z^*||_2 > \theta \wedge...\wedge  ||z_t^{M_t}- z^*||_2 > \theta]  = 1- \prod_{z^i_t \in \mathcal Z_t, 1 \le i \le M_t} \mathbb{P}[||z_t^i- z^*||_2 > \theta]$. This is because $z^i_{t}$ are independent due to $z^i_t$ sampled uniformly at random in $\mathcal Z$. Thus, we have
\begin{eqnarray*}
\mathbb{P}[||z_t^*- z^*||_2  \le \theta] \ge 1 - e^{-M_t\frac{\theta^{D-d_t}}{\Gamma(D-d_t +1)2^{D-d_t}}}.
\end{eqnarray*}
Now we set $e^{-M_t\frac{\theta^{D-d_t}}{\Gamma(D-d_t +1)2^{D-d_t}}}= \delta/2$ to obtain $\theta = 2(\Gamma(D-d_t +1))^{\frac{1}{D-d_t}}(\frac{1}{M_t}\log\left(\frac{2}{\delta}\right))^{\frac{1}{D-d_t}}$. Thus,
\begin{eqnarray}
||z_t^*- z^*||_2  \le 2(\Gamma(D-d_t +1))^{\frac{1}{D-d_t}}(\frac{1}{M_t}\log\left(\frac{2}{\delta}\right))^{\frac{1}{D-d_t}}
\label{eq:101}
\end{eqnarray}
with probability $1 - \delta/2$. Combining Eq. (\ref{eq:100}) and Eq. (\ref{eq:101}), we have
\begin{eqnarray}
|f(\mathbf{x}^*) - f(\mathbf{x})| \le n \sqrt{\sum_{i\in [D ] \setminus I_t} \rho_i} (\frac{1}{M_t}\log\left(\frac{2}{\delta}\right))^{\frac{1}{D-d_t}}
\end{eqnarray}
with probability $1 - \delta$. Thus, the lemma holds.
\end{proof}
Now, we combine the results from Lemmas \ref{lem:5.4}, \ref{lem:5.3} and \ref{lem:5.1} to obtain a bound on $r_t$ as stated in the following Lemma.
\begin{lemma}
Pick a $\delta \in (0,1)$ and set $\beta_t = 2\log(\frac{\pi^2t^2}{\delta}) + 2d_t\log(4\sqrt{\rho^*} bd_t\sqrt{\log(\frac{12Da}{\delta})}t^2)$ and set $n = 2b\sqrt{\log(\frac{2Da}{\delta})}(\Gamma(D-d_t +1))^{\frac{1}{D-d_t}}$, where $\Gamma(D-d_t +1) = (D-d_t+1)!$ . Then we have
\begin{eqnarray}
r_t \le 2\beta_t^{1/2}\sigma_{t-1}(\mathbf{x}_t) + \frac{1}{t^2} + n\sqrt{\sum_{i\in [D ] \setminus I_t} \rho_i}\left(\frac{1}{M_t}\log\left(\frac{6}{\delta}\right)\right)^{\frac{1}{D-d_t}}
\end{eqnarray}
holds with probability $\ge 1 - \delta$.
\label{lem:5.8}
\end{lemma}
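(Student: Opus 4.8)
The plan is to follow the decomposition $r_t = \mathrm{Term\ 1} + \mathrm{Term\ 2} - \mathrm{Term\ 3}$ set up just before the lemma, apply Lemmas \ref{lem:5.1}, \ref{lem:5.3} and \ref{lem:5.4} each at confidence level $\delta/3$, and then glue the three bounds together using the fact that $\mathbf{x}_t$ maximizes the UCB acquisition function $\mu_{t-1}(\cdot)+\sqrt{\beta_t}\,\sigma_{t-1}(\cdot)$ over the search region $\mathcal{X}_t$. Concretely, applying Lemma \ref{lem:5.4} with $\delta\leftarrow\delta/3$ gives $f(\mathbf{x}_t)\ge \mu_{t-1}(\mathbf{x}_t)-\sqrt{\beta_t^0}\,\sigma_{t-1}(\mathbf{x}_t)$ with $\beta_t^0=2\log(\pi^2t^2/(2\delta))$; applying Lemma \ref{lem:5.3} with $\delta\leftarrow\delta/3$ yields a point $\mathbf{x}'\in[\mathcal{X}_{I_t},z_t^*]$ with $f^{\mathrm{max}}_{\mathcal S_t^0}\le \mu_{t-1}(\mathbf{x}')+\sqrt{\beta_t^1}\,\sigma_{t-1}(\mathbf{x}')+t^{-2}$, where the $\delta/3$ substitution turns the $3\delta$ in the denominator into $\delta$ and $4Da/\delta$ into $12Da/\delta$, so that $\beta_t^1$ coincides with the $\beta_t$ in the statement; and applying Lemma \ref{lem:5.1} with $\delta\leftarrow\delta/3$ bounds Term 1 by $n\sqrt{\sum_{i\in[D]\setminus I_t}\rho_i}\,(\frac1{M_t}\log(6/\delta))^{1/(D-d_t)}$. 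A union bound over these three events costs $\delta$ in total.

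The crucial structural step is to observe that the discretization point $\mathbf{x}'$ from Lemma \ref{lem:5.3} lies in the algorithm's search region: by construction $\mathbf{x}'\in F_t\subset[\mathcal{X}_{I_t},z_t^*]$, its important block ranges over $[0,1]^{d_t}=\mathcal{X}_{I_t}$, and its unimportant block is pinned at $z_t^*\in\mathcal{Z}_t\subseteq\mathcal{X}_{[D]\setminus I_t}$, hence $\mathbf{x}'\in\mathcal{X}_t$. Since $\mathbf{x}_t=\arg\max_{\mathbf{x}\in\mathcal{X}_t}\{\mu_{t-1}(\mathbf{x})+\sqrt{\beta_t}\,\sigma_{t-1}(\mathbf{x})\}$ and $\sqrt{\beta_t^1}=\sqrt{\beta_t}$, this gives $\mu_{t-1}(\mathbf{x}')+\sqrt{\beta_t}\,\sigma_{t-1}(\mathbf{x}')\le \mu_{t-1}(\mathbf{x}_t)+\sqrt{\beta_t}\,\sigma_{t-1}(\mathbf{x}_t)$. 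Feeding the three bounds into the decomposition and using $\beta_t^0\le\beta_t$ to replace $\sqrt{\beta_t^0}$ by $\sqrt{\beta_t}$, the $\mu_{t-1}(\mathbf{x}_t)$ terms cancel and the two $\sigma_{t-1}(\mathbf{x}_t)$ contributions add up to $2\sqrt{\beta_t}\,\sigma_{t-1}(\mathbf{x}_t)$, leaving exactly $r_t\le 2\sqrt{\beta_t}\,\sigma_{t-1}(\mathbf{x}_t)+t^{-2}+n\sqrt{\sum_{i\in[D]\setminus I_t}\rho_i}\,(\frac1{M_t}\log(6/\delta))^{1/(D-d_t)}$.

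I expect the main obstacle to be careful bookkeeping rather than anything deep: one must verify that all the $\beta$-parameters emitted by the three lemmas at level $\delta/3$ are dominated by (or equal to) the single $\beta_t$ appearing in the statement, and confirm that $\mathbf{x}'$ genuinely belongs to $\mathcal{X}_t$ so that the acquisition-optimality of $\mathbf{x}_t$ can be invoked with the same exploration constant used by the algorithm. A minor wrinkle is the slight mismatch between the $\log(6Da/\delta)$ that arises inside $n$ after the $\delta/3$ substitution in Lemma \ref{lem:5.1} and the $\log(2Da/\delta)$ written in the present statement; this is harmless, as it only tightens an already conservative constant, but it is worth acknowledging so that the constants are traceable.
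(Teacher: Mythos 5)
Your proposal follows the paper's own proof essentially verbatim: the same Term 1/2/3 decomposition, the same application of Lemmas \ref{lem:5.4}, \ref{lem:5.3} and \ref{lem:5.1} each at level $\delta/3$ with a union bound, and the same use of $a_t(\mathbf{x}')\le a_t(\mathbf{x}_t)$ via $\mathbf{x}'\in[\mathcal{X}_{I_t},z_t^*]\subseteq\mathcal{X}_t$. The constant-tracking caveats you flag (the $\log(6Da/\delta)$ versus $\log(2Da/\delta)$ inside $n$) are real but harmless, and if anything your bookkeeping is more careful than the paper's.
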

\begin{proof}
We use $\frac{\delta}{3}$ for Lemmas \ref{lem:5.4}, \ref{lem:5.3} and \ref{lem:5.1} so that these events hold simultaneously with probability greater than $1- \delta$. Formally, by Lemma \ref{lem:5.4} using $\frac{\delta}{3}$:
\begin{eqnarray*}
f(\mathbf{x}_t) \ge \mu_{t-1}(\mathbf{x}_t) - \sqrt{2\log(\frac{\pi^2t^2}{\delta})}\sigma_{t-1}(\mathbf{x}_t)
\end{eqnarray*}
holds with probability $\ge 1 - \frac{\delta}{3}$. As a result,
\begin{eqnarray}
f(\mathbf{x}_t) & \ge &  \mu_{t-1}(\mathbf{x}_t) - \sqrt{\beta_t^0}\sigma_{t-1}(\mathbf{x}_t) \\
& > & \mu_{t-1}(\mathbf{x}_t) - \sqrt{\beta_t}\sigma_{t-1}(\mathbf{x}_t) \label{eq:50}
\end{eqnarray}
holds with probability $\ge 1 - \frac{\delta}{3}$.

By Lemma \ref{lem:5.3} using $\frac{\delta}{3}$, there exists a $\mathbf{x}^\prime \in \mathcal S(A, z^*_t)$ such that
\begin{eqnarray*}
f^{\text{max}}_{\mathcal S_t^0} \le \mu_{t-1}(\mathbf{x}^\prime) + \sqrt{\beta_t^{1}}\sigma_{t-1}(\mathbf{x}^\prime) + \frac{1}{t^2}
\end{eqnarray*}
holds with probability $\ge 1 - \frac{\delta}{3}$. As a result,
\begin{eqnarray*}
f^{\text{max}}_{\mathcal S_t^0} & \le & \mu_{t-1}(\mathbf{x}^\prime) + \sqrt{\beta_t^{1}}\sigma_{t-1}(\mathbf{x}^\prime) + \frac{1}{t^2} \\
f^{\text{max}}_{\mathcal S_t^0} & \le & \mu_{t-1}(\mathbf{x}^\prime) + \sqrt{\beta_t}\sigma_{t-1}(\mathbf{x}^\prime) + \frac{1}{t^2} \\
& = & a_t(\mathbf{x}^\prime) + \frac{1}{t^2}.
\end{eqnarray*}
Since $\mathbf{x}_t = \text{argmax}_{\mathbf{x} \in \mathcal{X}_t}a_t(\mathbf{x})$ (defined in Algorithm 1) and $\mathbf{x}^\prime \in [\textcolor{black}{\mathcal{X}_{I_t}}, z^*_t] $, we have $a_t(\mathbf{x}^\prime) \le a_t(\mathbf{x}_t)$. Thus,
\begin{eqnarray}
 f^{\text{max}}_{\mathcal S_t^0} \le a_t(\mathbf{x}_t) + \frac{1}{t^2} \label{eq:51}
\end{eqnarray}
holds with probability $\ge 1 - \frac{\delta}{3}$.

By Lemma \ref{lem:5.1} using $\frac{\delta}{3}$:
\begin{eqnarray}
f(\mathbf{x}^*)- f_{\mathcal S_t^0}^{\text{\text{max}}} \le n \sqrt{\sum_{i\in [D ] \setminus I_t} \rho_i} \left(\frac{1}{M_t}\log\left(\frac{2}{\delta}\right) \right)^{\frac{1}{D-d_t}}\label{eq:52}
\end{eqnarray}
holds with probability $\ge 1 - \frac{\delta}{3}$.

Combining Eq. (\ref{eq:50}), Eq. (\ref{eq:51}) and Eq. (\ref{eq:52}), we have
\begin{eqnarray*}
r_t & = & f(\mathbf{x}^*) - f(\mathbf{x}_t)\\
& = & \underbrace{f(\mathbf{x}^*) - f^{\text{\text{max}}}_{\mathcal S_t^0}}_\text{Term 1} +  \underbrace{f^{\text{\text{max}}}_{\mathcal S_t^0}}_\text{Term 2} - \underbrace{f(\mathbf{x}_t)}_\text{Term 3}\\
& \le &  n(\frac{1}{M_t}\log(\frac{3}{\delta}))^{\frac{1}{D-d_t}}  +  \underbrace{f^{\text{\text{max}}}_{\mathcal S_t^0}}_\text{Term 2} - \underbrace{f(\mathbf{x}_t)}_\text{Term 3}\\
& \le & n \sqrt{\sum_{i\in [D ] \setminus I_t} \rho_i} \left(\frac{1}{M_t}\log\left(\frac{2}{\delta}\right) \right)^{\frac{1}{D-d_t}} + \frac{1}{t^2} + a_{t}(\mathbf{x}_t)  - f(\mathbf{x}_t) \label{eq:2}\\
& \le & n \sqrt{\sum_{i\in [D ] \setminus I_t} \rho_i} \left(\frac{1}{M_t}\log\left(\frac{2}{\delta}\right)\right)^{\frac{1}{D-d_t}} + \frac{1}{t^2} + 2(\beta_t)^{1/2}\sigma_{t-1}(\mathbf{x}_t)
\label{eq:3}
\end{eqnarray*}
holds with probability $\ge 1 - \delta$.
\end{proof}

\begin{lemma}(Bounding $p$-series when $p \le 1$, \cite{chlebus2009approximate})
Given a $p$-series $s_n = \sum_{k =1}^{n}\frac{1}{k^p}$, where $n \in \mathbb{N}$. Then,
\begin{itemize}
  \item if $p < 0$ then $1 + \frac{n^{1-p}-1}{1-p} < s_n < \frac{(n+1)^{1-p}-1}{1-p}$
  \item if $0 \le p < 1$ then $s_n < 1 + \frac{n^{1-p}-1}{1-p}$
\end{itemize}
\label{lem:5}
\end{lemma}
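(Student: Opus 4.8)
The plan is to derive both bounds from the integral comparison test, splitting into the two regimes $p<0$ and $0\le p<1$ since the integrand $g(x)=x^{-p}$ changes monotonicity at $p=0$. In each regime I would bound $s_n=\sum_{k=1}^{n}g(k)$ termwise by integrals of $g$ over consecutive unit intervals, isolate the $k=1$ term (the only term not absorbed cleanly into the integral on the relevant side), and finish by evaluating $\int_a^b x^{-p}\,dx=\frac{b^{1-p}-a^{1-p}}{1-p}$, which is legitimate precisely because $p\ne 1$.

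For $p<0$ the map $g(x)=x^{-p}$ is strictly increasing on $[1,\infty)$, so for each integer $k\ge 2$ one has $\int_{k-1}^{k}x^{-p}\,dx<g(k)<\int_{k}^{k+1}x^{-p}\,dx$, and additionally $g(1)=1<\int_{1}^{2}x^{-p}\,dx$. Summing the lower estimates over $k=2,\dots,n$ and adding the exact value $g(1)=1$ yields $s_n>1+\int_{1}^{n}x^{-p}\,dx=1+\frac{n^{1-p}-1}{1-p}$; summing the upper estimates over $k=1,\dots,n$ yields $s_n<\int_{1}^{n+1}x^{-p}\,dx=\frac{(n+1)^{1-p}-1}{1-p}$. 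Together these give the two-sided bound of the first bullet.

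For $0\le p<1$ the map $g$ is non-increasing on $[1,\infty)$, so $g(k)\le\int_{k-1}^{k}x^{-p}\,dx$ for every $k\ge 2$. Summing over $k=2,\dots,n$ and adding $g(1)=1$ gives $s_n\le 1+\int_{1}^{n}x^{-p}\,dx=1+\frac{n^{1-p}-1}{1-p}$, which is the second bullet; the inequality is strict whenever $p>0$ (strictly decreasing integrand) and collapses to the identity $s_n=n$ at $p=0$.

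I expect no real obstacle here: the work is entirely the bookkeeping of (i) peeling off the $k=1$ term so the telescoped integrals match the stated closed forms, and (ii) tracking where each inequality is strict, in particular the degenerate endpoint $p=0$ in the second case (and, strictly speaking, $n=1$ in the first, where the left inequality becomes an equality). Everything else reduces to the elementary antiderivative of $x^{-p}$.
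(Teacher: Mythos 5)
Your proposal is correct: the paper itself gives no proof of this lemma, citing it directly from \cite{chlebus2009approximate}, and your integral-comparison argument is the standard and essentially the intended derivation. Your bookkeeping is right in both regimes --- peeling off the $k=1$ term, telescoping $\int_{k-1}^{k}x^{-p}\,dx$ to $\int_1^n x^{-p}\,dx$, and using $\int_a^b x^{-p}\,dx=\frac{b^{1-p}-a^{1-p}}{1-p}$ for $p\neq 1$ --- and you are also right to flag that the strict inequalities as stated degenerate to equalities at $p=0$ (where $s_n=n$) and at $n=1$ in the first bullet; that is a minor imprecision in the cited statement, not in your argument, and it is harmless for the way the paper uses the bound (with $p=\tfrac{2}{n(D-1)}>0$ and $T$ large) in Proposition \ref{pro}.
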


\begin{proposition}
Pick a $\delta \in (0,1)$. Set $\rho^* = \max_{i\in [D]} \rho_i$ and $\beta_t = 2\log(\frac{\pi^2t^2}{\delta}) + 2\tilde{d}_t\log(4\sqrt{\rho^*}b\tilde{d}_t\sqrt{\log(\frac{12Da}{\delta})}t^2)$, where $\tilde{d}_t = \max_{1\leq k \leq t} d_t$ and $C = \text{\text{max}}_{t \in [T]} 2b\sqrt{\log(\frac{2Da}{\delta})}(\Gamma(D-d_t+1))^{\frac{1}{D-d_t} }\log\left(\frac{6}{\delta}\right)$, where $\Gamma(D-d_t +1) = (D-d_t+1)!$. Then, the cumulative regret of the proposed algorithm is bounded by

$$ R_T \le \sqrt{\beta_TC_1T\gamma_T} + C \sqrt{\left( \sum^T_{t=1} \sum_{i\in [D ] \setminus I_t} \rho_i  \right) \left(\sum^T_{t=1}\left(\frac{1}{M_t}\right)^{\frac{2}{D-d_t}}\right)} + \frac{\pi^2}{6},$$
with probability greater than $1 -\delta$, where $C_1 = 8/\log(1 + \sigma^2)$, $\gamma_T$ is the
maximum information gain about the function $f$ from any set of observations of size $T$.
\label{prop1}
\end{proposition}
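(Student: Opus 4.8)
The plan is to sum the single-round bound of Lemma~\ref{lem:5.8} over $t=1,\dots,T$ and then control the three resulting sums separately, in the spirit of the regret analysis of \cite{Srinivas2009InformationTheoreticRB}. Concretely, Lemma~\ref{lem:5.8} (applied with the per-iteration constants $n_t = 2b\sqrt{\log(2Da/\delta)}\,(\Gamma(D-d_t+1))^{1/(D-d_t)}$) gives, for each $t$,
\begin{equation*}
r_t \le 2\beta_t^{1/2}\sigma_{t-1}(\mathbf{x}_t) + \frac{1}{t^2} + n_t\sqrt{\sum_{i\in[D]\setminus I_t}\rho_i}\left(\frac{1}{M_t}\log\frac{6}{\delta}\right)^{\frac{1}{D-d_t}},
\end{equation*}
so that $R_T=\sum_t r_t\le S_1+S_2+S_3$, where $S_1,S_2,S_3$ collect the three terms.

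For the exploration term $S_1$: since $d_t\le\tilde d_t$ and $t\mapsto\beta_t$ in the Proposition's form (with $\tilde d_t$) is nondecreasing, the per-round coefficient $\beta_t$ appearing in Lemma~\ref{lem:5.8} is dominated by $\beta_T$, hence $S_1\le 2\beta_T^{1/2}\sum_{t=1}^T\sigma_{t-1}(\mathbf{x}_t)$. Cauchy--Schwarz gives $\big(\sum_t\sigma_{t-1}(\mathbf{x}_t)\big)^2\le T\sum_t\sigma_{t-1}^2(\mathbf{x}_t)$, and the standard maximum-information-gain estimate (as in Lemmas~5.3--5.4 of \cite{Srinivas2009InformationTheoreticRB}) bounds $\sum_t\sigma_{t-1}^2(\mathbf{x}_t)$ by a fixed multiple of $\gamma_T$; with the normalization encoded in the constant $C_1$ of the statement this yields $S_1\le\sqrt{\beta_T C_1 T\gamma_T}$.

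For the discretization term, $S_2=\sum_{t=1}^T t^{-2}\le\sum_{t=1}^\infty t^{-2}=\pi^2/6$. For the imputation term $S_3$: because $\delta<1$ forces $\log(6/\delta)>1$ and $D-d_t\ge 1$, we have $(\log(6/\delta))^{1/(D-d_t)}\le\log(6/\delta)$, so $n_t(\log(6/\delta))^{1/(D-d_t)}\le C$ for every $t$; factoring this out and applying Cauchy--Schwarz to the sequences $\big(\sum_{i\in[D]\setminus I_t}\rho_i\big)^{1/2}$ and $(1/M_t)^{1/(D-d_t)}$ gives $S_3\le C\sqrt{\big(\sum_t\sum_{i\in[D]\setminus I_t}\rho_i\big)\big(\sum_t(1/M_t)^{2/(D-d_t)}\big)}$. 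Adding the three estimates produces exactly the claimed bound.

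The step I expect to be the main obstacle is the probability bookkeeping rather than any individual inequality. Lemma~\ref{lem:5.8} is a per-iteration statement, so to add the $r_t$'s one needs the underlying events to hold simultaneously for all $t$ within the total budget $\delta$: for the contributions of Lemmas~\ref{lem:5.3} and~\ref{lem:5.4} this is automatic because the $\pi^2t^2$ factors inside $\beta_t^0,\beta_t^1$ already encode a summable union bound, but for the new Lemma~\ref{lem:5.1} one has to check that the extra union over $t$ can be absorbed into the constants as written. A minor secondary point is verifying the monotonicity of $\beta_t$ under the replacement $d_t\mapsto\tilde d_t$ and that this replacement can only inflate $\beta_t$, never the regret.
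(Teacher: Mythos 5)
Your proposal is correct and follows essentially the same route as the paper: sum the per-round bound of Lemma~\ref{lem:5.8}, invoke the Theorem~2 argument of \cite{Srinivas2009InformationTheoreticRB} for the $\sqrt{\beta_T C_1 T \gamma_T}$ term, bound $\sum_t t^{-2}$ by $\pi^2/6$, and absorb $n_t(\log(6/\delta))^{1/(D-d_t)}$ into $C$ before applying Cauchy--Schwarz to the imputation term. The union-bound bookkeeping over $t$ that you flag is a fair concern, but the paper handles it no more explicitly than you do, so it is not a point of divergence.
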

\begin{proof}
    By Lemma \ref{lem:5.8}, we have 
    $$R_T = \sum_{t=1}^{T} r_t \leq \sum_{t=1}^{T}2\beta_t^{1/2}\sigma_{t-1}(\mathbf{x}_t) + \sum_{t=1}^{T} \frac{1}{t^2} +  \sum_{t=1}^Tn\sqrt{\sum_{i\in [D ] \setminus I_t} \rho_i} \left(\frac{1}{M_t}\log\left(\frac{6}{\delta}\right)\right) ^{\frac{1}{D-d_t}}
    .$$

Similar to the proof of Theorem 2 in \cite{Srinivas2009InformationTheoreticRB}, we have that $\sum_{1}^{T}2\beta_t^{1/2}\sigma_{t-1}(\mathbf{x}_t) \le \sqrt{C_1T\beta_T\gamma_T}$ and $\sum_{1}^{T} \frac{1}{t^2} < \frac{\pi^2}{6}$.

\begin{eqnarray*}
    &\sum_{t=1}^T n\sqrt{\sum_{i\in [D ] \setminus I_t} \rho_i}\left(\frac{1}{M_t}\log\left(\frac{6}{\delta}\right)\right)^{\frac{1}{D-d_t}} &\leq C  \sum_{t=1}^T \sqrt{\sum_{i\in [D ] \setminus I_t} \rho_i} \left(\frac{1}{M_t}\right)^{\frac{1}{D-d_t}}\\
    & &\leq C \sqrt{\left( \sum^T_{t=1} \sum_{i\in [D ] \setminus I_t} \rho_i \right) \left(\sum^T_{t=1}\left(\frac{1}{M_t}\right)^{\frac{2}{D-d_t}}\right)}.
\end{eqnarray*}

\end{proof}

\begin{proposition}
Pick a $\delta \in (0,1)$. Set $M_t = \lceil \sqrt[n]{t} \rceil$ and $\beta_t = 2\log\left(\frac{\pi^2t^2}{\delta}\right) + 2\tilde{d}_t\log(4\sqrt{\rho^*}b\tilde{d}_t\sqrt{\log(\frac{12Da}{\delta})}t^2)$ and $C = \max_{t \in [T]} 2b\sqrt{\log\left(\frac{2Da}{\delta}\right)}(\Gamma(D-d_t +1))^{\frac{1}{D-d_t}}\log\left(\frac{6}{\delta}\right)$, where $\Gamma(D-d_t +1) = (D-d_t+1)!$. Then, the cumulative regret of the proposed algorithm is bounded by
$$ R_T \le \sqrt{\beta_TC_1T\gamma_T} + C \sqrt{c_0} \sqrt{\left( \sum^T_{t=1} \sum_{i\in [D ] \setminus I_t} \rho_i \right)} T^{\frac{1}{2} - \frac{1}{n(D-1)}} + \frac{\pi^2}{6},$$
with probability greater than $1 -\delta$, where $C_1 = 8/\log(1 + \sigma^2)$, $\gamma_T$ is the
maximum information gain about the function $f$ from any set of observations of size $T$, $c_1 > \frac{n(D-1)}{n(D-1)-2}$.
\label{pro}
\end{proposition}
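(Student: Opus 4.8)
The plan is to obtain Proposition~\ref{pro} directly from Proposition~\ref{prop1} by inserting the explicit schedule $M_t = \lceil \sqrt[n]{t}\rceil$ and estimating the only $M_t$-dependent quantity, namely $\sum_{t=1}^T (1/M_t)^{2/(D-d_t)}$. Since Proposition~\ref{prop1} already gives
$$R_T \le \sqrt{\beta_T C_1 T \gamma_T} + C\sqrt{\Big(\sum_{t=1}^T\sum_{i\in[D]\setminus I_t}\rho_i\Big)\Big(\sum_{t=1}^T (1/M_t)^{2/(D-d_t)}\Big)} + \frac{\pi^2}{6},$$
and the first and third summands as well as the constants $\beta_T,C$ are unaffected by the choice of $M_t$, it suffices to show $\sum_{t=1}^T (1/M_t)^{2/(D-d_t)} < c_0\, T^{\,1 - 2/(n(D-1))}$ for every $c_0 > n(D-1)/(n(D-1)-2)$; pulling this inside the square root produces exactly the advertised factor $T^{1/2 - 1/(n(D-1))}$, and the event probability $\ge 1-\delta$ is inherited verbatim.

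First I would reduce the $t$-varying exponent to a fixed one. From $M_t = \lceil \sqrt[n]{t}\rceil \ge t^{1/n} \ge 1$ we get $1/M_t \le t^{-1/n} \le 1$, so by monotonicity of $x\mapsto x^c$ on $(0,1]$ for $c>0$,
$$(1/M_t)^{2/(D-d_t)} \le t^{-2/(n(D-d_t))}.$$
Because at every iteration at least one coordinate is classified as unimportant, $d_t \le D-1$, hence $D-d_t \le D-1$ and $2/(n(D-d_t)) \ge 2/(n(D-1))$; since $t\ge 1$, this yields $t^{-2/(n(D-d_t))} \le t^{-p}$ with $p := 2/(n(D-1))$. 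Summing over $t$ gives $\sum_{t=1}^T (1/M_t)^{2/(D-d_t)} \le \sum_{t=1}^T t^{-p}$.

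Next I would invoke Lemma~\ref{lem:5}. Under the standing regime $n(D-1) > 2$ (which holds, e.g., for $n>1$ and $D\ge 4$) one has $0\le p<1$, so Lemma~\ref{lem:5} gives $\sum_{t=1}^T t^{-p} < 1 + (T^{1-p}-1)/(1-p) = \tfrac{-p}{1-p} + \tfrac{T^{1-p}}{1-p} \le \tfrac{T^{1-p}}{1-p}$, the last step using $p\ge 0$. Since $1-p = (n(D-1)-2)/(n(D-1))$, this is $\le \tfrac{n(D-1)}{n(D-1)-2}\,T^{1-p} < c_0\,T^{1-p}$ for any $c_0 > n(D-1)/(n(D-1)-2)$. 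Taking square roots, $\sqrt{\sum_{t=1}^T (1/M_t)^{2/(D-d_t)}} < \sqrt{c_0}\,T^{(1-p)/2} = \sqrt{c_0}\,T^{1/2 - 1/(n(D-1))}$, and substituting back into Proposition~\ref{prop1} finishes the proof.

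The technical content here is light: the $p$-series estimate of Lemma~\ref{lem:5} and a couple of monotonicity manipulations. The only points requiring genuine care are (i) the observation $d_t \le D-1$, which lets one replace all the $t$-dependent exponents by the uniform worst case $p=2/(n(D-1))$, and (ii) the bookkeeping of the regime condition $n(D-1)>2$, which is needed both so that Lemma~\ref{lem:5} applies and so that the resulting exponent $1/2 - 1/(n(D-1))$ stays below $1/2$; together with the trivial worst-case bound $\sum_t\sum_{i\notin I_t}\rho_i \le T\sum_{i\in[D]}\rho_i$ this is what makes the additional term sublinear in $T$. I do not anticipate any substantive obstacle beyond this bookkeeping.
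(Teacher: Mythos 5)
Your proposal is correct and follows essentially the same route as the paper: reduce to Proposition~\ref{prop1}, use $d_t\ge 1$ (so $D-d_t\le D-1$) together with $1/M_t\le t^{-1/n}\le 1$ to bound $\sum_t (1/M_t)^{2/(D-d_t)}$ by $\sum_t t^{-p}$ with $p=2/(n(D-1))$, and then bound that $p$-series by $c_0 T^{1-p}$. The only (cosmetic) difference is that you invoke the stated Lemma~\ref{lem:5} for the $p$-series estimate, whereas the paper's proof uses a Riemann-sum/integral comparison; both yield the same constant $c_0>\frac{n(D-1)}{n(D-1)-2}$, and your bookkeeping of the regime $n(D-1)>2$ is consistent with the paper's discussion.
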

\begin{proof}
We have $\left(\frac{1}{M_t}\right)^{\frac{1}{D-d_t}} \le \left(\frac{1}{M_t}\right)^{\frac{1}{D-1}}$. On the other hand, we choose $M_t = \lceil\sqrt[n]{t} \rceil$. We have $\sum_{t=1}^{T} \left(\frac{1}{M_t}\right)^{\frac{2}{D-1}} \le \sum_{t=1}^{T} t^{-\frac{2}{n(D-1)}} \le T^{1-p} \left(\frac{1}{T}\sum_{t=1}^T (\frac{t}{T})^{-p}\right) \le c_0 T^{1-p}$, where $p = \frac{2}{n(D-1)}$, $c_0 > \int _0^1 x^{-p}dx = \frac{1}{1-p} = \frac{n(D-1)}{n(D-1)-2}$. Thus, $\sum_{t=1}^{T} \left(\frac{1}{M_t}\right)^{\frac{2}{D-1}} \le c_0 T^{1 - \frac{2}{n(D-1)}}$.

Thus, by combining Proposition \ref{prop1}, the proposition holds.
\end{proof}

\section{Additional Experiments}
\label{sec:addexperiment}

\paragraph{Compare across the parameter $d$.} We evaluated the algorithms across various $d(d=10,15,20)$ on the Levy function with $D=300, d_e=15$, as detailed in Figure \ref{fig:d}. In this benchmark, HesBO achieves the best performance when $d$ is precisely set to $d_e=15$, with LassoBO ranking second. This highlights the critical importance of accurately selecting $d$ and underscores the strength of our algorithm in automatically identifying the important dimensions.

\paragraph{Compare with EI2 and UCB2.} We also now implemented EI2 and UCB2 \cite{wuthrich2021regret} as the reviewer suggested and evaluated the performance of our proposed LassoBO and the baselines on the levy function with $D=300, d_e=15$, as depicted in the Figure \ref{fig:d}. The figure shows that LassoBO outperforms these two methods.

\begin{figure*}[h!]
\centering
\includegraphics[width=0.45\textwidth]{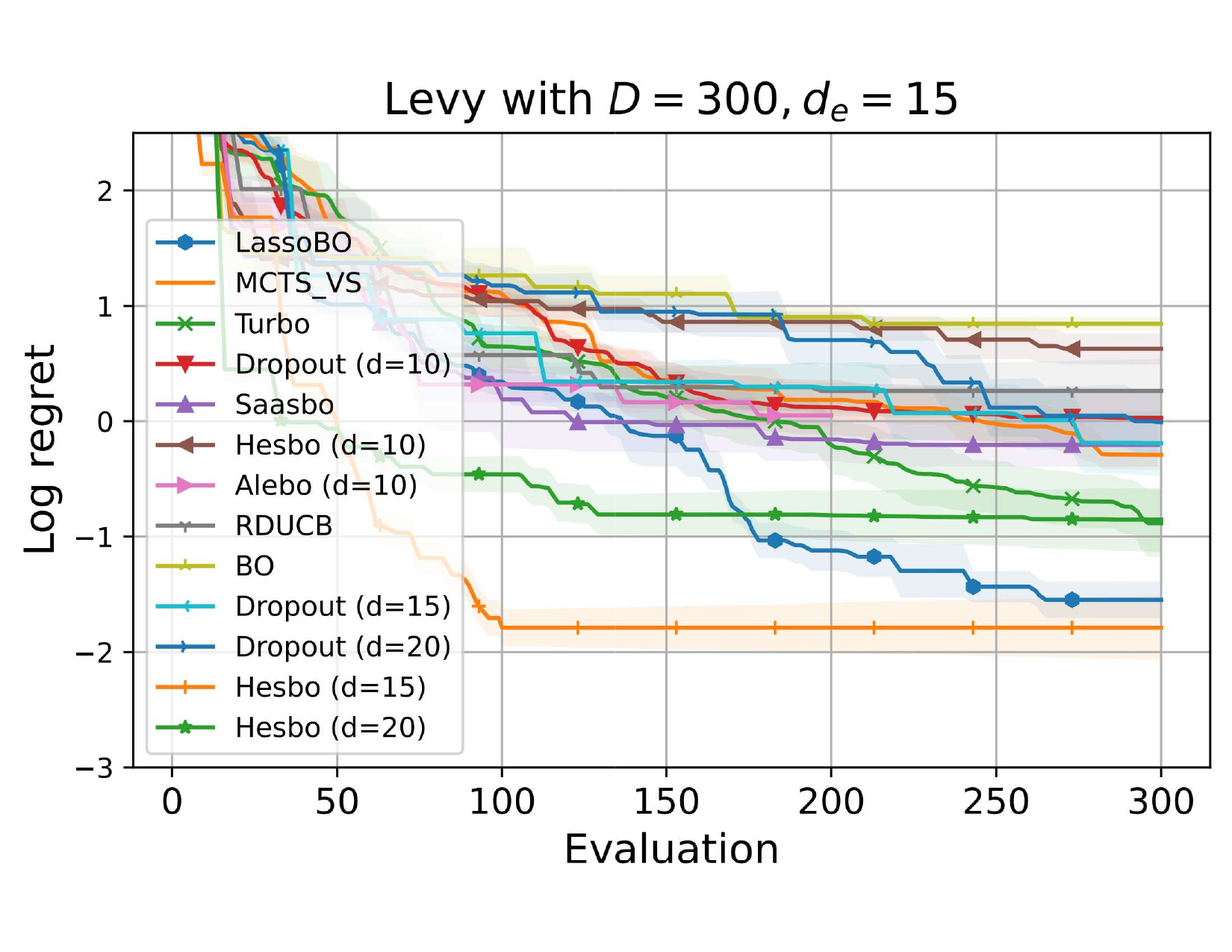}
\includegraphics[width=0.45\textwidth]{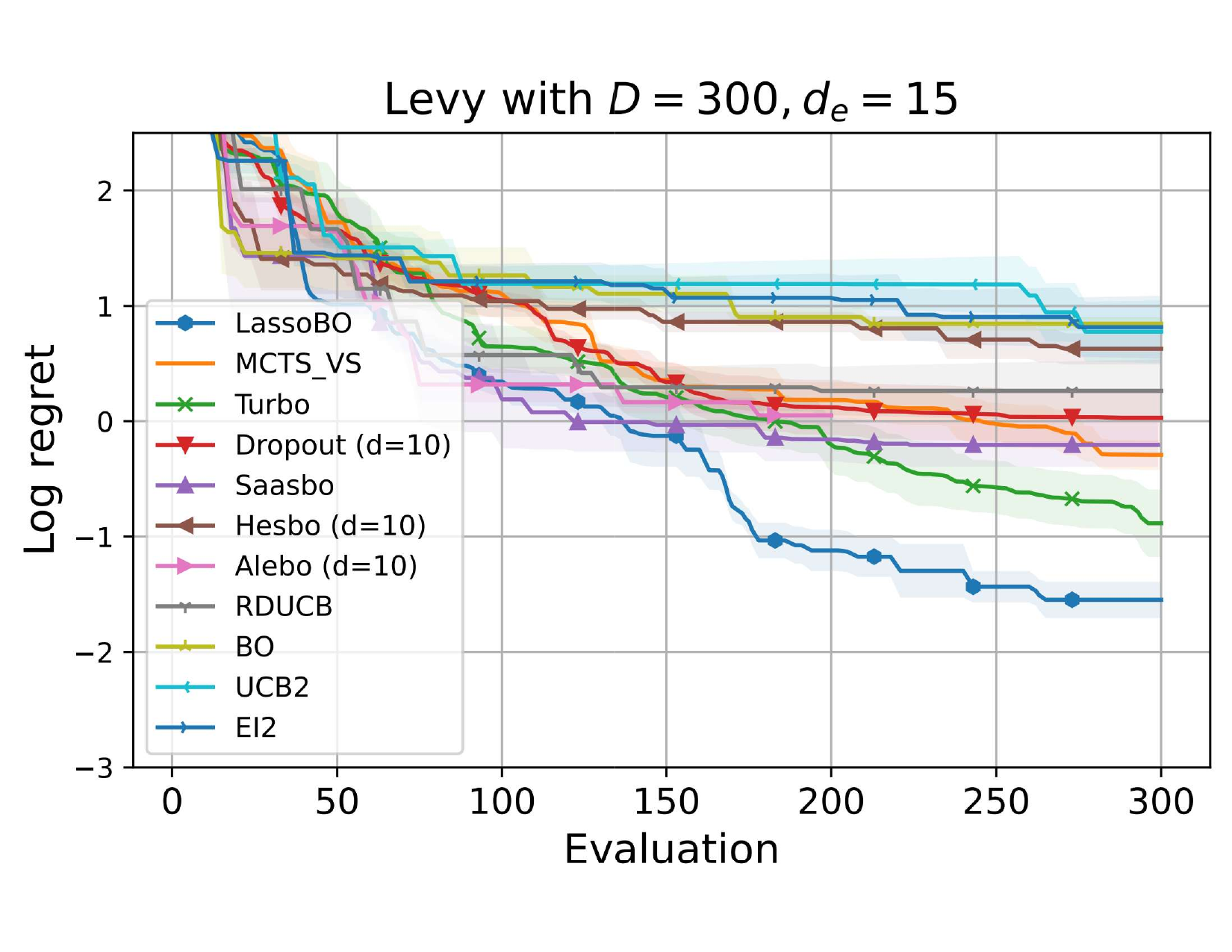}
\caption{\textbf{Left:} Comparison of different values of \( d \) (\( d = 10, 15, 20 \)) on the Levy function with \( D = 300 \) and \( d_e = 15 \). \textbf{Right:} Comparation with EI2 and UCB2 on the Levy function with \( D = 300 \) and \( d_e = 15 \).} \label{fig:d}
\end{figure*}

\paragraph{Compare the efficiency of variable selection with MCTS\_VS.} We compare their ability to select variables on the Sum Squares function ($D=300, d_e = 15$), i.e $f(\mathbf{x}) = \sum \alpha_i x_i^2$ as shown in Fig. \ref{fig:ss20300}. In the figure, the first subfigure indicates that the number of selected variables in LassoBO begins at a minimal level and progresses to the number of effective dimensions. In contrast, MCTS\_VS starts with a relatively high number of selected variables and progressively decreases to a level smaller than the effective number of dimensions. The 2nd subfigure depicts the variables selected in LassBO and the 3rd subfigure shows the variables selected in MCTS\_VS over iterations. The results show that LassoBO mainly selects effective dimensions while MCTS\_VS selects many ineffective dimensions, implying the superiority of variable selection in LassoBO over MCTS\_VS.
\begin{figure*}[h!]
{\includegraphics[width=0.32\textwidth]{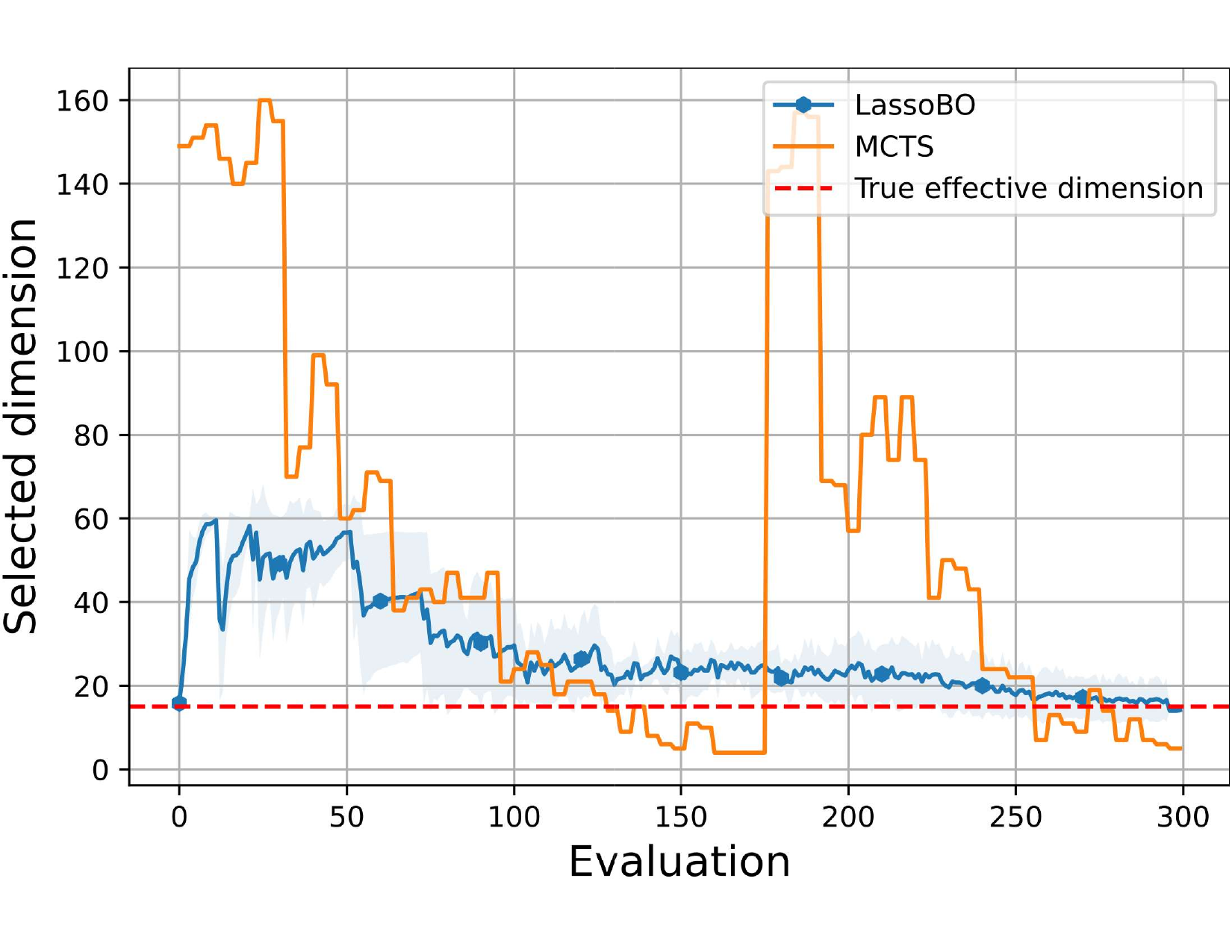}
    \label{fig:ss20300a}}
{\includegraphics[width=0.32\textwidth]{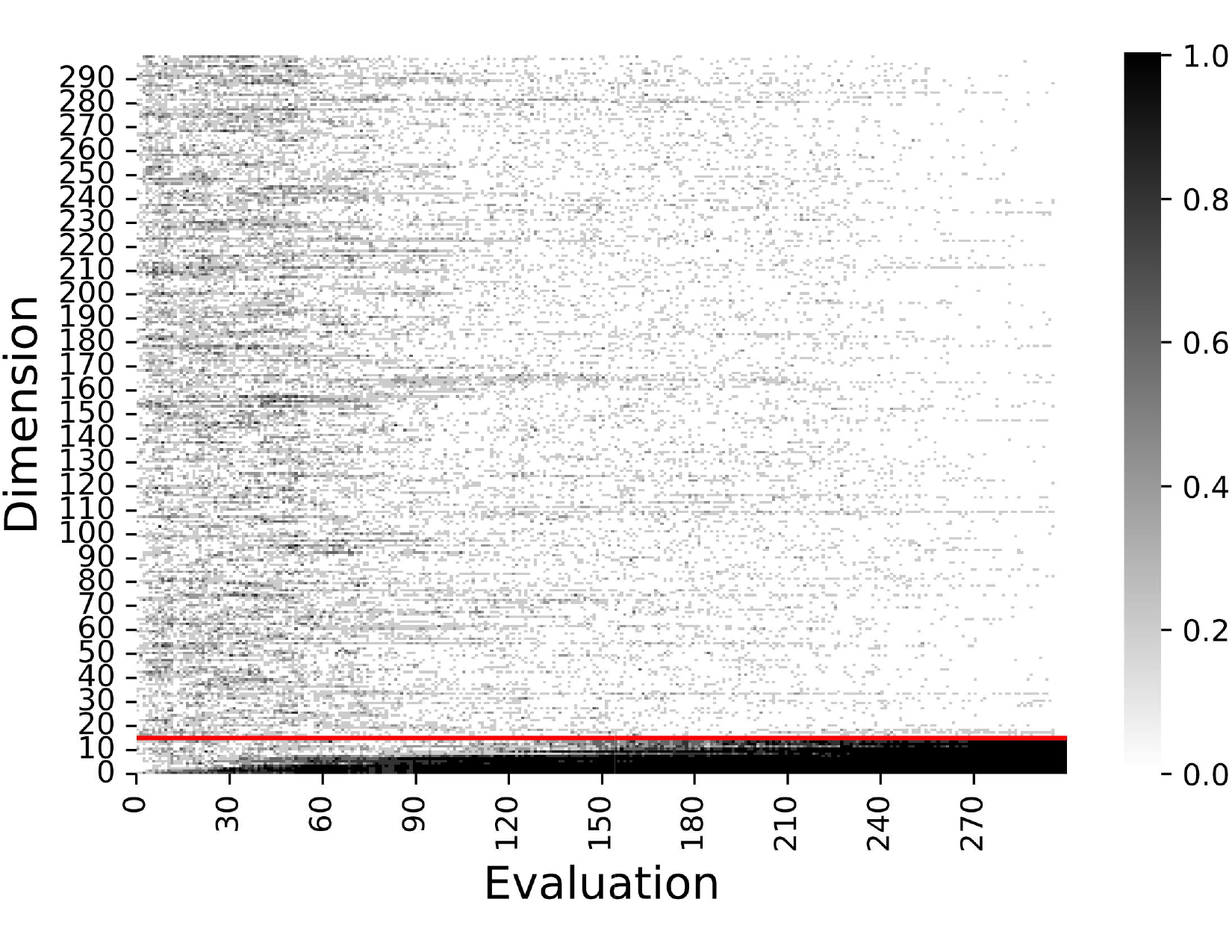}
    \label{fig:ss20300b}}
{\includegraphics[width=0.32\textwidth]{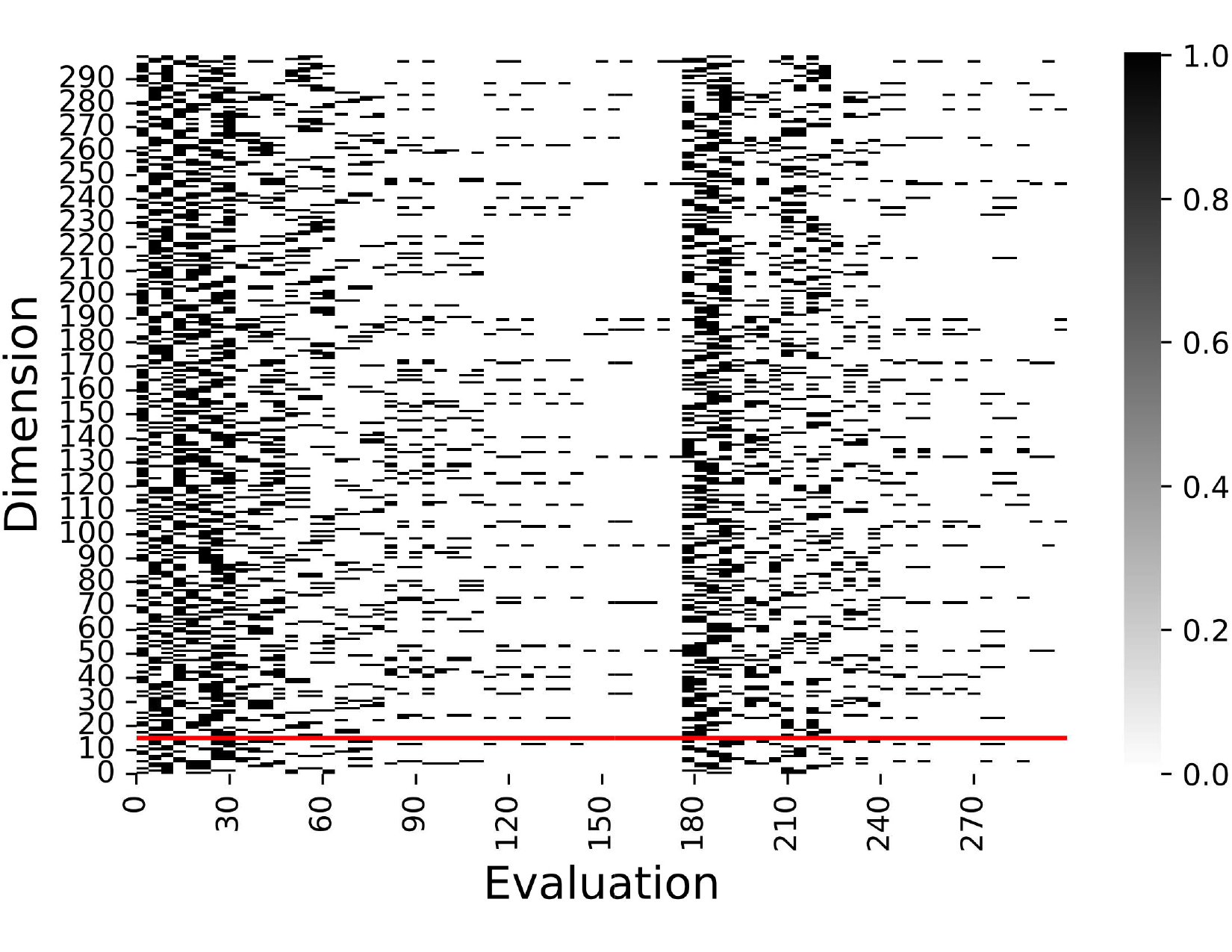}
    \label{fig:ss20300c}}
    \caption{We compare the variable selection between LassoBO and MCTS when performed on the Sum Squares function($d_e=15, D= 300$). \textbf{Left:} We compare the number of selected variables through function evaluations in LassoBO and MCTS\_VS. \textbf{Middle:} We depict the selected dimensions for each evaluation in LassoBO. \textbf{Right: }We depict the selected dimensions for each evaluation in MCTS\_VS.}
\label{fig:ss20300}
\end{figure*}
\paragraph{Experiments on extremely low and high-dimensional problems.}
We also evaluate the compared methods for extremely low and high dimensional problems by testing on Levy with $D=100,d_e = 15$ and $D=500,d_e=15$. As expected, the right subfigure of Fig. \ref{fig:levy15_100500} shows that LassoBO has a clear advantage over the rest methods on the extremely high-dimensional function $D=500$. The left subfigure shows that on $D=100$, LassoBO behaves the best and TuRBO is the runner-up, implying that LassoBO can also tackle low-dimensional
problems to some degree.

\begin{figure*}[h]
    \centering
    \includegraphics[width=0.325\textwidth]{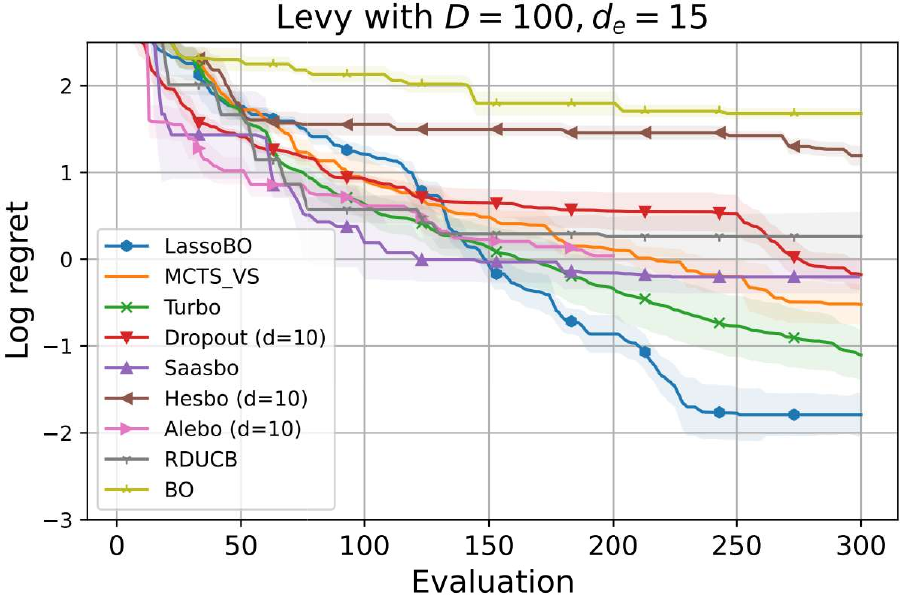}
    \includegraphics[width=0.325\textwidth]{levy15_300.pdf}
    \includegraphics[width=0.325\textwidth]{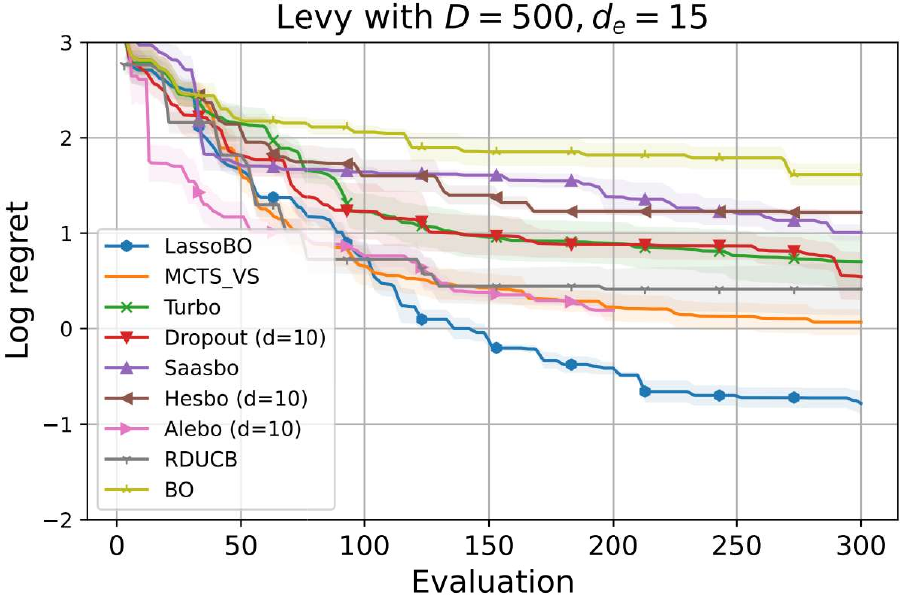}
    \caption{Performance comparison on extremely low and high dimensional problems.}
    \label{fig:levy15_100500}
\end{figure*}

\paragraph{Analysis on $M_t$:} We conducted additional ablation studies on $M_t$, as shown in the Figure \ref{fig:subspace}. In this experiment, we test LassoBO with different $M_t$ on the Levy function with $D=300, d_e=10$. As illustrated in the figure, when $M_t$ is small, the algorithm performs well in the early stages due to its high likelihood of exploiting regions already identified as promising. However, it struggles to achieve further improvement in later stages. Conversely, when $M_t$ is large, the algorithm explores a broader search space, resulting in poorer initial performance. Nevertheless, this broader exploration enhances estimation efficiency and leads to better convergence in the later stages.
\begin{figure*}[h!]
    \centering
    \includegraphics[width=0.45\textwidth]{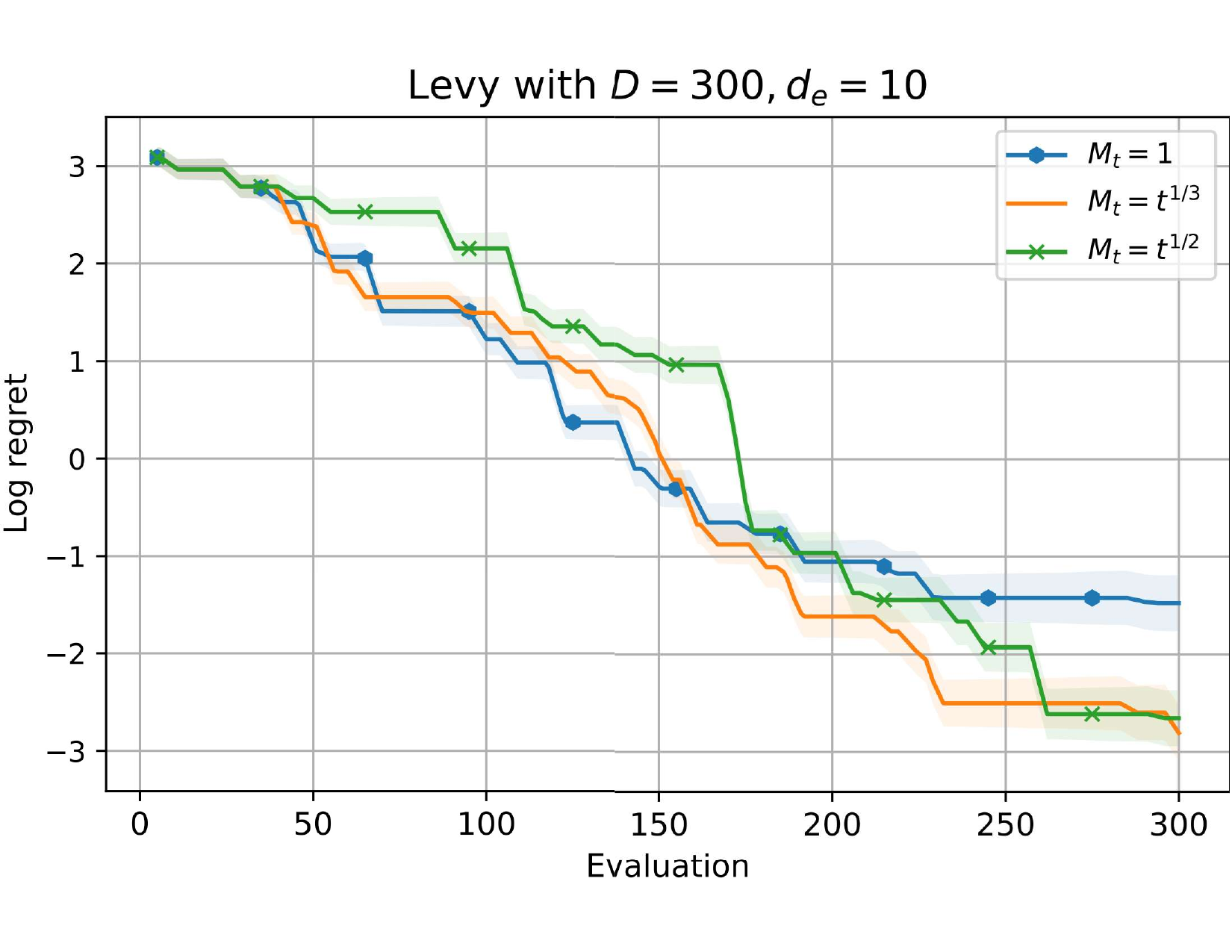}
    \caption{Comparison results using different subspaces $M_t$.}
    \label{fig:subspace}
\end{figure*}

\paragraph{Ablation study on the window size.}

\begin{figure*}
    \centering
    \includegraphics[width = 0.42 \textwidth ]{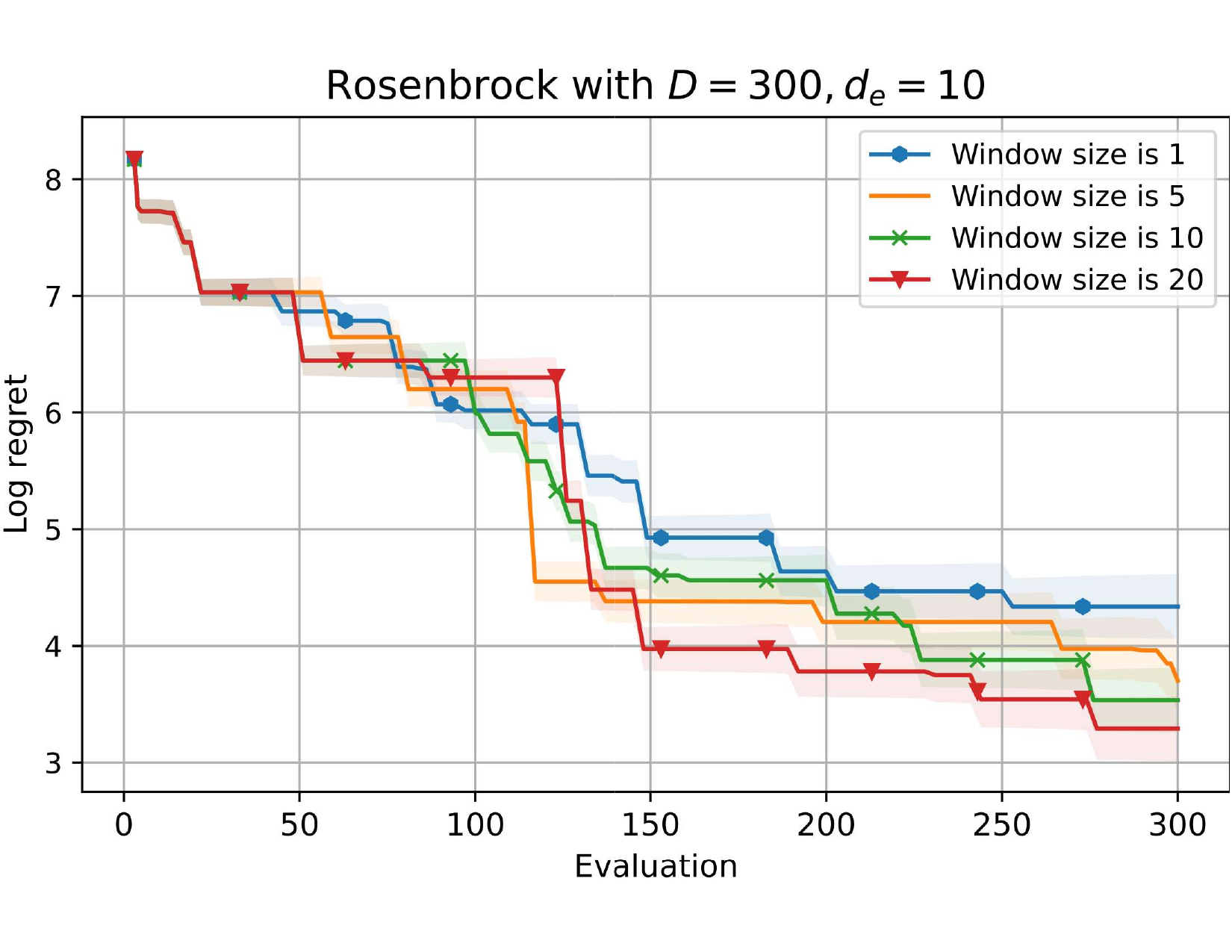}
    \includegraphics[width = 0.42 \textwidth ]{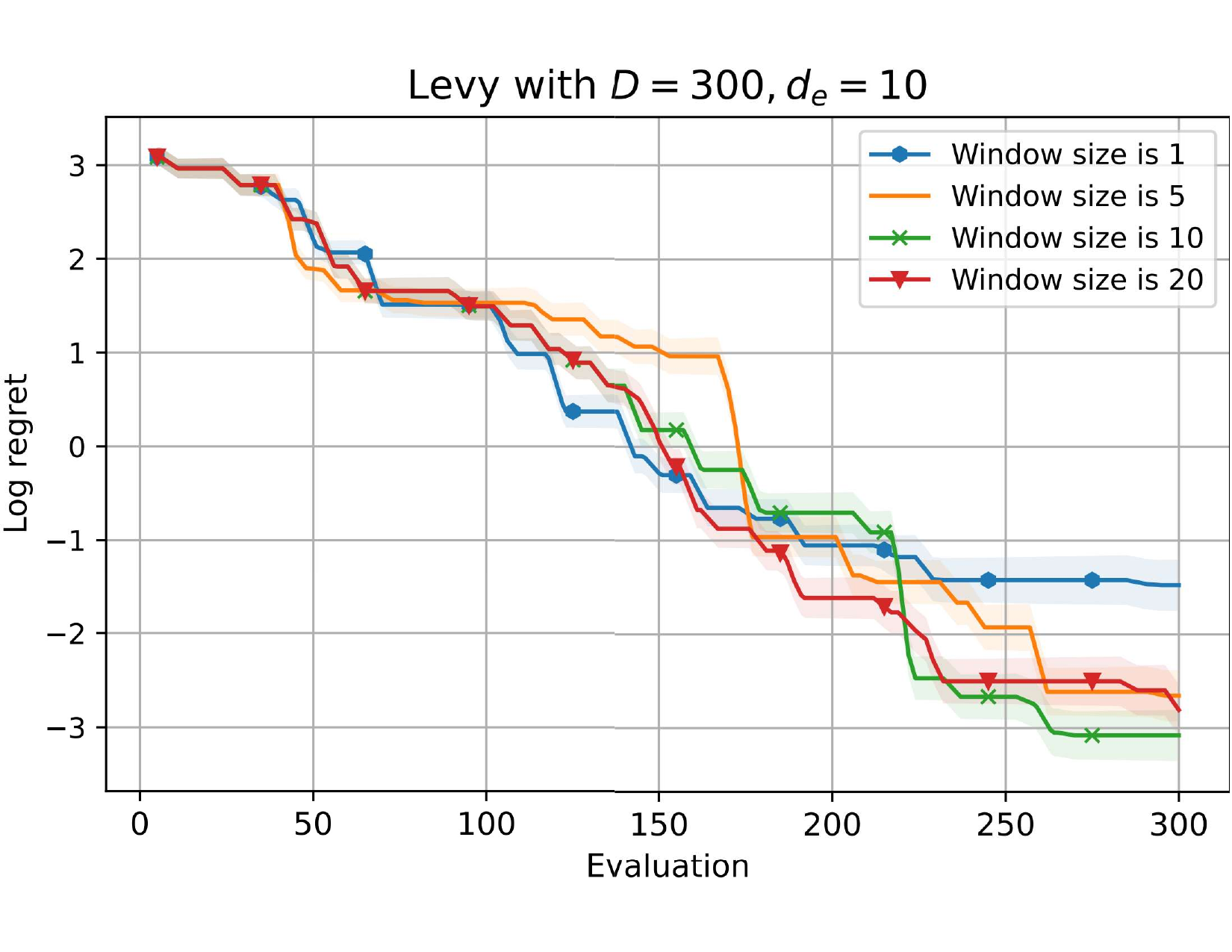}
    \caption{Comparison results using different window sizes for variable selection. The figure demonstrates that the effect of window size on the performance of LassoBO depends on different problems.}
    \label{fig:windowsize}
\end{figure*}
In this study, we consider using the median of window size $W$ of $\mathbf{\rho}$ (i.e. $ \mathbf{\rho}^(t), \mathbf{\rho}^{(t-1)} ,\dots,\mathbf{\rho}^{(t-W+1)}$) to select the variable in the $t$th loop instead of only using $\mathbf{\rho}_t$. To study the influence of the window size of LassoBO, we set $W \in \{1,5,10, 20\}.$ The results are shown in Fig. \ref{fig:windowsize}. We see that the effect of window size on the performance of LassoBO depends on different problems. However, with a window size of $10$, the performance of LassoBO is relatively good for both functions. Therefore,  we recommend using these window sizes.

\section{Additional details on the implementation and the empirical evaluation}
\label{sec:inforimplement}
\subsection{Baselines}
To maximize the regularized marginal log-likelihood in Eq. (\ref{eq:optimizing_Ut}), we sample 10 initial hyperparameters. The five best samples are further optimized using the ADAM optimizer for 100 steps. If not mentioned otherwise, we default to choosing \(\lambda = 10^{-3}\) and \(M_t = \sqrt[3]{t}\) for all experiments. To update the hyperparameters by maximizing $U_t$, we seek a gradient of the partial derivatives of $U_t$ w.r.t. the hyperparameters.

We benchmark against SAASBO, TuRBO, HeSBO, Alebo, and CMA-ES:
\begin{itemize}
    \item For SAASBO, we use the implementation from \cite{eriksson2021high} (\url{https://github.com/ martinjankowiak/SAASBO}, license: none, last accessed: 05/09/2022).
\item For TURBO, we use the implementation from \cite{eriksson2019scalable} (\url{https://github.com/uber-research/TuRBO}, license: Uber, last accessed: 05/09/2022).
\item For HeSBO and AlEBO, we use the implementation from \cite{letham2020re} (\url{https://github.com/ facebookresearch/alebo}, license: CC BY-NC 4.0, last accessed: 05/09/2022).
\item For RDUCB, we use the implementation from \cite{ziomek2023random} (\url{https://github.com/huawei-noah/HEBO/tree/master/RDUCB})
\end{itemize}
 The experiments are conducted on an AMD Ryzen 9 5900HX CPU @ 3.3GHz, and 16 GB of RAM, and we use a single thread.
 \subsection{Real world benchmark}

\paragraph{Rover trajectory optimization problem.} LassoBO is tested on the rover trajectory optimization problem presented in \cite{wang2018batched} where the task is to find an optimal trajectory through a 2d-environment. In the original problem, the trajectory is determined by fitting a B-spline to 30 waypoints and the goal is to optimize the locations of these waypoints.
\paragraph{MuJoCo.} Second, it is tested on the more difficult MuJoCo tasks \citep{todorov2012mujoco} in Reinforcement Learning. The goal is to find the parameters of a linear policy maximizing the accumulative reward. The objective f (i.e., the accumulative reward) is highly stochastic here, making it difficult to solve. We use the mean of ten independent evaluations to estimate $f$.
\paragraph{DNA.} The DNA benchmark \citep{nardi2022lassobench} is a biomedical classification task, taking binarized DNA quences as input. We use the negative loss function as the objective function for the evaluation.
\section{Limitation}
A limitation of our work is that the theoretical guarantees of LassoBO rely on a few assumptions. For example, the regularity assumption that assumes that the objective
function $f$ is a GP sample may not be true in some problems.  Moreover, LassoBO relies on the assumption of low effective dimensionality and might not work well if the percentage of valid variables is high.

%%%%%%%%%%%%%%%%%%%%%%%%%%%%%%%%%%%%%%%%%%%%%%%%%%%%%%%%%%%%%%%%%%%%%%%%%%%%%%%
%%%%%%%%%%%%%%%%%%%%%%%%%%%%%%%%%%%%%%%%%%%%%%%%%%%%%%%%%%%%%%%%%%%%%%%%%%%%%%%
% \bibliographystyle{apalike}
% \bibliography{ref}
\vfill

% \end{document}

\end{document}